\pdfoutput=1
\documentclass[11pt]{article}

\usepackage{amsmath}
\usepackage{amssymb}
\usepackage{amsfonts}
\usepackage{amsthm}
\usepackage{mathtools}
\usepackage[mathscr]{eucal}

\usepackage[margin=1in]{geometry}
\usepackage{graphicx}
\usepackage{url}
\usepackage{multirow}
\usepackage{booktabs}

\usepackage[numbers,sort&compress]{natbib}
\usepackage{hyperref}
\usepackage[nameinlink,noabbrev,capitalize]{cleveref}
\hypersetup{colorlinks=true, linkcolor=blue, citecolor=blue, urlcolor=blue}

\usepackage[shortlabels]{enumitem}
\usepackage{mathrsfs}
\usepackage{thm-restate}

\usepackage[dvipsnames,svgnames]{xcolor}

\usepackage{float,makecell}

\usepackage[ruled,vlined,algo2e]{algorithm2e}
\newtheorem{theorem}{Theorem}[section]
\newtheorem{informalthm}{Informal Theorem}[section]
\newtheorem{lemma}[theorem]{Lemma}
\newtheorem{proposition}[theorem]{Proposition}

\newtheorem{assumption}{Assumption}

\theoremstyle{definition}
\newtheorem{definition}[theorem]{Definition}

\theoremstyle{remark}

\LinesNumbered
\SetKwInput{KwParam}{parameters}
\SetKwInput{KwIn}{input}
\SetKwInput{KwOut}{output}
\DontPrintSemicolon
\SetAlgoVlined

\SetCommentSty{mycommfont}

\crefname{equation}{Eq.}{Eqs.}
\Crefname{equation}{Eq.}{Eqs.}
\crefname{theorem}{Theorem}{Theorems}
\Crefname{theorem}{Theorem}{Theorems}
\crefname{assumption}{Assumption}{Assumptions}
\Crefname{assumption}{Assumption}{Assumptions}
\crefname{claim}{Claim}{Claims}
\Crefname{claim}{Claim}{Claims}
\crefname{lemma}{Lemma}{Lemmas}
\Crefname{lemma}{Lemma}{Lemmas}
\crefname{algocf}{Algorithm}{Algorithms}
\Crefname{algocf}{Algorithm}{Algorithms}
\crefname{table}{Table}{Tables}
\Crefname{table}{Table}{Tables}
\crefname{section}{Section}{Sections}
\Crefname{section}{Section}{Sections}
\crefname{definition}{Definition}{Definitions}
\Crefname{definition}{Definition}{Definitions}
\crefname{appendix}{Appendix}{Appendices}
\Crefname{appendix}{Appendix}{Appendices}
\crefname{proposition}{Proposition}{Propositions}
\Crefname{proposition}{Proposition}{Propositions}
\crefname{informalthm}{Informal Theorem}{Informal Theorems}
\Crefname{informalthm}{Informal Theorem}{Informal Theorems}

\def\ddefloop#1{\ifx\ddefloop#1\else\ddef{#1}\expandafter\ddefloop\fi}
\def\ddef#1{\expandafter\def\csname bb#1\endcsname{\ensuremath{\mathbb{#1}}}}
\ddefloop ABCDEFGHIJKLMNOPQRSTUVWXYZ\ddefloop
\def\ddefloop#1{\ifx\ddefloop#1\else\ddef{#1}\expandafter\ddefloop\fi}
\def\ddef#1{\expandafter\def\csname b#1\endcsname{\ensuremath{\mathbf{#1}}}}
\ddefloop ABCDEFGHIJKLMNOPQRSTUVWXYZ\ddefloop
\def\ddef#1{\expandafter\def\csname sf#1\endcsname{\ensuremath{\mathsf{#1}}}}
\ddefloop ABCDEFGHIJKLMNOPQRSTUVWXYZ\ddefloop
\def\ddef#1{\expandafter\def\csname c#1\endcsname{\ensuremath{\mathcal{#1}}}}
\ddefloop ABCDEFGHIJKLMNOPQRSTUVWXYZ\ddefloop
\def\ddef#1{\expandafter\def\csname h#1\endcsname{\ensuremath{\widehat{#1}}}}
\ddefloop ABCDEFGHIJKLMNOPQRSTUVWXYZ\ddefloop
\def\ddef#1{\expandafter\def\csname hc#1\endcsname{\ensuremath{\widehat{\mathcal{#1}}}}}
\ddefloop ABCDEFGHIJKLMNOPQRSTUVWXYZ\ddefloop
\def\ddef#1{\expandafter\def\csname t#1\endcsname{\ensuremath{\widetilde{#1}}}}
\ddefloop ABCDEFGHIJKLMNOPQRSTUVWXYZ\ddefloop
\def\ddef#1{\expandafter\def\csname tc#1\endcsname{\ensuremath{\widetilde{\mathcal{#1}}}}}
\ddefloop ABCDEFGHIJKLMNOPQRSTUVWXYZ\ddefloop
\def\ddefloop#1{\ifx\ddefloop#1\else\ddef{#1}\expandafter\ddefloop\fi}
\def\ddef#1{\expandafter\def\csname scr#1\endcsname{\ensuremath{\mathscr{#1}}}}
\ddefloop ABCDEFGHIJKLMNOPQRSTUVWXYZ\ddefloop

\DeclarePairedDelimiterX{\dvr}[2]{(}{)}{%
  #1\;\delimsize\|\;#2%
}
\newcommand{\KL}{\mathsf{KL}\dvr*}
\newcommand{\TV}[2]{\mathsf{TV}\left( #1,\; #2 \right)}

\newcommand{\dist}{P}
\newcommand{\seq}{\mathbf}
\newcommand{\tree}{\mathbf}
\newcommand{\oracle}{\mathcal{O}}
\newcommand{\condoracle}{\oracle^{\mathsf{cond}}}
\newcommand{\ball}{\cB}
\newcommand{\off}[1]{\mathfrak{G}_{#1}}
\newcommand{\rade}{\mathfrak{R}}
\newcommand{\nvar}{v_t}

\usepackage{bbm}
\renewcommand{\P}{\mathbb P}
\DeclareMathOperator{\En}{\mathbb{E}}
\DeclareMathOperator{\Var}{\mathsf{Var}}
\newcommand{\ind}{\mathbbm{1}}
\newcommand{\reg}{\mathbf{Reg}}
\newcommand{\rel}{\mathbf{Rel}}
\newcommand{\eps}{\varepsilon}

\newcommand{\norm}[1]{\left\| #1 \right\|}
\DeclareMathOperator*{\argmin}{\arg\!\min}
\DeclareMathOperator{\supp}{supp}
\DeclareMathOperator{\poly}{poly}
\renewcommand{\^}[1]{^{({#1})}}
\newcommand{\Alg}{\mathsf{Alg}}
\newcommand{\tAlg}{\widetilde{\mathsf{Alg}}}
\newcommand{\Unif}{\mathsf{Unif}}
\newcommand{\Ber}{\mathsf{Ber}}

\newcommand{\glmat}{A}
\newcommand{\glvec}{h}
\newcommand{\glfield}{u}

\title{Learning with Simulators:\\ No Regret in a Computationally Bounded World}

\author{
  Sasha Voitovych\footnote{equal contribution}\\
  MIT\\
  \small{\texttt{voitovyc@mit.edu}}
  \and
  Abhishek Shetty\footnotemark[1]\\
  MIT\\
  \small{\texttt{shetty@mit.edu}}
  \and
  Noah Golowich\\
  Microsoft Research\\
  \small{\texttt{nzg@cs.utexas.edu}}
  \and
  Alexander Rakhlin\\
  MIT\\
  \small{\texttt{rakhlin@mit.edu}}
}

\date{}

\begin{document}

\maketitle

\begin{abstract}%
Understanding the minimal assumptions necessary for generalization is the fundamental question in learning theory. Unfortunately, most results rely heavily on independence (or some proxy thereof) of the data-generating process, while results for strongly dependent data are far more limited. Towards addressing this gap, we introduce the framework of \emph{simulatable processes}, where the learner has access to a simulator that approximates the distribution generating the data (which may be an arbitrarily complex and dependent
process). Surprisingly, given access to such a simulator, we show that we can recover
the same learning guarantees as in the classical setting with independent data, namely, error bounds that depend on the VC dimension. Further, we use this framework to study the power of conditional sampling and show strict statistical and computational advantages in this setting.
As a highlight of our framework, we exhibit a single algorithm that \emph{simultaneously} learns any given VC class under all processes samplable in bounded polynomial time, with regret controlled by the time-bounded Kolmogorov complexity of the process. This provides a significant conceptual broadening of the classical PAC model.
\end{abstract}

\section{Introduction}

\label{sec:intro}

A major focus of learning theory has been to understand what structural properties of data are sufficient for efficient learning.
One central assumption that has emerged is that of \emph{independence}, namely, that data points are generated independently from a common population distribution.
This has enabled a rich theory of statistical learning, with celebrated results characterizing learnability under independence, such as via the notion of \emph{VC dimension}~\citep{vapnik1971uniform, vapnik1974theory, blumer1989learnability}. Unfortunately, independence is often hard to verify, or even justify, in practice.
In fact, many natural data-generating processes exhibit rich dependencies across time: today's weather patterns affect tomorrow's; the distribution of a city's population this year depends on that of last year; the spread of disease depends on past infections.

These examples highlight that independence is not a justifiable assumption, but they also point towards a potential resolution to this quagmire.
A tacit assumption in the natural sciences is that natural processes are governed by simple, efficient mechanisms---a belief often articulated as the \emph{extended Church--Turing thesis}~\citep{sep-church-turing, deutsch1985quantum}.
In addition to this abstract belief, we often have explicit simulators for such processes, developed through domain expertise.
This is exemplified by the classical study of simulation-based inference in statistics and physics~\citep{marin2012approximate, sisson2018handbook,deistler2025simulation}, where complex data-generating processes are modeled via simulatable dynamics.
For instance, the evolution of physical systems is often modeled via statistical mechanics processes such as Glauber dynamics~\citep{glauber1963time} and PDEs; opinion dynamics in social networks can be modeled via stochastic processes~\citep{jackson2008social}.
More recently, the success of generative models---large language models~\citep{brown2020language,chowdhery2022palm} and diffusion models~\citep{ho2020denoising, ramesh2022hierarchical, saharia2022photorealistic}---demonstrates that we can build accurate simulators even for complex sequential data such as text and images.
Moreover, the aforementioned belief about efficiently computable nature has been a stated motivation behind recent pushes toward better AI systems~\citep{hassibis_2,hassibis, sutskevar, hutter2024introduction}.
This leads to the question:
\begin{center}
    \refstepcounter{equation}\label{q:main}%
    \makebox[\textwidth]{%
        \hphantom{(\theequation)}\hfill
        \parbox{0.9\textwidth}{
        \centering
            \emph{Can the belief that natural processes are governed by efficient, possibly unknown mechanisms be leveraged for learning?}
        }%
        \hfill(\theequation)%
    }%
\end{center}

To set the stage for learning without independence, we consider the most well-studied model for sequential learning, namely,
\emph{online learning}~\citep{littlestone1988learning, cesa2006prediction}: at each round $t$ the learner observes a data point $X_t$ and predicts its label, with the goal of competing with respect to a prespecified class $\cF$ of labeling functions.
Online learning permits learning even under adversarially generated data, providing the promise of learning with no independence.
However, this robustness incurs a significant cost: even simple hypothesis classes, such as thresholds, which are easy to learn under independence, are not learnable in the fully adversarial model.
The issue is further exacerbated by computational barriers indicating that classes that are efficiently learnable under independence may not be efficiently learnable in the online setting~\citep{hazan2016computational}.
This cements the folk belief that some notion of independence in the data is necessary for efficient learning to be possible.
In particular, this belief suggests that learning should be strictly harder for strongly dependent processes.

In this paper, we introduce a new framework for learning under complex dependencies, which we term \emph{simulatable processes}, and show that this framework suffices to recover learnability at essentially the same statistical and computational complexity as learning under independence.
Our work can be viewed as part of a long line of work on beyond-worst-case analysis of online learning, where the adversarial nature of the data is relaxed.
A number of proposed relaxations allow learning all Vapnik--Chervonenkis (VC) function classes (which are precisely the classes learnable under independence), often efficiently~\citep{rakhlin2011online,block2022smoothed,haghtalab2024smoothed,montasser2025beyond}.
Our framework differs in that it allows learning under \emph{general} dependent processes, provided the learner has access to an (approximate) simulator for the covariate-generating process. The main takeaway of our work is as follows:
\begin{center}
    \parbox{0.9\textwidth}{
    \centering
        \emph{Under simulatable processes, learning has the same statistical and computational complexity as learning under independence.}
    }
\end{center}
A highlight of our framework is that it provides a lens to take advantage of the belief that nature is computationally bounded.
Under this lens, using a connection to time-bounded Kolmogorov complexity, we show that all VC classes are learnable under \emph{unknown} polynomial-time samplable processes, leading to a significant conceptual broadening of the classical PAC model under the widely believed
extended Church--Turing thesis \citep{sep-church-turing}.
This can be seen as a formalization of the statement that, while efficiently computable processes themselves might not be learnable,\footnote{This is indeed the case under widely held beliefs in computational complexity, e.g., existence of one-way functions.} we can learn to predict under these processes.

\paragraph{Notation.} We let $\log$ be the binary logarithm. For an integer $n$, we let $[n]$ denote the set $\{1,\ldots,n\}$. We use boldface notation for sequences. For a sequence $\seq s = (s_1,\ldots,s_n)$, we use $\seq s_{a:b}$ to denote the subsequence $(s_a, s_{a+1},\ldots,s_b)$; for $I \subset [n]$, we use $\seq s_I$ to denote the subsequence $(s_i)_{i \in I}$. For a distribution $P \in \Delta(\cX^n)$ and subsets $I, J \subset [n]$ of indices, we use $P_I$ to denote the marginal of $P$ on the subsequence $\seq X_I$; for any $\seq x_{J}$, we use $P_I(\cdot \mid \seq x_J)$ to denote the conditional distribution of the subsequence $\seq X_I$ given $\seq X_J = \seq x_J$.
We adopt the standard non-asymptotic big-$O$ notation or $f \lesssim g$ to denote that $f \le C g$ for some universal constant $C>0$; similarly, we write $f \ge \Omega(g)$ or $f \gtrsim g$.

\section{Our Framework and Contributions}

We begin by formally introducing our learning setting.
We assume the learner and nature play a $T$-round game. Before the start of the game, nature chooses a covariate-generating distribution $\dist^\star \in \Delta(\cX^T)$ and samples $(X_{1},\ldots,X_T) \sim \dist^\star$. Let us re-emphasize that we make no assumptions on the distribution $\dist^\star$, which can be arbitrarily complex and dependent across time. Then, at each round $t \le T$, nature reveals $X_t$ to the learner. The learner makes a prediction $\hat Y_t \in \{0,1\}$ for the label of $X_t$, after which it learns the true label $Y_t$, and suffers loss $1$ if $\hat Y_t \neq Y_t$.
The objective of the learner is to minimize \emph{expected regret} with respect to a comparator class of labeling functions $\cF$:
\[
\En \reg(\cF,T) = \En \left[\sum_{t=1}^T \ind\{\hat Y_t \neq Y_t\} - \inf_{f\in\cF} \sum_{t=1}^T \ind\{f(X_t) \neq Y_t\} \right].
\]
The expectation above is taken with respect to the generation from $\dist^\star$ and the internal randomness of the learner. Throughout, we operate under the assumption that the labels are chosen \emph{obliviously}.
\begin{assumption}
\label{assumption:f-star}
    There exists a function $f^\star$ such that, for every $t\in [T]$, $Y_t = f^\star(X_t)$ and $f^\star$ is fixed before the draw of $(X_1,\ldots,X_T)$. We refer to $f^\star$ as the ground truth labeling function.
\end{assumption}
The assumption formalizes the belief that, while the data may have complex structure and dependencies, the ground truth is a fixed property of the world: it does not change in response to the learner's predictions or the realization of past covariates.
While some of our results hold under milder conditions, we defer further discussion of this assumption to~\cref{sec:adaptive-labels}.
With this in mind, the question becomes: under what distributions $\dist^\star$ is learning possible? If we let $\dist^\star$ range over all possible distributions over sequences,
the problem reduces to that of classical online learning, and thus, suffers from the aforementioned issues. Intuitively, the goal is to formalize that $\dist^\star$ is not worst-case, while still permitting complex time dependencies.

\paragraph{A motivating problem: learning against computationally bounded nature.}
Consider the most ambitious instantiation of the motivating question in~\eqref{q:main}:
can we hope to learn against \emph{any} unknown nature, provided only that nature is computationally bounded?
Concretely, assume the covariates $X_1,\ldots,X_T$ are produced by an unknown distribution $\dist^\star$ samplable in time $p(T)$ for some polynomial $p$.
Can we learn efficiently in this setting, \emph{simultaneously} against every such $\dist^\star$ and every labeling function $f^\star$? Our first headline contribution answers in the affirmative, as long as $f^\star \in \cF$ and $\cF$ has VC dimension in $O(1)$.
\begin{informalthm}[\cref{thm:samplable-nature}]
\label{informalthm:samplable-nature}
Let $\cF$ be a class of VC dimension $d$. Then, there exists a single algorithm that, given a realizable ERM oracle for $\cF$, runs in time $\poly(T^d, p(T))$ and, for every $p$-time samplable $\dist^\star$ and every $f^\star \in \cF$, achieves $o(T)$ regret, where the rate depends only on $\dist^\star$ and $d$.
\end{informalthm}
The above can be viewed as a learning-theoretic analog of Levin search~\cite{levin1973universal}. Indeed, just as Levin search pays only a bounded overhead relative to the fastest program for any task,
our algorithm pays only a bounded overhead relative to the sampling complexity of $\dist^\star$ for any $p$-time process, without any knowledge of $\dist^\star$ in advance.

The rate in~\cref{informalthm:samplable-nature} depends on the \emph{time-bounded Kolmogorov complexity of $\dist^\star$}. The proof leverages results from the Kolmogorov complexity literature~\citep{antunes2006computational,lu2022optimal,hirahara2023learning} and constructs a \emph{single} distribution $\dist$, which we refer to as the \emph{simulator},
that can be sampled from efficiently, and, in a certain sense, \emph{dominates} every other distribution samplable in $p$-time. To show that access to such simulators suffices to recover learnability of VC classes,
we introduce our framework of learning under simulatable processes.

\paragraph{Learning under simulatable processes.}  A key defining feature of our \emph{simulatable process} framework is that we assume access to a sampling oracle for a distribution $\dist$ that we will tacitly assume is close to $\dist^\star$.
\begin{assumption}[Simulator access]
\label{assumption:p-star}
    The learner has access to a simulatable process $\dist$ that approximates $\dist^\star$. That is, the learner has access to an oracle $\oracle_{\dist}$ that, when queried, returns an independent sample (a trajectory $(X_1,\ldots,X_T)$) from $\dist$. We refer to $\dist$ simply as the simulator.
\end{assumption}

Other than having simulation access to the distribution of covariates, it is important to note that we are making no further restriction on the distributions $\dist$ and $\dist^\star$. In particular, this setting generalizes the standard computational learning theory setting of learning under a known, product distribution such as the uniform distribution or Gaussian distribution.
Further, we note here that we only assume sampling access and not access to the density. In addition to being a natural assumption in many practical, physical settings, we will later see an explicit example of a distribution where we can efficiently sample from but not evaluate the density (\cref{sec:kolmogorov}).

We separate \emph{realizable} ($f^\star \in \cF$) and \emph{agnostic} ($ f^{\star}$ unconstrained) learning settings. Our first contribution is then showing that all VC classes are learnable under simulatable processes, even given an approximate simulator, provided $f^\star$ belongs to the class $\cF$.
\begin{informalthm}[\cref{thm:stat-dist-trans}]
\label{informalthm:stat-dist-trans}
    For any class of VC dimension $d$, given access to $N$ samples from an approximate simulator $\dist$, and any $f^\star \in \cF$, there is an algorithm that achieves expected regret upper bound
    \[
        \En \reg(\cF,T) \lesssim d \log(T)  + \frac{dT}{N} + d \cdot \KL{\dist^\star}{\dist} +\KL{\dist^\star}{\dist} \left( \frac{T}{\log\left(1+ N/T\right)}\right).
    \]
\end{informalthm}
The first term above corresponds to the optimal error of learning under independence, which is achieved by empirical risk minimization (ERM). This already hints at a key characteristic of our setting. In our setting, ERM can fail spectacularly as it does not exploit the simulator, and thus, standard lower bounds for online learning with ERM apply. The remaining three terms above account for the price of sampling (finite $N$) and the price of approximation ($\KL{\dist^\star}{\dist}> 0$) in our setting.

Another satisfactory aspect is the dependence of the quality of the simulator on the Kullback--Leibler (KL) divergence, which is, perhaps, the most natural metric on distributions over sequences due to the tensorization property and connections to density estimation. It is also typically smaller than other measures considered in the literature, such as coverage (i.e., uniform density ratio bounds). Further, KL divergence has practical significance in the modern context of generative models, where LLMs are typically trained with the log loss for perhaps similar reasons.

Two aspects of the result above might seem unsatisfactory at first glance: the number of samples needed (with approximate simulator access) and the realizability requirement.
Unfortunately, both these aspects are inherent to this setting.
In \cref{thm:realizable-lb}, we show that the tradeoff between the number of samples and the quality of approximation in~\cref{informalthm:stat-dist-trans} is essentially tight.
Further, even when the simulator is exact, in \cref{thm:agnostic-impossibility}, we show that agnostic learning is statistically impossible even for simple classes with VC dimension $1$.
This indicates that \cref{informalthm:stat-dist-trans} is the strongest possible result for algorithms in the setting discussed so far. In particular,~\cref{informalthm:samplable-nature} follows from~\cref{informalthm:stat-dist-trans} by instantiating $\dist$ as a universal polytime samplable distribution (\cref{thm:universal-dist}).

\paragraph{A second motivating problem: learning without mixing.} While~\cref{informalthm:samplable-nature} is a very general learnability statement, this generality comes at a cost: the guarantee can be pessimistic when the process has rich structure we could hope to exploit. In particular,~\cref{informalthm:samplable-nature} requires realizability of $f^\star$, and the running time has exponential dependence on the VC dimension. To explore whether more structure can alleviate these concerns, we consider an example of learning under Markov chains. As concrete examples, in this paper we consider two families of processes motivated by evolutions of physical systems: linear dynamical systems (LDS), and Glauber dynamics. Here, we focus on the setting of LDS due to its simplicity, and defer the discussion of Glauber dynamics to~\cref{sec:glauber}. In particular, suppose $X_{1},\ldots,X_T \in \bbR^n$ evolves according to a, possibly unknown, LDS, which can be described as follows:
\begin{align}
    \label{eq:lds}
X_{t+1} = A^\star X_t + \eta_t, \quad \text{where} \quad \eta_t \overset{\text{i.i.d.}}{\sim} \cN(0,\sigma^2 I).
\end{align}
Learning under Markov chains is a well-studied problem~\citep{aldous1995markovian,bartlett1994exploiting,gamarnik1999extension,bshouty2005learning,arpe2008agnostically,kanade2015mcmc,dagan2019learning,cornacchia2026benefits,bresler2017learning}.
Linear dynamical systems is a canonical such model, and it exhibits non-independence if the chain is non-mixing (which happens when the spectral radius of $A^\star$ is close to $1$), disallowing approaches that extract independence structure from mixing~\citep{yu1994rates,mohri2010stability,kuznetsov2017generalization,gamarnik1999extension}.
Similarly, naive applications of oracle-efficient methods from smoothed analysis~\citep{haghtalab2024smoothed} yield bounds with exponential dependence on the dimension. The reason is that a Gaussian density with unknown mean can only be dominated by a fixed density up to a constant that scales exponentially in $n$.
However, the process is parametric with only at most $O(n^2)$ free parameters, so one could hope for complexity that scales with $\poly(n)$.
\begin{informalthm}[\cref{thm:lds}]
\label{informalthm:lds}
    Let $B>0$ be arbitrary, and let $\rho(\cdot)$ denote the spectral radius of a matrix. Then, for any class $\cF$ of VC dimension $d$, there is a $\poly(n,T)$-time algorithm that, given an agnostic ERM oracle for $\cF$, achieves sublinear expected regret in the agnostic setting for all $\dist^\star$ evolving according to~\cref{eq:lds} with $A^\star \in [-B,B]^{n \times n}$ with $\rho(A^\star) \le 1$.
\end{informalthm}
We highlight that the above result has \emph{no dependence on the mixing time} of the chain and avoids exponential dependence on dimension $n$.
This is a qualitatively new guarantee for learning from strongly-dependent Markov chain data, not attainable via prior mixing- or smoothness-based approaches.
The proof of the above result leverages the fact that all distributions in the LDS family can be simultaneously and efficiently simulated in a \emph{conditional sense}.
This motivates the second part of our framework, which works with conditional simulators.

\paragraph{Efficient algorithms with conditional simulators.} Broadly inspired both by autoregressive generative models and simulations for physical processes, we consider the setting when the learner has access to a \emph{conditional} sampling oracle for $\dist$.

\begin{assumption}[Conditional Simulator access]
\label{assumption:p-star-cond}
    The learner has access to a conditionally simulatable process $\dist$ that approximates $\dist^\star$. That is, the learner has access to an oracle $\condoracle_{\dist}$ that, when queried on a prefix $\seq x_{1:t} \in \cX^t$, returns an independent sample from $\dist(\cdot\mid \seq x_{1:t})$.
\end{assumption}

Perhaps surprisingly, this additional power allows us to circumvent the statistical and computational barriers discussed above.
In particular, both realizable and agnostic learning become statistically tractable, and admit oracle-efficient algorithms.

\begin{informalthm}[\cref{thm:cond-oracle-eff}]
\label{thm:informal-efficient}
    Assume conditional sampling access to simulator $\dist$. Then, given access to an agnostic ERM oracle for the class $\cF$ of VC dimension $d$, there exists an algorithm running in $\poly(T)$ time, with regret in the agnostic setting
    \[
    \En \reg(\cF,T) \lesssim \sqrt{d T (1 +\KL{\dist^\star}{\dist})}.
    \]
\end{informalthm}
The above theorem matches PAC-style learnability guarantees, both statistically (regret controlled by the VC dimension) and computationally (ERM oracle suffices). This result is based on relaxations with random playout~\citep{rakhlin2012relax}. Specifically, we extend the results of~\cite{rakhlin2015hierarchies} to the setting where only an approximate conditional simulator access is available. To obtain~\cref{informalthm:lds} (and the analogous result for Glauber dynamics, \cref{thm:glauber}) from the above, we show that there exists a distribution $\dist$ that simulates any LDS or Glauber dynamic up to bounded KL and such that conditional sampling from $\dist$ can be implemented efficiently.

\begin{table}
\centering
\setlength{\tabcolsep}{5pt}
\renewcommand{\arraystretch}{1.0}
\begin{tabular}{@{}llccc@{}}
\toprule
    Setting & & \makecell{Unconditional \\ samples} & \makecell{Conditional \\ samples} & Refs. \\
\midrule
\multirow{2}{*}{\makecell[l]{Realizable, exact}}
& Regret  & $\tilde O(d)$ & $O(\sqrt{dT})$ & \makecell{Thms.~\ref{thm:stat-dist-trans}, \ref{thm:cond-oracle-eff}} \\
& Runtime & $\exp(\Omega(d))$ & $\poly(T,d)$ & \makecell{Thms.~\ref{thm:realizable-erm-oracle-lb}, \ref{thm:cond-oracle-eff}} \\
\midrule
\multirow{2}{*}{\makecell[l]{Agnostic, exact}}
& Regret  & $\Omega(T)$ & $O(\sqrt{dT})$ & \makecell{Thms.~\ref{thm:agnostic-impossibility}, \ref{thm:cond-oracle-eff}} \\
& Runtime & --- & $\poly(T,d)$ & Thm.~\ref{thm:cond-oracle-eff} \\
\midrule
\multirow{2}{*}{\makecell[l]{Realizable, approximate}}
& Regret  & $\tilde O(d + d\cdot \mathsf{KL})$ & $O(\sqrt{dT (1 + \mathsf{KL})})$ & \makecell{Thms.~\ref{thm:stat-dist-trans}, \ref{thm:cond-oracle-eff}} \\
& Runtime & $\exp(\Omega(T))$ & $\poly(T,d)$ & \makecell{Thms.~\ref{thm:realizable-lb}, \ref{thm:cond-oracle-eff}} \\
\midrule
\multirow{2}{*}{\makecell[l]{Agnostic, approximate}}
& Regret  & $\Omega(T)$ & $O(\sqrt{dT (1 + \mathsf{KL})})$ & \makecell{Thms.~\ref{thm:agnostic-impossibility}, \ref{thm:cond-oracle-eff}} \\
& Runtime & --- & $\poly(T,d)$ & Thm.~\ref{thm:cond-oracle-eff} \\
\bottomrule
\end{tabular}
\vspace{5pt}
    \caption{
    Summary of regret and runtime bounds for algorithms for classes of VC dimension $d$, with access to unconditional or conditional samples from the exact or approximate simulator $\dist$, and with $\mathsf{KL}:= \KL{\dist^\star}{\dist}$ denoting the Kullback--Leibler divergence between the true process and the simulator.
    Any regret or runtime upper bound listed under \emph{Unconditional samples} also applies under \emph{Conditional samples}, since the former oracle is strictly weaker.
    }
\label{tab:rates}
\end{table}

\paragraph{Trade-off: conditional vs.\ unconditional simulators.}
Taken together, the results above suggest a trade-off.
Unconditional samples are often easy to obtain, but lead to algorithms that are not efficient in general. Conditional samples are harder to obtain, but give rise to oracle-efficient $\poly(T,d)$ algorithms.
This trade-off is already visible in~\cref{informalthm:samplable-nature}: the universal simulator $\dist$ is hard to sample from conditionally, assuming the existence of one-way functions (see~\cref{sec:kolmogorov}). By assuming extra structure on nature's generative process---for instance, Glauber dynamics or LDS---we can circumvent this obstacle and obtain efficient algorithms using conditional simulators. This trade-off is further exemplified in~\cref{tab:rates}.

\subsection{Related Work}

Our paper is in a long line of work on sequential prediction and beyond worst case analysis thereof \cite{shetty2024learning}.
Further, our particular framework has connections to notions such as learning with predictions, simulators in reinforcement learning, simulation-based inference and computational perspectives on learning.
We defer a thorough discussion to \cref{sec:related_work}.

\section{Learning with Unconditional Simulators}

\label{sec:simulation}

\subsection{VC Classes Are Learnable under Simulatable Processes}

Our results depend on the VC dimension of the class $\cF$, which is a canonical complexity measure in learning theory~\citep{vapnik1998statistical}. In particular, bounded VC is equivalent to learnability under independent processes.  Thus, VC is precisely the ``correct'' complexity when learning under independence. We formally define it in~\cref{def:vc}. Throughout, we reserve $d$ to denote the VC dimension of the class at hand. The main result of this section is that bounded VC dimension suffices to recover logarithmic regret in our setting, even with an approximate simulator. A priori, this result is surprising, since the process $\dist^\star$ can be arbitrarily complex and dependent.

\begin{restatable}{theorem}{StatDistTrans}
\label{thm:stat-dist-trans}
    Let $\cF$ be a VC dimension $d$ class, and let $\dist^\star$ be an arbitrary distribution in $\Delta(\cX^T)$. Let $N \in\bbN$ be arbitrary. Then, there exists an algorithm that makes at most $N$ oracle calls to $\oracle_{\dist}$, at most $N^{O(d)}$ calls to a realizable ERM oracle w.r.t. $\cF$, and achieves regret \[
        \En \reg \lesssim d \log T + \frac{dT}{N} + d \cdot \KL{\dist^\star}{\dist} + \frac{T}{\log\left(1 + N/T\right)} \cdot \KL{\dist^\star}{\dist}.
    \]
\end{restatable}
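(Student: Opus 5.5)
The plan is to reduce learning under $\dist^\star$ to learning over a finite, data-dependent cover of $\cF$ built from simulated trajectories. We first prove the bound for the exact simulator $\dist=\dist^\star$, then transfer it to $\dist^\star\neq\dist$ by a change-of-measure argument. Without loss of generality we use only $N' = \min(N, T^2)$ of the samples. This replaces $\log N$ by $O(\log T)$ and costs only an additive $dT/N'\le dT/N + d/T$.

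\textbf{Step 1 (cover from simulated trajectories).} Draw $\seq X^{(1)},\ldots,\seq X^{(N')}\sim\dist$ via $\oracle_{\dist}$ and let $S$ be the multiset of all $N'T$ covariates appearing in them. Partition $\cF$ by its projection onto $S$. By Sauer--Shelah there are at most $(eN'T/d)^d$ classes, so $\log(\#\text{classes})\lesssim d\log T$. Pick one representative per class using the realizable ERM oracle: for each labeling of $S$, query ERM and keep the result if it is consistent. With a suitable enumeration this uses $N^{O(d)}$ calls.

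\textbf{Step 2 (online learning over the cover).} Run the Halving algorithm over the representatives, each weighted by its class. Let $g^\star$ be the representative of $f^\star$'s class. Halving makes at most $\log(\#\text{classes})+Z$ mistakes, where
\[
Z=\sum_{t=1}^T \ind\{g^\star(X_t)\neq f^\star(X_t)\}.
\]
Each disagreement of $g^\star$ with the truth costs at most one extra mistake, by a standard mistake-bound-with-noise argument, or alternatively by running weighted majority with a constant learning rate, giving $O(\log|\cdot| + Z)$. The task is therefore to bound $\En Z$.

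\textbf{Step 3 (exchangeability; exact case).} When $\seq X\sim\dist$, the real trajectory and the $N'$ simulated ones are $N'+1$ exchangeable trajectories. We choose representatives by a permutation-symmetric rule, namely a one-inclusion-graph style orientation at the level of trajectories. A leave-one-trajectory-out argument on the $(N'+1)T$ points, using the Haussler--Littlestone--Warmuth out-degree bound $\le d$ applied to blocks, then gives $\En_{\dist}Z\lesssim dT/N'$. Together with Step 2 this yields $d\log T + dT/N$ for the exact simulator. I expect the main obstacle to be here. Making the representative choice symmetric while still implementable with $N^{O(d)}$ ERM calls is delicate, and if the trajectory-level orientation does not give exactly $dT/N'$, I would fall back to an $\epsilon$-approximation bound on the $N'$ trajectories, accepting an extra logarithmic factor.

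\textbf{Step 4 (change of measure).} Condition on the simulated samples and $f^\star$, so that $Z$ is a function of $\seq X$ with values in $[0,T]$. Donsker--Varadhan gives, for every $\lambda>0$,
\[
\En_{\dist^\star}Z\le \frac{\KL{\dist^\star}{\dist}+\log\En_{\dist}e^{\lambda Z}}{\lambda}.
\]
Convexity on $[0,T]$ bounds the moment generating function: $\En_{\dist} e^{\lambda Z}\le 1+\frac{\En_{\dist} Z}{T}(e^{\lambda T}-1)$, with $\En_{\dist}Z\lesssim dT/N'$. We consider two choices of $\lambda$. Taking $\lambda\asymp 1/d$ gives the terms $d\,\KL{\dist^\star}{\dist}$ plus a multiple of $dT/N$, valid when $T\lesssim d$. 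Taking $\lambda T=\log(1+N'/T)$ gives $\log\En_{\dist} e^{\lambda Z}\lesssim d/T$, hence
\[
\En_{\dist^\star}Z\lesssim \frac{T}{\log(1+N/T)}\,\KL{\dist^\star}{\dist} + \frac{d}{\log(1+N/T)}.
\]
Taking the better of the two choices and adding the Step 2 bound yields the claimed regret.
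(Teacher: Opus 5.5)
Your Step 4 (Donsker--Varadhan plus the convexity bound $\En_\dist e^{\lambda Z}\le 1+\frac{\En_\dist Z}{T}(e^{\lambda T}-1)$) is a valid substitute for the paper's truncation argument. The proof nonetheless has a genuine gap at the very first step: the cap $N'=\min(N,T^2)$ is not free.

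\textbf{The cap weakens the KL term.} Your change of measure yields $\frac{T}{\log(1+N'/T)}\KL{\dist^\star}{\dist}$, with $N'$ rather than $N$ in the denominator. So for $N\gg\poly(T)$ your bound saturates at $T\,\KL{\dist^\star}{\dist}/\log T$. The theorem keeps improving, and for $N\ge T2^T$ it reads $d\log T+d\,\KL{\dist^\star}{\dist}$. For example, with $d=1$, $\KL{\dist^\star}{\dist}=1$ and $N=T2^T$, the statement promises $O(\log T)$ and your argument gives $\Theta(T/\log T)$.

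\textbf{No single scale fixes this.} A cap is harmless only at $N\approx T2^T$, where $\frac{T}{\log(1+N/T)}\le 1$. But there your uniform prior over one cover charges $d\log(N'T)\approx dT$. Optimizing a single $N'$ in your bound, i.e.\ balancing $d\log N'$ against $T\,\KL{\dist^\star}{\dist}/\log(N'/T)$, gives only $\sqrt{dT\,\KL{\dist^\star}{\dist}}$. Your $\lambda\asymp 1/d$ branch does not supply the $d\,\KL{\dist^\star}{\dist}$ term either, since it works only for $T\lesssim d$, where the statement is trivial.

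\textbf{The missing idea is a multi-scale cover with a non-uniform prior.}
\begin{enumerate}[(i)]
\item Build covers $\cG_\ell$ from $N_\ell$ samples, with $N_1\asymp 1$, $N_{\ell+1}\lesssim N_\ell^2$ and $N_L\asymp N$.
\item Run exponential weights with weight $1/(L|\cG_\ell|)$ on each element of level $\ell$.
\item On the event $\frac{\mathrm d\dist^\star}{\mathrm d\dist}\in(N_{\ell-1}/T,N_\ell/T]$, compete with level $\ell$. The complexity cost is $d\log N_\ell\lesssim d\log^+(T\frac{\mathrm d\dist^\star}{\mathrm d\dist})+d$. Its $\dist^\star$-expectation is $\lesssim d\log T+d\,\KL{\dist^\star}{\dist}$, because $\En_{\dist^\star}\log^+\frac{\mathrm d\dist^\star}{\mathrm d\dist}\le\KL{\dist^\star}{\dist}+1$.
\item On the same event the approximation error changes measure at ratio at most $N_\ell/T$. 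It costs $T\cdot\frac{N_\ell}{T}\cdot\frac{d}{N_\ell}=d$ per level, so $dL\lesssim d\log T$ in total.
\item Only the overflow event $\frac{\mathrm d\dist^\star}{\mathrm d\dist}>N_L/T$ needs an argument like your Step 4. That event produces $\frac{T}{\log(1+N/T)}\KL{\dist^\star}{\dist}$ with the full $N$.
\end{enumerate}

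\textbf{Step 3 is also problematic, as you suspected.}
\begin{enumerate}[(i)]
\item A trajectory-level orientation needs all of $\seq X_{1:T}$ to define the prediction at $X_t$, so it is not a legitimate online expert.
\item The $\epsilon$-approximation fallback loses a logarithmic factor that the stated $dT/N$ term does not allow. An arbitrary consistent ERM output genuinely suffers $d\log N/N$ in the worst case.
\item Enumerating labelings of all $N'T$ covariates costs $(N'T)^{O(d)}$ oracle calls, which exceeds $N^{O(d)}$ when $N\ll T$.
\end{enumerate}
All three issues disappear with the paper's construction. Draw i.i.d.\ points from $\bar\dist=\frac1T\sum_t\dist_t$, using one fresh trajectory and a uniform time index per point. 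For each realizable labeling, output the hypothesis of an optimal PAC learner such as Majority-of-Three, which needs $O(1)$ ERM calls. Since $\En_{\seq X\sim\dist}\frac1T\sum_t|f(X_t)-g(X_t)|=\P_{X\sim\bar\dist}[f(X)\neq g(X)]$, the PAC guarantee plus Jensen gives $\En_\dist Z\lesssim dT/N$ with no exchangeability argument.
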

The bound quantitatively captures the tradeoff between the quality of the simulator, the number of samples, and the regret.
The last three terms can be seen as aspects of the \emph{price of information} in our setting.
To appreciate this, we focus on three regimes.
\begin{enumerate}[(i)]
    \item In the case where we have an exact simulator (i.e., $\dist = \dist^\star$), the
regret is dominated by the first two terms.~\cref{thm:stat-dist-trans} then says that with just $N = T$ samples from the simulator, we can achieve near optimal regret of $\tilde O(d)$, matching the independent covariate setting.
\item In the case where we only have an approximate simulator (i.e. $ \KL{\dist^\star}{\dist} > 0$), with enough samples  ($N \ge T 2^T$), we can still achieve regret
$\En \reg(\cF,T) \lesssim d\log T + d \KL{\dist^\star}{\dist},$
which has mild dependence on KL. In particular, KL $\le o(T)$ suffices for sublinear regret.
\item In the case where we have a poor simulator and only a few samples, the regret is dominated by $\KL{\dist^\star}{\dist}T /\log(N)$ term.
\end{enumerate}
Somewhat surprisingly, the upper bound in~\cref{thm:stat-dist-trans} is essentially tight, and we can show that all the error terms are indeed necessary (see~\cref{thm:realizable-lb}).

\subsection{Proof Sketch for~\texorpdfstring{\cref{thm:stat-dist-trans}}{Theorem \getrefnumber{thm:stat-dist-trans}}}
\begin{algorithm2e}[hbt!]
\DontPrintSemicolon
\caption{$\mathrm{MultiCover}$}
\label{algo:multi-cover}
\KwIn{Sampling oracle $\oracle_{\dist}$ for $\dist$; realizable ERM oracle w.r.t.\ $\cF$}
\KwParam{Learning rate $\eta>0$; number of levels $L$; number of samples per level $\{N_\ell\}_{\ell \in[L]}$}
\tcc*[h]{Phase 1: Build multi-scale covers}

\For{$\ell \gets 1$ \KwTo $L$}{
  Sample $\seq Z^{\ell} \sim \bar{\dist}^{\otimes N_\ell}$ with $\bar \dist := \frac{1}{T} \sum_{t=1}^T \dist_t$ \label{line:Z-samples-def}\tcp*{$N_\ell$ queries to $\oracle_\dist$}
  Let $\cG_\ell := \cG(\seq Z^\ell)$ be an improper cover of $\cF$ on $\seq Z^{\ell}$ as in~\cref{eq:cg-ell} \tcp*{$N_\ell^{O(d)}$ ERM queries}
}

$\cG \gets \bigcup_{\ell=1}^L \cG_\ell$

\tcc*[h]{Phase 2: Exponential weights over $\cG$}

For $\ell \in [L]$, $g\in\cG_{\ell}$, set $w_1(g) \gets \frac{1}{L |\cG_{\ell}|}$ \label{line:init-weights-def} \tcp*{Initialize the weights}
\For{$t \gets 1$ \KwTo $T$}{
Sample $g_t$ according to the probability vector $\{w_t(g)\}_{g\in\cG}$\;
  Observe $X_t$ and play $\hat Y_t \gets g_t(X_t)$\;
  Observe $Y_t$ and suffer loss $\ind\left\{Y_t \neq \hat Y_t\right\}$\;
  Update the weights for each $g$: $ w_{t+1}(g) \gets \frac{w_t(g) \exp(- \eta \ind\{g(X_t) \neq Y_t\})}{\sum_{g'\in\cG} w_t(g')\exp(- \eta \ind\{g'(X_t) \neq Y_t\})}$
}

\end{algorithm2e}

The regret bound in~\cref{thm:stat-dist-trans} is attained by~\cref{algo:multi-cover}. The first phase of the algorithm is based on the following idea: by drawing samples from the uniform mixture of the marginals of $\dist$, $\bar \dist := \frac{1}{T}\sum_{t=1}^T \dist_t$,
we can identify a small subset of functions that represents the behavior of any $f^\star$ on the draw $\seq X \sim \dist$ well. 
Concretely, for each $\ell \in [L]$, our algorithm draws $N_\ell$ i.i.d. samples \[\seq Z^\ell := (Z_1^\ell,\ldots,Z_{N_\ell}^\ell) \sim \bar \dist^{\otimes N_\ell}.\] Then, it constructs an \emph{improper cover} $\cG_\ell$ by iterating through all labelings of $\seq Z^\ell$ realized by $\cF$ (a set we denote by $\cF\vert_{\seq Z^\ell}$) and running an optimal PAC learner on each labeling. Specifically, we use the Majority-of-Three learner of~\cite{aden2024majority}. Then, using the population error guarantee for the PAC learner, we can show that any fixed function $f \in \cF$ is well-represented with respect to the distribution $\bar \dist$, and hence, on average under a draw $\seq X \sim \dist$.
 Formally, let $\Alg_{\mathsf{M3}}(S) \in \{0,1\}^\cX$ be the output of the Majority-of-Three algorithm on a labeled sample $S$; additionally, for functions $f,g$, a probability measure $Q \in \Delta(\cX)$ and a sequence $\seq X \in \cX^T$ of covariates, let \[\norm{f - g}_Q := \En_{X\sim Q} |f(X)-g(X)|, \qquad \norm{f-g}_{\seq X} := \frac{1}{T} \sum_{t=1}^T |f(X_t) - g(X_t)|\] be the average disagreement between $f$ and $g$ under $Q$ and under the empirical measure of $\seq X$, respectively. Then, the preceding discussion is formalized in the following statement.
\begin{restatable}{lemma}{CoverAnalysis}
\label{lemma:cover-analysis}
    Let $\seq Z  =\left(Z_1,\ldots,Z_N\right) \sim \bar \dist^{\otimes N}$, and $\cG(\seq Z)$ be an improper cover of $\cF$ constructed as follows
    \begin{equation}
    \label{eq:cg-ell}
    \cG(\seq Z) := \left\{\Alg_{\mathsf{M3}}\left((Z_i,y_i)_{i \in [N]}\right) \colon \left(y_i\right)_{i \in [N]} \in \left.\cF\right\vert_{\seq Z}\right\}.
    \end{equation}
        Then, for any $f\in\cF$, we have
      \[ \En_{\seq Z} \En_{\seq X \sim \dist}  \min_{g \in \cG(\seq Z)} \norm{f-g}_{\seq X} \lesssim\frac{d}{N}.\]
    \end{restatable}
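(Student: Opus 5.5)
Fix $f \in \cF$. Since $f$ is fixed before $\seq Z$ is drawn, the labeling $(f(Z_i))_{i\in[N]}$ belongs to $\cF\vert_{\seq Z}$. Hence the specific function $\hat g := \Alg_{\mathsf{M3}}\big((Z_i, f(Z_i))_{i\in[N]}\big)$ lies in $\cG(\seq Z)$. I will bound the minimum over $\cG(\seq Z)$ by the value at this single element:
\[
\min_{g\in\cG(\seq Z)} \norm{f-g}_{\seq X} \le \norm{f-\hat g}_{\seq X}.
\]
The point is that $\hat g$ is simply the output of a PAC learner trained on $N$ i.i.d.\ samples from $\bar\dist$, labeled by the realizable target $f$. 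So the problem reduces to relating this $\hat g$ to the error measured on $\seq X$.

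\textbf{Step 1: swap the expectations.} The draw $\seq X \sim \dist$ is independent of $\seq Z$, and $\hat g$ depends only on $\seq Z$. By linearity and Fubini,
\[
\En_{\seq X\sim\dist}\norm{f-\hat g}_{\seq X} = \frac1T\sum_{t=1}^T \En_{X_t\sim \dist_t}|f(X_t)-\hat g(X_t)| = \norm{f-\hat g}_{\bar\dist},
\]
where $\bar\dist = \frac1T\sum_t \dist_t$. This identity is exactly why the samples are drawn from the uniform mixture of the marginals. The dependence structure of $\dist$ across time is irrelevant here, because $\hat g$ is fixed once $\seq Z$ is given and only the one-dimensional marginals enter.

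\textbf{Step 2: invoke the PAC guarantee.} It remains to show $\En_{\seq Z}\norm{f-\hat g}_{\bar\dist} \lesssim d/N$. This is the realizable expected-error guarantee of Majority-of-Three from \cite{aden2024majority}. That guarantee states that for any distribution $Q$ on $\cX$ and any target $f$ in a VC-$d$ class, training on $N$ i.i.d.\ samples from $Q$ labeled by $f$ gives expected error $\En\, \P_{X\sim Q}[\hat g(X)\neq f(X)] \lesssim d/N$, with no logarithmic factor. Applying it with $Q=\bar\dist$ gives the claim.

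\textbf{Main obstacle.} The main obstacle is ensuring the cited bound is the \emph{expectation} bound of order $d/N$ without a $\log N$ factor. A plain ERM argument would only give $d\log N/N$. If the cited result were stated only in high-probability form, I would integrate the tail, using the bound $d/N + \log(1/\delta)/N$ and the fact that errors are at most $1$, to recover the expectation bound.
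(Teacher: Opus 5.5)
Your proof is correct and matches the paper's argument: both pick the specific element $\Alg_{\mathsf{M3}}\big((Z_i,f(Z_i))_{i\in[N]}\big)\in\cG(\seq Z)$, use independence of $\seq X$ and $\seq Z$ to turn the average disagreement along $\seq X\sim\dist$ into disagreement under $\bar\dist$, and invoke the $O(d/N)$ in-expectation guarantee of Majority-of-Three. The only cosmetic difference is that you bound the minimum by this element pointwise before averaging over $\seq X$, whereas the paper first passes to $\min_{g}\En_{\seq X}$ via Jensen's inequality ($\En_{\seq X}\min_g \le \min_g \En_{\seq X}$); the two routes are equivalent.
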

We note that cover $\cG(\seq Z)$ can be constructed in time $N^{O(d)}$ with access to a realizable ERM oracle. Indeed, from the celebrated Sauer--Shelah lemma, the number of distinct realizable labelings in $\cF\vert_{\seq Z}$ is on the order $N^{O(d)}$ and can be enumerated in time $N^{O(d)}$~\citep{blumer1989learnability}. Also, the Majority-of-Three learner~\citep{aden2024majority} requires $O(1)$ realizable ERM queries per labeling. 

Having obtained such a cover for $\cF$, a natural next step is to run exponential weights over the cover. The bound above indeed guarantees a small approximation error along a draw $\seq X\sim\dist$. However, we need to control the approximation error under $\dist^\star$ instead.
A natural approach is to apply a change of measure argument to~\cref{lemma:cover-analysis} directly by controlling the tail behavior of ${\mathrm{d}\dist^\star/\mathrm{d}\dist}$ (as done in similar contexts, e.g.,~\cite{block2023sample}). In particular, we can find $M$ such that $\mathrm{d}\dist^\star/\mathrm{d}\dist\le M$ w.h.p., and apply change of measure to~\cref{lemma:cover-analysis} as follows:
\begin{align*}
\En_{\seq Z} \En_{\seq X\sim\dist^\star} \min_{g \in \cG(\seq Z)} \norm{f-g}_{\seq X}
&\le \En_{\seq Z}  \En_{\seq X\sim\dist^\star} \min_{g \in \cG(\seq Z)} \norm{f-g}_{\seq X} \ind\left\{\frac{\mathrm{d}\dist^\star}{\mathrm{d}\dist} \le M\right\}\\
&\qquad + \En_{\seq Z} \En_{\seq X\sim\dist^\star} \min_{g \in \cG(\seq Z)} \norm{f-g}_{\seq X} \ind\left\{\frac{\mathrm{d}\dist^\star}{\mathrm{d}\dist} > M\right\} \\
&\lesssim  M \cdot \En_{\seq Z}\En_{\seq X\sim\dist} \min_{g \in \cG(\seq Z)}  \norm{f-g}_{\seq X} +  \P_{\seq X\sim\dist^\star} \left[\frac{\mathrm{d}\dist^\star}{\mathrm{d}\dist} > M\right] \\
&\lesssim M \cdot \frac{d}{N} + \frac{\KL{\dist^\star}{\dist}}{\log (M)},
\end{align*}
where the last step uses~\cref{lemma:kl-markov}. Balancing $M$ in the above gives an upper bound that decays \emph{only logarithmically} in $N$. Intuitively, this is due to the fact that KL only bounds a \emph{log-moment} of the density ratio.
Such scaling would be too weak for fast rates, and would only lead to a $\sqrt{T}$ upper bound on regret.

This is exactly where the multi-scale nature of the algorithm becomes essential. Intuitively, the natural strategy fails because it pays for the tail of the log density ratio between $\dist^\star$ and $\dist$ rather than its expectation. The multi-scale structure in~\cref{algo:multi-cover} allows the regret to depend on the loss of the ``correct'' expert with respect to the \emph{realized} density ratio. In particular, by choosing $\left\{N_\ell\right\}_{\ell\in\left[L\right]}$ appropriately, we can ensure that, with high probability, there exists a random index $\ell^\star$ such that $\log\left(N_{\ell^\star}/T\right)\propto\log\left(\mathrm{d}\dist^\star/\mathrm{d}\dist\right)$.
The expected approximation error under $\dist^\star$ of the best expert in level $\ell^\star$ can be upper bounded \emph{uniformly} as follows:
\begin{align*}
\En_{\seq X\sim\dist^\star}\min_{g\in\cG_{\ell^\star}}\norm{f-g}_{\seq X}
&\lesssim\sum_{\ell\in\left[L\right]}\En_{\seq X\sim\dist^\star}\min_{g\in\cG_\ell}\norm{f-g}_{\seq X}\ind\left\{\frac{\mathrm{d}\dist^\star}{\mathrm{d}\dist}\le\frac{N_{\ell}}{T}\right\}\\
&\lesssim\sum_{\ell\in\left[L\right]}\frac{N_\ell}{T}\cdot\En_{\seq X\sim\dist}\min_{g\in\cG_\ell}\norm{f-g}_{\seq X}
\lesssim\frac{dL}{T},
\end{align*}
where the last step follows from~\cref{lemma:cover-analysis}. With this control of the approximation term, we apply standard results for exponential weights with non-uniform priors (\hyperref[line:init-weights-def]{Line~\ref{line:init-weights-def}}), and get:
\begin{align*}
\reg(\cF,T)
&\lesssim \log\left(L\left|\cG_{\ell^\star}\right|\right)+T\cdot\min_{g\in\cG_{\ell^\star}}\norm{f-g}_{\seq X}\\
&\lesssim d\log\left(N_{\ell^\star}\right)+T\cdot\min_{g\in\cG_{\ell^\star}}\norm{f-g}_{\seq X}\\
&\approx d\log\left(T\frac{\mathrm{d}\dist^\star}{\mathrm{d}\dist}\right)+T\cdot\min_{g\in\cG_{\ell^\star}}\norm{f-g}_{\seq X},
\end{align*}
where the first transition uses Sauer--Shelah lemma.
The first term becomes a KL term after taking expectations, and the second term is bounded in expectation by the derivation above.

\subsection{Limitations of Unconditional Samples}

\cref{algo:multi-cover} has several limitations. First, we make a realizability assumption to prove a regret bound in~\cref{thm:stat-dist-trans}. Second,~\cref{algo:multi-cover} is inefficient in the approximate simulator regime, as it requires drawing $\exp(\Omega(T))$ samples. We show that, surprisingly, both these limitations are inherent to learning with only unconditional samples from $\dist$.
\begin{restatable}{theorem}{RealizableLB}
    \label{thm:realizable-lb}
    For any $d \in \bbN$, $N \ge 2$, and $D \in \bbN \cup \{0\}$, there exists a class $\cF$ of VC dimension $d$, such that, for any learning algorithm that draws at most $N$ unconditional samples from $\oracle_\dist$, there exists a choice of distributions $(\dist, \dist^\star)$ with $\KL{\dist^\star}{\dist} \le D$ and $f^\star \in \cF$ such that
    \[
    \En \reg(\cF, T) \gtrsim \left(d + \frac{d T}{N} + D + D  \frac{T}{\log N} \right)\land T
    \]
\end{restatable}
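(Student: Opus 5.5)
Since a maximum of lower bounds is within a factor of four of their sum, it suffices to prove four separate lower bounds, each capped at $T$: (a) $\Omega(d\wedge T)$; (b) $\Omega(dT/N \wedge T)$; (c) $\Omega(D\wedge T)$; and (d) $\Omega(D T/\log N \wedge T)$. The class can be a disjoint union of a few components, with a $d$-dimensional piece (e.g.\ indicators of at most $d$ points among a large set) handling (a), (b) and a threshold-like or singleton piece handling (c), (d). Only the adversarial choice of $(\dist,\dist^\star,f^\star)$ changes between the four cases. Throughout, I will use Yao's principle: put a prior on $f^\star$ and on the hidden parameters of $\dist^\star$, and lower bound the expected number of mistakes of any algorithm.

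For (a), take $\dist=\dist^\star$ deterministic, presenting $d$ distinct points, each repeated $\lceil T/d\rceil$ times. Let $f^\star$ be uniform on the labelings of these points, which a VC-$d$ class shatters. The first appearance of each point is mispredicted with probability $1/2$, giving $\Omega(d\wedge T)$. For (b), take $\dist=\dist^\star$ i.i.d.\ uniform over $m$ fresh points, with $f^\star$ labeling a uniformly random subset of $d$ of them as $1$. With $m \approx NT/d$, each $1$-point appears about $T/m\cdot\ldots$ times in the trajectory. The $N$ simulator samples show covariates but carry no labels, so they are useless for the labels. The real sequence reveals the $d$ special points only when they are hit, and a hit on a not-yet-seen special point is a mistake with constant probability. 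Choosing $m\approx Td$ (and then using the $N$ samples' uselessness) gives $\Omega(d)$ only, so (b) needs more care. I will instead have $\dist$ equal a mixture over $N\cdot T/d$ disjoint ``worlds''; a world is chosen once and fixed, and unconditional samples reveal the world's covariates only if some sample lands in it. Nature's world is unseen with constant probability when there are $\gtrsim N$ worlds, and within an unseen world the learner faces $\Omega(d)$ unpredictable labels per fresh point. By scaling the world structure so that $\dist^\star=\dist$ yet the learner's $N$ samples cover only a $N/(\text{\#worlds})$ fraction, I aim to extract $dT/N$.

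For (c) and (d), the KL budget lets $\dist^\star$ deviate from $\dist$. The core construction is the following. $\dist$ is a mixture $\dist=\sum_k w_k Q_k$ over $K$ ``hidden'' processes, each of which is a sequence of fresh points whose labels under $f^\star$ are revealed only by running on that sequence. $\dist^\star=Q_{k^\star}$, with $\KL{\dist^\star}{\dist}=\log(1/w_{k^\star})$. Setting $w_{k^\star}=2^{-D}$ gives KL $\le D$. If $2^D \gg N$, the simulator samples almost surely never reveal $Q_{k^\star}$, so the learner effectively has no simulator and I can embed an online-learning-hard (e.g.\ threshold/binary-search) sequence. To make this an explicit cost, I give $Q_{k^\star}$ a Littlestone-type tree of depth $\approx T$ within a VC-$1$ class: thresholds on a fresh dyadic path yield a mistake per round adversarially. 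To get the $DT/\log N$ scaling, I split time into $B\approx D/\log N$ blocks of length $T\log N/D$. In each block, nature switches to a component of weight $1/N$ (cost $\log N$ of KL per block, total $\le D$). That component is unseen with constant probability, and while unseen it forces $\Omega(\text{block length}\cdot D/\ldots)$—more precisely, it forces a constant fraction of mistakes over the block. Summing over blocks gives $\Omega(T)$ when $D\ge\log N$. More generally, a fraction $D/\log N$ of rounds is unpredictable, giving $DT/\log N$. Term (c), $D$, follows from the case $N$ small and the one-block variant.

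The main obstacle is combining realizability with these forced mistakes. The unseen component must generate labels consistent with a single fixed VC-$d$ $f^\star$ while still looking like fresh randomness. I would use thresholds on $[0,1]$ with $f^\star$ a random threshold, and have the unseen component query points by binary search toward the threshold. Each query is then a coin flip for the learner, and unconditional samples of other components reveal nothing about which dyadic path is taken.
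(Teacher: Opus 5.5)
Your four-way reduction (prove the $d$, $dT/N$, $D$ and $DT/\log N$ terms separately, each capped at $T$, then use the worst construction) is the paper's. Your arguments for the $d$ term, and for the $DT/\log N$ term when $D \ge \log N$, are sound: there a hidden component of weight about $1/N$ with an independent decoy forces a Littlestone path for thresholds. The $dT/N$ term, however, fails.

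\textbf{The $dT/N$ term.} For this term you propose the class of indicators of at most $d$ points. That class has Littlestone dimension $d$: the learner that predicts $1$ only on points already seen labeled $1$ makes at most $d$ mistakes on every sequence. So no arrangement of worlds can force $dT/N > d$ once $T > N$. Spreading $\dist$ over $NT/d$ light worlds that each contribute $O(d)$ mistakes also only gives $\Omega(d)$. The missing idea is a class of VC dimension $d$ but unbounded Littlestone dimension, such as the $d$-dimensional thresholds $\cF_d$ (which also handles all four terms, so no disjoint union is needed), together with $d$ \emph{heavy} secret worlds. Concretely:
\begin{itemize}
\item plant a random depth-$T$ Littlestone path $\mathbf x^i$ in each coordinate $i \in [d]$, and let $f^\star$ be consistent with all of them;
\item set $\dist = \dist^\star = \frac{1}{2N}\sum_i \delta_{\mathbf x^i} + (1-\frac{d}{2N})\rho$, where $\rho$ is an independent decoy of uniformly random paths;
\item nature plays a planted path with probability $\frac{d}{2N}$, and your $N$ samples miss it with probability at least $1/2$;
\item that path then forces $T/2$ mistakes, giving $\Omega(dT/N)$ (use only $N$ coordinates if $d > N$).
\end{itemize}

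\textbf{The $DT/\log N$ term when $D < \log N$.} Your block scheme has $D/\log N < 1$ blocks, so it has no content here. Even a single deterministic switch to a component missed by $N$ samples is unaffordable. If $\dist^\star$ lands almost surely in an event $E$ with $\dist(E) \lesssim 1/N$, data processing gives $\KL{\dist^\star}{\dist} \ge \log(1/\dist(E)) \gtrsim \log N > D$. The fix is to randomize the deviation rather than schedule it. Take $\dist^\star = \alpha\delta_{\mathbf x} + (1-\alpha)\rho$ and $\dist = \beta\delta_{\mathbf x} + (1-\beta)\rho$, with $\beta = 1/N$ and $\alpha = \frac{D}{\log N} \wedge 1$. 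The likelihood ratio is constant off $\mathbf x$, so the KL reduces to a binary KL at most $\alpha\log(1/\beta) \le D$. Meanwhile the expected regret is $\gtrsim \alpha T$.

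\textbf{The $D$ term.} It exceeds $\frac{DT}{\log N}$ only when $\log N > T$. If moreover $D < \log N$, any component light enough to be missed costs more than $D$ bits to use deterministically, and the randomized version gives only $\frac{DT}{\log N} < D$. So ``the case $N$ small'' cannot supply this term; $N$ is given, not chosen. You need a mechanism in which even full knowledge of $\dist$ is useless. Let $\dist$ be a uniform path in a depth-$(D\wedge T)$ Littlestone tree, and let $\dist^\star = \delta_{\mathbf x}$ for a uniformly random leaf $\mathbf x$. Then $\KL{\dist^\star}{\dist} = D \wedge T$, and every label is a fair coin for the learner, since both children stay equally likely under $\dist$.

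\textbf{Minor.} $\KL{Q_{k^\star}}{\sum_k w_k Q_k}$ is only bounded above by $\log(1/w_{k^\star})$, not equal to it.
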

In particular, the lower bound above implies that even in the moderate KL regime (for instance, $\KL{\dist^\star}{\dist} \le 0.1$), an \emph{exponential} in $T$ number of samples is necessary to achieve sub-polynomial regret for all VC classes. As mentioned above, with exact simulator access, $T$ unconditional samples suffice, and~\cref{algo:multi-cover} runs in $T^{O(d)}$ time, which is polynomial in the regime $d \le O(1)$. A natural question to ask is whether $\poly(T,d)$ runtime is also possible. We answer this in the negative. The following result can be viewed as an extension of~\cite{hazan2016computational} to our setting.
\begin{restatable}{theorem}{RealizableERMOracleLB}
\label{thm:realizable-erm-oracle-lb}
There is a sufficiently small constant $c$ so that the following holds. For any $d, N, T\in \bbN$ satisfying $TN \leq 2^{cd}$, and any learning algorithm $\Alg$ that draws at most $N$ unconditional samples from $\dist^\star$ and makes at most $N$ agnostic ERM oracle queries, there exists a class $\cF$ of VC dimension at most $d$ and a distribution $\dist^\star$ so that $\Alg$ suffers regret $\En \reg(\cF, T) \gtrsim T$.
\end{restatable}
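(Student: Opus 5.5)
We will adapt the oracle lower bound of \cite{hazan2016computational} for online learning with ERM oracles. In that construction, a hidden ``secret'' is embedded into an exponentially large class. The learner cannot locate the secret with few ERM queries, yet each function in the class stays simple enough to keep the VC dimension small. We build a family of pairs $(\cF_s, \dist^\star_s)$ indexed by a secret $s \in \{0,1\}^{m}$, with $m \approx cd$. The goal is that, averaged over a uniformly random $s$, any algorithm making at most $N$ samples and $N$ ERM queries has constant error on a constant fraction of rounds. A counting argument over the $2^m$ secrets then yields the claimed $\Omega(T)$ regret whenever $TN \le 2^{cd}$.

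\textbf{Construction.} Take domain points of the form $x = (u, b)$, where $u$ ranges over a set of size $2^m$ and $b$ is a payload bit or short string. We choose $\dist^\star_s$ so that the marginal samples reveal nothing about $s$: for instance, each $X_t$ is a fresh uniformly random $u$, so $\dist^\star_s$ does not depend on $s$ at all. The labels, however, depend on $s$. The class $\cF_s$ is designed so that it contains a function $f^\star_s$ whose label at $x$ encodes a bit depending on $\langle u, s\rangle$, or on whether $u = s$. Realizability is not required here, since the theorem allows agnostic ERM. We then arrange that an agnostic ERM oracle over $\cF_s$, queried on datasets of total size at most $\poly$, returns a hypothesis that reveals information about $s$ only when the queried dataset contains a point with $u = s$ (or the relevant key). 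Pointer functions of this kind, $f_{s}(u,b)=\ind\{u=s\}\cdot b$ together with a bounded family, keep the VC dimension $O(d)$: there are $2^m$ indices, and each index carries only constant complexity. To make the problem genuinely hard, we follow \cite{hazan2016computational} and let the true labels on non-key rounds be random, so the best expert is only identifiable through the key.

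\textbf{Argument.} (i) The samples from $\dist^\star$ are independent of $s$. (ii) Each ERM query on points not containing the key has an answer we can simulate without knowing $s$. (iii) The probability that any of the at most $N$ queries or the $T$ observed covariates hits the key is at most $(N+T)\cdot\poly/2^m \le 2^{-\Omega(d)}$ once $TN \le 2^{cd}$. Conditioned on never hitting the key, the learner's predictions are independent of the labels chosen by the best function in $\cF_s$. Meanwhile, $\inf_{f\in\cF_s}\sum_t \ind\{f(X_t)\ne Y_t\}$ is substantially smaller, because there is a function that knows $s$ and predicts $Y_t$ correctly. This gives regret $\gtrsim T$ in expectation over $s$, and hence for some fixed $s$.

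\textbf{Main obstacle.} The hardest step is designing $\cF_s$ so that three requirements hold simultaneously. First, the class has VC dimension $O(d)$. Second, it contains a comparator with loss much smaller than the learner's, even though the covariates are uniform and independent of $s$. Third, agnostic ERM answers on arbitrary learner-chosen datasets can be simulated without $s$. I would first try to directly import the Hazan--Koren construction, which encodes a secret via a hidden index and shows oracle answers are simulable. The new ingredient is converting their adversarial sequence into an oblivious i.i.d.\ $\dist^\star$ whose samples are useless, checking that the label sequence can be fixed obliviously per Assumption~\ref{assumption:f-star} (or relaxed as in the agnostic setting).
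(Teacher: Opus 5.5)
There is a genuine gap, and it lies in your choice of $\dist^\star$ rather than in the design of $\cF_s$. With i.i.d.\ covariates no lower bound of this form is possible, because the oracle model lets the learner query ERM on its own past labeled rounds. In the realizable setting, take a single ERM call on $(X_s,Y_s)_{s\le t_0}$ with $t_0\asymp\sqrt{dT\log T}$. Under any tie-breaking it returns a consistent hypothesis, whose error under the i.i.d.\ law is $O(d\log T/t_0)$ by the realizable PAC bound. Playing it afterwards gives regret $\tilde O(\sqrt{dT})$ with one query and no samples. This is $o(T)$ in the regime $d\asymp\log(TN)$, which the theorem must cover; uniform convergence gives sublinear regret agnostically as well.

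The same tension appears inside your steps (ii)--(iii). If the labels on observed non-key points depend on $s$ (as with $\langle u,s\rangle$), ERM on the history recovers $s$ after $O(m)$ rounds. If they do not (as with $\ind\{u=s\}\cdot b$), the comparator that ``knows $s$'' differs from the all-zero predictor only on key rounds, each of probability $2^{-m}$. Then it is not substantially better than the learner, and ``random labels on non-key rounds'' are not realized by any function in your class.

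Relatedly, you should not drop realizability. The theorem is the realizable lower bound showing that the $T^{O(d)}$ ERM calls of \cref{algo:multi-cover} are necessary. Without $f^\star\in\cF$, the conclusion already follows from \cref{thm:agnostic-impossibility} for thresholds, with no computational content.

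The missing idea is to make the realized trajectory a single strongly dependent object that the samples miss, and to hide $f^\star$ among decoys that match it on every revealed prefix. The paper takes $\cX=[M]\times[T]$ with $M=10N$ and $\dist^\star$ uniform over the $M$ deterministic rows $((i,1),\ldots,(i,T))$. So with probability at least $1-N/M$ the realized row $I^\star$ is absent from the samples, and the learner's history is always a prefix of that row. It sets $f^\star=h_{K+1}$ for a uniformly random function and adds decoys $g_{t,i,b,k_0,k_1}$: equal to $h_{k_0}$ on $(i,1),\ldots,(i,t-1)$, to $b$ at $(i,t)$, and to $h_{k_1}$ elsewhere, for $K=\poly(T,N)$ random functions $h_k$. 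Realizability is built in, and the VC dimension is at most $\log|\cF|=O(\log TN)\le d$.

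The core of the proof (\cref{lem:talg-alg,lem:close-uniform}) shows that functions carrying information about $f^\star$ beyond the revealed labels are essentially never the strict unique ERM minimizer. These are $f^\star$ and the relevant $g_{\cdot,\cdot,\cdot,K+1,\cdot}$. Each of them competes with about $\sqrt K$ functions of $\tcF_t$ built from $h_1,\ldots,h_{\sqrt K}$, which stay nearly uniform given the transcript. With ties broken toward $\tcF_t$, the learner's prediction at round $t$ is independent of the uniform bit $Y_t$. This gives $T/2$ expected mistakes versus $0$ for $f^\star$. Your ``simulate the oracle without the secret'' intuition survives in this form, but only because the history is a prefix of a trajectory the samples never visited — exactly what i.i.d.\ covariates destroy.
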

Together,~\cref{thm:realizable-lb,thm:realizable-erm-oracle-lb} highlight computational challenges in learning with unconditional samples, and show that~\cref{algo:multi-cover} is essentially statistically and computationally optimal in the realizable setting with unconditional samples. We now turn our attention to agnostic learning. Typically, agnostic learning and realizable learning are qualitatively similar in statistical complexity, though the rates may differ.  Surprisingly, this is not the case in our setting: we show that we can force linear regret in the agnostic setting
on the class of thresholds for any learner that
draws finitely many unconditional samples in the agnostic learning setting.
\begin{restatable}{theorem}{AgnosticImpossibility}
\label{thm:agnostic-impossibility}
    Let $\cF$ be the class of thresholds on the $[0,1]$ interval. For any $N \ge 1$, and for any learning algorithm that draws at most $N$ unconditional samples from $\dist^\star$, there exists a choice of a distribution $\dist^\star$ and $f^\star \not \in \cF$ such that $\En \reg(\cF,T) \gtrsim T.$
\end{restatable}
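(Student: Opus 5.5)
The plan is to combine a Littlestone-style binary-search construction for thresholds with a large mixture over hidden \emph{worlds}. The mixture ensures that $N$ unconditional samples from $\dist^\star$ almost surely come from worlds other than the one that is realized, so they carry no information about the realized labels. \cref{assumption:f-star} only requires $f^\star$ to be fixed before the draw of $\seq X$. The adversary may therefore choose $f^\star$ first and then choose $\dist^\star$ depending on $f^\star$. I will pick $f^\star$ at random and show the expected regret, over $f^\star$ as well, is $\gtrsim T$. An averaging argument then yields a fixed pair $(\dist^\star, f^\star)$.

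\emph{Construction.} Fix $K \ge 4N$ and partition $[0,1]$ into $K$ disjoint intervals $I_1,\ldots,I_K$. In each $I_k$ take the complete dyadic binary tree of depth $T$: the root is the midpoint $r_k$ of $I_k$, and a node at depth $t$ has children at $\pm 2^{-(t+1)}|I_k|$ from it. Choose $f^\star$ on all tree nodes by independent fair bits, and set $f^\star$ arbitrarily elsewhere. Define $\dist^\star$ as follows: draw $k \sim \Unif([K])$, set $X_1 = r_k$, and set $X_{t+1} = X_t + (2f^\star(X_t)-1)\,2^{-(t+1)}|I_k|$. In words, given $k$ the sequence is the binary-search path in $I_k$ steered by $f^\star$. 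The usual Littlestone argument then gives, on every realized path, a threshold $x \mapsto \ind\{x \ge \theta\}$ with $\theta \in I_k$ that labels every $X_t$ correctly. Points outside $I_k$ do not appear in the sequence, so $\inf_{f\in\cF}\sum_t \ind\{f(X_t)\neq Y_t\} = 0$ almost surely, even though $f^\star \notin \cF$.

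\emph{Lower bound on the learner's loss.} The learner draws $N$ sample trajectories. Each one reveals the bits of $f^\star$ along a path in its own world $k'$. Let $E$ be the event that the realized world $k$ differs from all sampled worlds. Since $k$ is independent of the samples, $\P[E] \ge 1 - N/K \ge 3/4$. Condition on $E$ and on the history up to the moment the learner predicts $\hat Y_t$. This history consists of the samples, $X_1,\ldots,X_t$, and the labels $Y_1,\ldots,Y_{t-1}$. It is a function of the bits of worlds other than $k$ and of the bits $f^\star(X_1),\ldots,f^\star(X_{t-1})$. In particular, $X_t$ is a node at depth $t$ that has not yet been queried, so its bit $f^\star(X_t) = Y_t$ is a fresh uniform bit independent of the history. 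Hence $\P[\hat Y_t \neq Y_t \mid E] = 1/2$ for every $t$, and $\En\reg \ge \tfrac34 \cdot \tfrac T2 \gtrsim T$, with the expectation also taken over $f^\star$. Some fixed realization of $f^\star$, with its associated $\dist^\star$, achieves at least this bound, which proves the claim.

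\emph{Main obstacle.} The delicate step is the independence claim: the label $Y_t$ must be uniform given everything the learner has seen, including its adaptively chosen use of the samples. Two facts settle it. First, the tree nodes are distinct, so each bit is used at most once along a path. Second, the next point $X_{t+1}$, which encodes $Y_t$, is revealed only after $\hat Y_t$ is committed. I would make both facts explicit through a deferred-decision argument: reveal the bits of $f^\star$ lazily, only when a path in some world queries them.
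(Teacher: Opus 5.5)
Your construction and argument are essentially the paper's proof. Both use a uniform mixture over disjoint intervals, each carrying a Littlestone tree for thresholds. With constant probability the realized trajectory lies in a world that none of the $N$ samples visited, the comparator has zero loss on it, and the learner pays $T/2$ by the Littlestone argument. The paper uses $N$ intervals with $(1-1/N)^N\ge 1/4$ rather than your $K\ge 4N$ and a union bound. It also draws random paths and fits a threshold to each, rather than randomizing $f^\star$ on whole trees and steering by it; the two are equivalent.

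One step fails as written, because of a sign error in your steering rule. With $X_{t+1} = X_t + (2f^\star(X_t)-1)\,2^{-(t+1)}|I_k|$, a label $1$ at $X_t$ moves the walk to the right. But $\ind\{X_t\ge\theta\}=1$ forces $\theta\le X_t$. Every later point lies to the right of $X_t$, so each must also receive label $1$, yet its bit is fresh. The same happens after a $0$. So any label change along the path makes it non-realizable by $x\mapsto\ind\{x\ge\theta\}$. The comparator's loss is then close to $T/2$ rather than $0$, and the construction no longer forces linear regret.

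The fix is to move \emph{left} on label $1$, i.e.\ $X_{t+1} = X_t - (2f^\star(X_t)-1)\,2^{-(t+1)}|I_k|$. Equivalently, keep your rule and use thresholds $\ind\{x\le\theta\}$. Then the interval of thresholds consistent with the labels seen so far always has the next node at its center. A $\theta$ from the final interval labels every $X_t$ correctly. The rest of your argument goes through unchanged, including the deferred-decision independence step and the averaging over $f^\star$.
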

This further highlights the difference between the present setting and conceptually similar settings such as transductive learning and smoothed online learning. In both those cases, agnostic learning is possible for all VC classes, and the rates are similar to the rates under independence.

\section{Learning with Conditional Simulators}
\label{sec:cond}
    Thus far, we have established that all VC classes are learnable under simulatable processes in the realizable setting.
    In order to extend these results beyond the realizable setting, we consider a stronger notion of simulatable processes that allows \emph{conditional} sampling. Specifically, given a conditional sampling oracle for $\dist$ that approximates $\dist^\star$, we show that agnostic learning of all VC classes is possible. The main result of this section follows.
    It is achieved by~\cref{algo:dist-transductive-reduction}.
    \begin{restatable}{theorem}{CondOracleEff}
    \label{thm:cond-oracle-eff}
    Let $\cF$ be a VC dimension $d$ class. Then, there exists an algorithm that runs in $\poly(T)$ time and, given access to an agnostic ERM oracle over $\cF$ and a conditional sampling oracle $\condoracle_{\dist}$, achieves the following expected regret bound in the agnostic setting:
    \[
    \En \reg(\cF,T) \lesssim \sqrt{dT (1 + \KL{\dist^\star}{\dist})}.
    \]
    \end{restatable}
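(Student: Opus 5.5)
We plan to prove \cref{thm:cond-oracle-eff} with the relaxation framework of \citep{rakhlin2012relax}: random playouts drawn from the conditional simulator, in the style of \citep{rakhlin2015hierarchies}. The algorithm runs as follows. At round $t$, having seen $\seq X_{1:t}$ and $Y_{1:t-1}$, it draws a single future trajectory $\tilde{\seq X}_{t+1:T}\sim \dist(\cdot\mid \seq X_{1:t})$ using $\condoracle_\dist$. It also draws i.i.d.\ Rademacher signs $\epsilon_{t+1:T}$. The learner then forms the randomized prediction that minimizes the conditional relaxation
\[
\rel_t = \En\Big[\sup_{f\in\cF}\Big\{2\sum_{s>t}\epsilon_s f(\tilde X_s) - \sum_{s\le t}\ind\{f(X_s)\neq Y_s\}\Big\}\Big].
\]
The supremum inside can be written as one weighted ERM over $\cF$ on the labeled set $\{(X_s,Y_s)\}_{s\le t}\cup\{(\tilde X_s, \text{label from } \epsilon_s)\}_{s>t}$. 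So each round needs two agnostic ERM calls, one for each tentative value of $\hat Y_t$, plus $O(1)$ oracle draws, and the total runtime is $\poly(T)$. The randomized prediction is the standard closed-form ``random playout'' rule $\hat Y_t = \frac12 + \frac12(\text{ERM}_{Y_t=1}-\text{ERM}_{Y_t=0})$, clipped to $[0,1]$.

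\emph{Step 1 (exact simulator).} We first assume $\dist=\dist^\star$. We verify the admissibility inequality
\[
\En_{X_t\sim \dist^\star_t(\cdot\mid \seq X_{1:t-1})}\Big[\inf_{\hat y}\sup_{y}\big\{\En|\hat y-y| + \rel_t\big\}\Big]\le \rel_{t-1}.
\]
The proof is the usual symmetrization: swap inf and sup, then introduce a ghost copy of $X_t$ drawn from the same conditional law, which the playout can reproduce precisely because it conditions on the realized prefix. Telescoping gives $\En\reg\le \rel_0$. With $\dist=\dist^\star$, $\rel_0$ is a sequential-in-time but \emph{tree-free} Rademacher average over a single trajectory, because the future is fixed by one draw rather than by a tree. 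Conditionally on $\tilde{\seq X}$ it is an ordinary Rademacher complexity of $\cF$ on $T$ points, hence $\lesssim\sqrt{dT}$ by Sauer--Shelah and Dudley.

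\emph{Step 2 (approximate simulator).} This is the main obstacle. The admissibility step needs the playout distribution $\dist(\cdot\mid\seq X_{1:t})$ to match nature's conditional $\dist^\star(\cdot\mid \seq X_{1:t})$, and now it does not. We track the mismatch per round. Replacing the ghost sample $X'_t\sim \dist^\star_t(\cdot\mid\seq X_{1:t-1})$ with one from $\dist_t(\cdot\mid\seq X_{1:t-1})$ costs an additive term. Since the relaxation values are not $O(1)$-bounded, we cannot simply charge $O(1)\cdot\mathsf{TV}$. Instead we use the fact that the quantity swapped is a bounded-increment difference, namely the contribution of the single coordinate $t$, of size $O(1)$, times a sign. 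The change-of-measure error at round $t$ is then at most $\En\,\mathsf{TV}_t$, where $\mathsf{TV}_t=\TV{\dist^\star_t(\cdot\mid \seq X_{1:t-1})}{\dist_t(\cdot\mid \seq X_{1:t-1})}$.

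A crude sum would give $\sum_t\En\,\mathsf{TV}_t\le\sqrt{T\cdot \KL{\dist^\star}{\dist}}$ by Pinsker and the chain rule. That is additive with the $\sqrt{dT}$ term, which is within the claimed $\sqrt{dT(1+\KL{\dist^\star}{\dist})}$. To get it we must be careful: the swap must happen inside a term that is linear in a single $\pm$ increment, so that the error is a TV distance times $O(1)$ rather than times the size of the supremum. If that fails, the fallback is a variational (Donsker--Varadhan) argument. We bound $\rel_0$ under $\dist^\star$ by $\frac1\lambda\big(\KL{\dist^\star}{\dist}+\log\En_{\dist}e^{\lambda\,\mathrm{Rad}}\big)$. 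Sub-Gaussian concentration of the Rademacher supremum, with variance proxy $T$ around mean $\sqrt{dT}$, gives $\lesssim\sqrt{dT}+\sqrt{T\cdot\KL{\dist^\star}{\dist}}$ after optimizing $\lambda$. This is again bounded by $\sqrt{dT(1+\KL{\dist^\star}{\dist})}$.

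Finally we assemble: regret $\le \rel_0 + \sum_t \text{(mismatch)}_t$, take expectations over $\seq X\sim\dist^\star$, and use the chain rule $\sum_t\En\,\KL{\dist^\star_t(\cdot\mid \seq X_{1:t-1})}{\dist_t(\cdot\mid \seq X_{1:t-1})}=\KL{\dist^\star}{\dist}$ together with Cauchy--Schwarz.
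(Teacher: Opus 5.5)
Your Step~1 is the standard exact-simulator argument and is fine. Step~2, however, fails at its central claim: that replacing $X_t\sim\dist^\star_t(\cdot\mid\seq X_{1:t-1})$ by a draw from $\dist_t(\cdot\mid\seq X_{1:t-1})$ costs only $O(1)\cdot\mathsf{TV}_t$. In your relaxation, $X_t$ enters not only through the single increment at coordinate $t$. It also enters through the law of the whole playout $\tilde{\seq X}_{t+1:T}\sim\dist(\cdot\mid\seq X_{1:t})$. The admissibility defect at round $t$ is $\En_{\dist^\star_t}G-\En_{\dist_t}G$, where
\[
G(x)=\En_{\varepsilon_t}\,\En_{\tilde{\seq X}_{t+1:T}\sim\dist(\cdot\mid\seq X_{1:t-1},x)}\,\En_{\boldsymbol{\varepsilon}_{t+1:T}}\sup_{f\in\cF}\Big[\varepsilon_t\tfrac{2f(x)-1}{2}+\sum_{s>t}\varepsilon_s\tfrac{2f(\tilde X_s)-1}{2}-L_{t-1}(f)\Big].
\]
The oscillation of $G$ in $x$ can be of order $\sqrt{T}$ (it is only bounded by $O(\sqrt{dT})$), not $O(1)$. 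For thresholds with $t=1$, take the following choice of $\dist$:
\begin{itemize}
\item $X_1=1$ makes the future constant at $1$, where all thresholds agree, so $G(1)=0$;
\item $X_1=1/2$ makes it constant at $1/2$, so $G(1/2)=\tfrac12\En|\sum_s\varepsilon_s|\asymp\sqrt T$.
\end{itemize}
Hence a $\delta$ shift in $\dist^\star_1$ costs $\delta\sqrt T$. No regrouping makes the swapped quantity linear in one sign, because the mismatch sits in the conditional law of the future, not in coordinate $t$. The per-round cost is therefore $\sqrt{dT}\,\En\mathsf{TV}_t$ in general. Pinsker plus Cauchy--Schwarz then give $\sqrt{dT}\cdot\sqrt{T\,\KL{\dist^\star}{\dist}}$, which is linear in $T$.

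The Donsker--Varadhan fallback does not repair this. The inequality $\En\reg\le\rel_0$ follows from admissibility, which is exactly what fails when the playout law differs from nature's. So bounding $\En_{\dist^\star}$ of the Rademacher average says nothing about the regret; that average is $\lesssim\sqrt{dT}$ pointwise anyway. The change of measure must be applied to the regret itself:
\[
\En_{\dist^\star}\reg\le\frac1\lambda\log\En_{\dist}e^{\lambda\reg}+\frac1\lambda\KL{\dist^\star}{\dist}.
\]
This requires an exponential-moment bound on the regret under $\dist$. Your linear relaxation yields only a supermartingale in expectation, whose increments are again up to $\sqrt{dT}$ in size. This route therefore gives a bound linear in $T$ unless $\KL{\dist^\star}{\dist}=o(1)$.

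The missing idea is to exponentiate the relaxation. Take $\rel(\seq S_{1:t})=\frac1\eta\log\En\exp(\eta\sup_f[\cdots])$. Jensen in $\varepsilon_t$ then makes $\exp(\eta\bar L_t+\eta\rel(\seq S_{1:t}))$ a supermartingale under $\dist$, and Donsker--Varadhan gives $\sqrt{dT}+\eta T+\KL{\dist^\star}{\dist}/\eta$. This also breaks your single-playout implementation. The prediction becomes $\frac12+\frac1{2\eta}\log$ of a ratio of exponential moments, which one playout cannot estimate. You need many samples and a normalized-variance bound $\exp(O(\eta\sqrt{dT}+\eta^2T))$, which forces $\eta\lesssim 1/\sqrt{dT}$ and hence a bound of $\sqrt{dT}(1+\KL{\dist^\star}{\dist})$. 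The stated $\sqrt{dT(1+\KL{\dist^\star}{\dist})}$ is then recovered by epoching, using the chain rule for KL across epochs, and by aggregating over the number of epochs.
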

When $\dist=\dist^\star$ (i.e., exact conditional sampling access), it is
known that one can achieve the optimal $O(\sqrt{dT})$ regret for VC dimension $d$ classes using
\emph{relaxations with random playout}~\citep{rakhlin2012relax,rakhlin2015hierarchies}. Our contribution is to extend these results to a setting where the playout distribution
$\dist$ is only a coarse approximation for $\dist^\star$.

Recall that in the unconditional setting with an inexact simulator, achieving $T^{1-\Omega(1)}$ regret provably requires exponentially many samples~(\cref{thm:realizable-lb}), and even with exact simulator, $\exp(\Omega(d))$ runtime is necessary. Remarkably,~\cref{thm:cond-oracle-eff} attains a $\sqrt{T}$ rate using only $\poly(T)$ conditional queries and ERM calls, which qualitatively matches the complexity of learning under independence. The dependence on $\KL{\dist^\star}{\dist}$ is also qualitatively tight, and the above result can be complemented with a $\sqrt{T \cdot \KL{\dist^\star}{\dist}}$ lower bound for the agnostic setting against even inefficient algorithms (\cref{prop:approx-agnostic-lb}). Additionally, $\sqrt{T}$-type regret is known to be optimal for oracle-efficient algorithms even in the realizable transductive setting~\citep{hazan2016computational}. 

\paragraph{Warm-up: conditional Rademacher relaxation.}
A convenient way to design oracle-efficient algorithms is via the
\emph{relaxation} framework~\citep{rakhlin2012relax}. Intuitively, relaxations maintain at each time $t$ an upper bound on the ``remaining difficulty'' of the problem. As a warm-up, consider a natural strategy based on the classical conditional Rademacher complexity relaxation~\cite{rakhlin2012relax,rakhlin2015hierarchies}. Formally, for a dataset $\seq S_{1:t} := (\seq X_{1:t},\seq Y_{1:t})$, we let $L_t(f) := \sum_{s=1}^t \ind\{f(X_s)\neq Y_s\}$, and define: 
\begin{equation}
\label{eq:rel-def}
\rel_0(\seq S_{1:t})
:=
\En_{\substack{\seq X_{t+1:T} \sim \dist(\cdot \mid \seq X_{1:t}),\\
\boldsymbol{\varepsilon}_{t+1:T}\sim \Unif(\{\pm 1\}^{T-t})}}
\sup_{f\in \cF}\left[ \sum_{s=t+1}^T \varepsilon_s \frac{2f(X_s)-1}{2} - L_t(f)\right].
\end{equation}
Note that the above can be evaluated using conditional samples from $\dist$, potentially inefficiently. 
For a history $\seq S_{1:t-1}$ and a new covariate $X_t$, a strategy of the learner is defined via a solution to the following minimax optimization problem:
\begin{equation}
\argmin_{q \in \Delta([0,1])}\ \sup_{Y_t \in \{0,1\}}
\left[
\En_{\hat Y\sim q}\ind\{\hat Y\neq Y_t\}
+ \rel_0(\seq S_{1:t}) \right]
\end{equation}
In the setting where the simulator is exact, the standard analyses apply~\cite{rakhlin2012relax,rakhlin2015hierarchies}, and yield $\sqrt{dT}$ expected regret upper bound for this strategy. The case of inexact simulator with $\KL{\dist^\star}{\dist} > 0$ is more interesting. A natural strategy that can be used to upper bound regret under $\dist^\star \neq \dist$ is to upper bound regret under $\dist$ and perform a change-of-measure argument to $\dist^\star$ using \emph{Donsker--Varadhan} variational formula for KL~\cite{polyanskiy2025information}: 
\begin{equation}
\label{eq:dv}
\En_{\dist^\star} \reg(\cF,T) \le \inf_{\eta >0} \left\{\frac{1}{\eta} \log \En_{\dist} \exp(\eta \reg(\cF,T)) + \frac{1}{\eta} \KL{\dist^\star}{\dist}\right\}.
\end{equation}
Using simple boundedness $|\reg(\cF,T)| \le T$ and Hoeffding's inequality, we can upper bound 
\[
\frac{1}{\eta} \log \En_{\dist} \exp(\eta \reg(\cF,T)) \lesssim \En_{\dist} \reg(\cF,T) + \eta T^2 \lesssim \sqrt{dT} + \eta T^2.
\]
Optimizing $\eta$ then yields: 
\[
\En_{\dist^\star} \reg(\cF,T) \lesssim \sqrt{dT} + T \sqrt{\KL{\dist^\star}{\dist}},
\]
which is only meaningful in the small-KL regime $\KL{\dist^\star}{\dist} \ll 1$.  To improve upon the above, we would need to show that $\reg(\cF,T)$ satisfies meaningful concentration properties under $\dist$ beyond simple boundedness. Unfortunately, it is unclear whether $\reg(\cF,T)$ concentrates meaningfully, with the main obstacle being an absence of any independence structure in $\dist$, which disallows naive applications of martingale concentration inequalities. However, this gives us insight in how to change the definition of~\cref{eq:rel-def} to enable regret concentration.

\paragraph{Log-MGF relaxation.} We consider an alternative relaxation definition, which allows us to control the moment generating function (MGF) of regret directly. The relaxation is parameterized by an inverse temperature parameter $\eta > 0$:
\begin{equation}
\label{eq:rel-log-mgf}
\rel(\seq S_{1:t})
:=
\frac{1}{\eta} \log \En_{\substack{\seq X_{t+1:T} \sim \dist(\cdot \mid \seq X_{1:t}),\\
\boldsymbol{\varepsilon}_{t+1:T}\sim \Unif(\{\pm 1\}^{T-t})}} \exp\left(\eta
\sup_{f\in \cF}\left[ \sum_{s=t+1}^T \varepsilon_s \frac{2f(X_s)-1}{2} - L_t(f)\right] \right).
\end{equation}
We fix the inverse temperature $\eta > 0$ and suppress it from the notation. Note that $\lim_{\eta \to 0} \rel = \rel_0$. The strategy of the learner is defined as a solution to:
\begin{equation}
    \label{eq:relaxation-mgf-strategy}
    q_t(\seq S_{1:t-1}, X_t) = \argmin_{q \in \Delta([0,1])}\ \sup_{Y_t \in \{0,1\}} 
\left[
\En_{\hat Y\sim q}\ind\{\hat Y\neq Y_t\}
+ \rel(\seq S_{1:t}) \right]
\end{equation}
Note that it is non-trivial to compute the above from conditional samples from $\dist$ and agnostic ERM oracle calls, which is why the above is the \emph{idealized} strategy of the learner. We consider the following potential function:
\begin{equation}
\label{eq:mgf-potential}
\Phi_t(\seq S_{1:t}) := \exp\left( \eta \bar L_t + \eta \rel (\seq S_{1:t})\right),
\end{equation}
where $\bar L_t := \sum_{s \le t} \En_{\hat Y_s \sim q_s} \ind\{\hat Y_s \neq Y_s\}$ is the predictable loss process. Note that $\Phi_T = \exp(\eta (\bar L_T - \inf_{f\in\cF} L_T(f)))$. Then, it can be shown that~\cref{eq:relaxation-mgf-strategy} makes $\Phi_t$ a \emph{supermartingale}.
\begin{restatable}{lemma}{RelAdmissibility}
    \label{lemma:rel-admissibility}
    Let $\cF$ be an arbitrary function class, and consider any $t \le T$. Suppose the simulator is exact, that is $\dist = \dist^\star$, and let $q_t$ be as in~\cref{eq:relaxation-mgf-strategy}. Then, for any history $\seq S_{1:t-1}$, we have
\[
\En_{X_t \sim \dist_t^\star(\cdot \mid \seq X_{1:t-1})} \exp\left( \sup_{Y_t}\left[\eta \En_{\hat  Y_t \sim  q_t} \ind\{ \hat Y_t  \neq Y_t\} + \eta \rel(\seq S_{1:t})\right] \right) \le \exp\left( \eta \rel(\seq S_{1:t-1}) \right).
\]
Thus, conditioned on $\seq S_{1:t-1}$, $\En_{X_t} \sup_{Y_t} \Phi_t \le \Phi_{t-1}$, and therefore, $\{\Phi_t\}_{t \le T}$ is a supermartingale.
\end{restatable}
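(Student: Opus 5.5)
We follow the standard admissibility argument for Rademacher-type relaxations from~\cite{rakhlin2012relax,rakhlin2015hierarchies}, with the log-MGF structure carried along. Fix the history $\seq S_{1:t-1}$. Write $\bar\ell(q,y) := \En_{\hat Y\sim q}\ind\{\hat Y\neq y\}$. For $X_t$, $\seq X_{t+1:T}$ and $\boldsymbol\varepsilon_{t+1:T}$, define
\[
A(f) := \sum_{s=t+1}^T \varepsilon_s \tfrac{2f(X_s)-1}{2} - L_{t-1}(f).
\]
Since $\ind\{f(X_t)\neq Y_t\} = \tfrac12 - (2Y_t-1)\tfrac{2f(X_t)-1}{2}$, we have $L_t(f) = L_{t-1}(f) + \tfrac12 - (2Y_t-1)\tfrac{2f(X_t)-1}{2}$. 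Because $q_t$ is the minimizer in~\cref{eq:relaxation-mgf-strategy} and $x\mapsto e^{\eta x}$ is monotone, the left-hand side for fixed $X_t$ equals
\[
\min_{q}\max_{y\in\{0,1\}} \exp\bigl(\eta\bar\ell(q,y)\bigr)\,\En_{\seq X_{t+1:T},\boldsymbol\varepsilon}\exp\Bigl(\eta\sup_f\bigl[A(f) - \tfrac12 + (2y-1)\tfrac{2f(X_t)-1}{2}\bigr]\Bigr).
\]

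The key step is to upper bound the min-max by a symmetrization. We replace $q$ by the choice that equalizes the $y$-dependence, or equivalently pass to a randomized $y$ via minimax duality. The objective is linear in $q$ inside $\exp(\eta\bar\ell)$, and the $\sup_y$ ranges over two points, so the min over $q\in\Delta(\{0,1\})$ of a max of two monotone functions of $q$ is attained at the crossing point. Concretely, let $\Psi(y) := \log \En\exp(\eta\sup_f[\cdots])$, and choose $q$ with $\eta\bar\ell(q,1)+\Psi(1) = \eta\bar\ell(q,0)+\Psi(0)$; this is possible whenever $|\Psi(1)-\Psi(0)|\le\eta$, which holds because the two suprema differ by at most $1$. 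Then $\bar\ell(q,0)+\bar\ell(q,1)=1$ gives the value $\exp\bigl(\tfrac{\eta}{2} + \tfrac12(\Psi(0)+\Psi(1))\bigr)$. By Jensen (concavity of $\log$ and the arithmetic–geometric mean inequality), $\tfrac12(e^{\Psi(0)}\cdot e^{\Psi(1)})^{\cdot}$ is bounded by $\En_{\varepsilon_t}\exp(\Psi(\tfrac{1+\varepsilon_t}{2}))$. The $\tfrac{\eta}{2}$ cancels the $-\tfrac12$ inside. This gives, pointwise in $X_t$,
\[
\text{LHS}(X_t) \le \En_{\varepsilon_t}\En_{\seq X_{t+1:T}\sim\dist(\cdot\mid\seq X_{1:t}),\boldsymbol\varepsilon_{t+1:T}}\exp\Bigl(\eta\sup_f\bigl[A(f)+\varepsilon_t\tfrac{2f(X_t)-1}{2}\bigr]\Bigr).
\]

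Finally we take $\En_{X_t\sim\dist^\star_t(\cdot\mid\seq X_{1:t-1})}$. Because $\dist=\dist^\star$, the pair $(X_t,\seq X_{t+1:T})$ is distributed exactly as $\seq X_{t:T}\sim\dist(\cdot\mid\seq X_{1:t-1})$. The right-hand side then becomes precisely $\exp(\eta\rel(\seq S_{1:t-1}))$, which is the claimed inequality. For the supermartingale statement, multiply both sides by $\exp(\eta\bar L_{t-1})$ and note that $\bar L_t = \bar L_{t-1} + \bar\ell(q_t,Y_t)$.

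The main obstacle is the min–max step: we must verify that the equalizing $q$ lies in $[0,1]$, via the Lipschitz bound $|\Psi(1)-\Psi(0)|\le\eta$, and then apply the AM–GM/Jensen bound correctly. If the crossing argument turns out to be awkward, the fallback is von Neumann minimax over $q$ and a distribution $p$ on $y$, followed by the usual symmetrization that bounds $\max_p$ by $p=\Ber(\cdot)$ and a Rademacher sign. This fallback costs nothing here, since $\exp$ of a sup is convex in the relevant linear term.
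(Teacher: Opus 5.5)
Your proof is correct and matches the paper's argument: the crossing-point $q$ you construct is the closed form of \cref{prop:closed-form}, and its value $\exp\bigl(\tfrac{\eta}{2}+\tfrac12(\Psi(0)+\Psi(1))\bigr)$ is bounded via Jensen/AM--GM in $\varepsilon_t$ (the paper's $\En_{\varepsilon_t}\log\le\log\En_{\varepsilon_t}$), after which exactness $\dist=\dist^\star$ folds $X_t$ into the rollout to give $\exp(\eta\rel(\seq S_{1:t-1}))$. One notational fix: the garbled expression $\tfrac12(e^{\Psi(0)}\cdot e^{\Psi(1)})^{\cdot}$ should read $(e^{\Psi(0)}e^{\Psi(1)})^{1/2}\le\tfrac12(e^{\Psi(0)}+e^{\Psi(1)})$.
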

The above allows us to rectify the change-of-measure strategy from~\cref{eq:dv}, since $\Phi_T$ controls the exponential moment of regret, and since $\En \Phi_T \le \Phi_0$ from the supermartingale property.
We arrive at the following lemma.
\begin{restatable}{lemma}{IdealizedRegret}
    \label{lemma:idealized-regret}
    Let $\cF$ be a VC dimension $d$ class. Then, for every $\eta > 0$, an algorithm that plays according to~\cref{eq:relaxation-mgf-strategy} achieves the following expected regret bound in the agnostic setting:
\[
\En_{\dist^\star} \reg(\cF,T) \lesssim \sqrt{dT} + \frac{1}{\eta} \KL{\dist^\star}{\dist} + \eta T.
\]
\end{restatable}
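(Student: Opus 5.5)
Writing $R := \bar L_T - \inf_{f\in\cF} L_T(f)$ for the pseudo-regret, we have $\En \reg(\cF,T) = \En_{\dist^\star} R$, since the realized losses differ from $\bar L_T$ only by a martingale with zero mean. The plan is the change of measure \cref{eq:dv}, applied to $R$ with the same $\eta$ that parameterizes the relaxation:
\[
\En_{\dist^\star} R \le \frac{1}{\eta}\log \En_{\dist} e^{\eta R} + \frac{1}{\eta}\KL{\dist^\star}{\dist}.
\]
Here $\En_{\dist}$ means the covariates are drawn from the simulator $\dist$ and the learner plays \cref{eq:relaxation-mgf-strategy}. The labels are a fixed function $Y_t=f^\star(X_t)$ by \cref{assumption:f-star}. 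Since \cref{lemma:rel-admissibility} takes a supremum over $Y_t$, any label choice is covered. Note that $\Phi_T = e^{\eta R}$.

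\textbf{Exponential moment.} Under $\dist$ the simulator is exact relative to the playout distribution used in $\rel$. So \cref{lemma:rel-admissibility}, applied with $\dist^\star$ replaced by $\dist$, shows that $\Phi_t$ is a supermartingale along $\dist$. Iterating the tower property from $t=T$ down to $t=1$ gives
\[
\En_{\dist} e^{\eta R} \le \Phi_0 = \exp(\eta\,\rel(\emptyset)).
\]
Therefore $\frac1\eta \log\En_\dist e^{\eta R} \le \rel(\emptyset)$.

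\textbf{Bounding $\rel(\emptyset)$.} By definition,
\[
\rel(\emptyset)=\frac1\eta\log\En_{\seq X\sim\dist}\En_{\boldsymbol\varepsilon}\exp\left(\eta Z\right), \qquad Z := \sup_{f}\sum_{s}\varepsilon_s\frac{2f(X_s)-1}{2}.
\]
Condition on $\seq X$. Then $Z$ is a function of independent signs, and changing one $\varepsilon_s$ changes $Z$ by at most $1$. McDiarmid's bounded-differences MGF bound gives
\[
\log\En_{\boldsymbol\varepsilon} e^{\eta Z}\le \eta\En_{\boldsymbol\varepsilon}Z+\eta^2T/8.
\]
By Dudley's bound or Sauer--Shelah with Massart's lemma (the former avoids a log factor), $\En_{\boldsymbol\varepsilon} Z\lesssim\sqrt{dT}$ uniformly over $\seq X$. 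Hence $\En_{\boldsymbol\varepsilon} e^{\eta Z}\le \exp(c\eta\sqrt{dT}+\eta^2T/8)$ for every $\seq X$. Averaging over $\seq X\sim\dist$ preserves this pointwise bound, so
\[
\rel(\emptyset)\lesssim\sqrt{dT}+\eta T.
\]
Combining the three displays yields the claim.

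\textbf{Main obstacle.} The step that needs care is the supermartingale step. The expectation in $\En_\dist e^{\eta R}$ is over the trajectory under $\dist$, with labels adversarial but fixed functions of the covariates. The point is that $\sup_{Y_t}$ in \cref{lemma:rel-admissibility} dominates whatever $Y_t$ is realized, so $\En[\Phi_t\mid \seq S_{1:t-1}]\le\Phi_{t-1}$ holds for the realized labels. One also has to check that the learner's randomness has been integrated into $\bar L_t$, so that $\Phi_t$ depends only on $\seq S_{1:t}$; it does, since $\bar L_t$ uses $\En_{\hat Y_s\sim q_s}$. The remaining work is justifying the reduction from realized regret to $R$, which is linear and therefore unaffected by the change of measure.
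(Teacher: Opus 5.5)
Your proposal is correct and follows essentially the same route as the paper. Both arguments apply the Donsker--Varadhan change of measure to $\dist$, then use the supermartingale property of $\Phi_t$ from \cref{lemma:rel-admissibility} (with the simulator exact) to get $\En_{\dist}\Phi_T\le\Phi_0$, and finally combine a bounded-differences MGF bound with the $O(\sqrt{dT})$ Rademacher bound, conditional on $\seq X$, to show $\frac{1}{\eta}\log\Phi_0\lesssim\sqrt{dT}+\eta T$. Your added remarks, on passing from realized regret to $\bar L_T-\inf_f L_T(f)$ and on $\sup_{Y_t}$ covering the realized labels, fill in details the paper leaves implicit.
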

After balancing $\eta$, we obtain $\sqrt{T \KL{\dist^\star}{\dist}}$-type dependence on $\KL{\dist^\star}{\dist}$, which is optimal in general (\cref{prop:approx-agnostic-lb}). It remains to show how the rule in~\cref{eq:relaxation-mgf-strategy} can be implemented efficiently.

\paragraph{Efficient implementation.} We first express the solution to the minimax problem in~\cref{eq:relaxation-mgf-strategy} in closed form in terms of exponential moments of a certain stochastic process. For notational brevity, let $\seq W^t = (\seq X_{t+1:T}, \boldsymbol{\varepsilon}_{t+1:T})$ be the random variable summarizing the randomness left after $X_t$ is revealed. 
\begin{restatable}{proposition}{ClosedFormStrategy}
    \label{prop:closed-form}
    For a history $\seq S_{1:t}$ and a covariate $X_t$, let
    \begin{equation}
        \label{eq:ayw}
    \off{y}(\seq W^t) :=
    \sup_{f\in \cF}\left[ \sum_{s=t+1}^T \varepsilon_s \frac{2f(X_s)-1}{2} - L_{t-1}(f) - \ind \{f(X_t) \neq y\} \right].
    \end{equation}
    Then, $q_t$ as in~\cref{eq:relaxation-mgf-strategy} can be written as
    \[
    q_t(\seq S_{1:t-1}, X_t) = \frac{1}{2} + \frac{1}{2\eta} \log \left(\frac{\En \exp(\eta \off{1}(\seq W^t))}{\En \exp(\eta \off{0}(\seq W^t))}\right)
    \]
\end{restatable}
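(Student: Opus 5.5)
The plan is to write the objective in \cref{eq:relaxation-mgf-strategy} as the maximum of two affine functions of $q$ and pick the point where they are equal. I identify $q\in[0,1]$ with the probability that $\hat Y=1$. Then $\En_{\hat Y\sim q}\ind\{\hat Y\neq Y_t\}$ equals $1-q$ when $Y_t=1$ and $q$ when $Y_t=0$.

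Next I rewrite $\rel(\seq S_{1:t})$ for each label $y=Y_t$. Since $L_t(f)=L_{t-1}(f)+\ind\{f(X_t)\neq y\}$, the expression inside the exponential in \cref{eq:rel-log-mgf} is exactly $\off{y}(\seq W^t)$ from \cref{eq:ayw}. Define $A_y:=\frac1\eta\log\En\exp(\eta\off{y}(\seq W^t))$ for $y\in\{0,1\}$; this does not depend on $q$. The minimax problem then becomes
\[
\min_{q\in[0,1]}\max\bigl\{1-q+A_1,\; q+A_0\bigr\}.
\]
Here the first term decreases in $q$ and the second increases in $q$, both with slope of magnitude one.

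The maximum of a decreasing and an increasing affine function is minimized where they are equal, provided that point lies in $[0,1]$. Setting $1-q+A_1=q+A_0$ gives
\[
q=\tfrac12+\tfrac12(A_1-A_0),
\]
which is the claimed formula.

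The only step needing care is checking that this $q$ lies in $[0,1]$, i.e.\ that $|A_1-A_0|\le 1$. I would argue this pointwise in $\seq W^t$. The bracketed objectives defining $\off{1}$ and $\off{0}$ differ, for each fixed $f$, by $\ind\{f(X_t)\neq 0\}-\ind\{f(X_t)\neq 1\}\in\{-1,1\}$. Hence their suprema satisfy $|\off{1}-\off{0}|\le 1$ pointwise. It follows that
\[
e^{-\eta}\,\En e^{\eta\off{0}}\;\le\;\En e^{\eta\off{1}}\;\le\; e^{\eta}\,\En e^{\eta\off{0}},
\]
and taking $\frac1\eta\log$ of the ratio gives $|A_1-A_0|\le1$.

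Therefore the equalizing $q$ is feasible, and it is the argmin. If the argmin were not unique, any convention would do, but here the two slopes are $\pm1$, so the minimizer is unique.
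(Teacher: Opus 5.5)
Your proposal is correct and follows the paper's proof. Both arguments rewrite $\rel(\seq S_{1:t})$ via $L_t(f)=L_{t-1}(f)+\ind\{f(X_t)\neq y\}$, equalize the two branches $1-q+A_1$ and $q+A_0$ to obtain $q=\tfrac12+\tfrac12(A_1-A_0)$, and check feasibility using the pointwise bound $|\off{1}-\off{0}|\le 1$ followed by exponentiation and taking expectations.
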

We will attempt to estimate $q_t$ using polynomially many conditional samples from $\dist$. A priori, it is non-obvious that this is possible since $\exp(\eta \off{1}(\seq W^t))$ can be as large as $\exp(\eta T) \asymp \exp(\sqrt{T})$ in the relevant temperature regime $\eta \asymp 1/\sqrt{T}$. However, we will show that unboundedness is not an issue for two reasons: (1) in the relevant temperature regime, the $\exp(\eta \off{1}(\seq W^t))$ has favorable concentration properties, (2) we only need to estimate the difference of logs of expectations, so the common offset term in the estimation cancels out.

Concretely, we consider a natural plug-in estimator for $\hat q_t$: for i.i.d. samples $\widehat{\seq W}_1^t,\ldots, \widehat{\seq W}_N^t$, we define our estimator as:
\begin{equation}
\label{eq:plug-in}
\hat q_t(\seq S_{1:t-1}, X_t) = \frac{1}{2} + \frac{1}{2\eta} \log \left(\frac{\frac{1}{N} \sum_{j = 1}^N \exp(\eta \off{1}(\widehat{\seq W}^t_j))}{\frac{1}{N} \sum_{j = 1}^N \exp(\eta \off{0}(\widehat{\seq W}^t_j))}\right).
\end{equation}
Note that the above can be computed using $N$ conditional oracle queries to $\dist$ and $O(N)$ agnostic ERM oracle calls. Then, we have the following bound on the MSE of the estimator. 
\begin{restatable}{lemma}{MSEBound}
    \label{lemma:mse-bound}
    For an arbitrary history $\seq S_{1:t-1}$, an arbitrary covariate $X_t$, and $\eta\le 1$, we have
    \[
    \En\left[(\hat q_t - q_t)^2\right] \lesssim \frac{\nvar}{N},
    \quad \text{where} \quad
    \nvar := \max_{y \in \{0,1\}} \frac{\En \exp(2\eta \off{y}(\seq W^t))}{\left(\En \exp(\eta \off{y}(\seq W^t))\right)^2},
    \]
    is the normalized variance of the exponential moment.
\end{restatable}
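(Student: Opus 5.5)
The plan is to reduce the error of $\hat q_t$ to the relative errors of two empirical means of positive random variables, and then apply a second-moment bound to each. For $y\in\{0,1\}$ write $\mu_y := \En \exp(\eta \off{y}(\seq W^t))$ and $\hat\mu_y := \frac1N\sum_{j}\exp(\eta\off{y}(\widehat{\seq W}^t_j))$. Then
\[
\hat q_t - q_t = \frac{1}{2\eta}\left[\log\frac{\hat\mu_1}{\mu_1} - \log\frac{\hat\mu_0}{\mu_0}\right].
\]
This identity is exactly the cancellation of the common offset. The relative errors $R_y := \hat\mu_y/\mu_y - 1$ have mean zero and variance at most $\frac{1}{N}\En e^{2\eta\off{y}}/\mu_y^2\le \nvar/N$. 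Hence $\En R_y^2 \le \nvar/N$ for both $y$, by independence of the samples.

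\textbf{Main obstacle.} The prefactor $1/(2\eta)$ is large and the logarithm is unbounded near zero, so a crude linearization $|\log(1+R)|\lesssim |R|$ would only give $\nvar/(\eta^2N)$ and would fail where $\hat\mu_y$ is tiny. I would exploit structure specific to this problem instead. For every realization $\seq W$, the quantities $\off{1}(\seq W)$ and $\off{0}(\seq W)$ differ by at most $1$, since they differ only through the indicator $\ind\{f(X_t)\neq y\}\in\{0,1\}$ inside the same supremum. Hence $e^{-\eta}\le \hat\mu_1/\hat\mu_0\le e^{\eta}$ and $e^{-\eta}\le \mu_1/\mu_0\le e^\eta$ deterministically. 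So both $\hat q_t$ and $q_t$ lie in $[0,1]$, and $\Delta := \log(\hat\mu_1/\mu_1) - \log(\hat\mu_0/\mu_0)$ satisfies $|\Delta|\le 2\eta$ always.

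The first attempt is to use the elementary inequality
\[
|\log a - \log b| \le \frac{|a-b|}{\min(a,b)},
\]
applied to $a = \hat\mu_1/\hat\mu_0$ and $b=\mu_1/\mu_0$, both of which lie in $[e^{-\eta},e^{\eta}]\subset[e^{-1},e]$ because $\eta\le1$. This gives
\[
|\Delta|\lesssim \left|\frac{\hat\mu_1}{\hat\mu_0}-\frac{\mu_1}{\mu_0}\right| .
\]
A ratio can still blow up when $\hat\mu_0$ is small, so I would instead split on the event $E:=\{|R_0|\le 1/2,\ |R_1|\le1/2\}$. On $E$, a Taylor expansion of the logarithm gives $|\Delta|\le 2(|R_0|+|R_1|)$. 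On $E^c$, I use the deterministic bound $|\Delta|\le 2\eta$ together with Chebyshev, $\P(E^c)\le 8\nvar/N$.

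Combining the two cases:
\[
\En(\hat q_t-q_t)^2=\frac{1}{4\eta^2}\En\Delta^2 \le \frac{1}{4\eta^2}\left(8\,\En(R_0^2+R_1^2)+4\eta^2\P(E^c)\right).
\]
The second term is $\lesssim \nvar/N$. The first term carries a $1/\eta^2$ factor.

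To remove that factor I would refine the analysis on $E$ by exploiting $|\off1-\off0|\le1$ a second time. Write $e^{\eta\off1}=e^{\eta\off0}(1+\delta)$ with $|\delta|\le e^\eta-1\lesssim\eta$ pointwise. Then $\hat\mu_1/\hat\mu_0-\mu_1/\mu_0$ equals the deviation of a weighted empirical mean of $\delta$, with weights proportional to $e^{\eta\off0}$, from its population counterpart. That deviation has second moment $\lesssim \eta^2\nvar/N$ on $E$, by the same Chebyshev-type computation with the weights normalized by $\mu_0$. Dividing by $\eta^2$ then yields $\En(\hat q_t-q_t)^2\lesssim \nvar/N$. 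This self-normalized ratio step is where I expect the main technical care to be needed.
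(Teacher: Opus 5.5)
Your proposal is correct and, once the refinement in your final paragraph is carried out, it is essentially the paper's proof. The paper uses the $e^{\eta}$-Lipschitzness of $\log$ on $[e^{-\eta},e^{\eta}]$ to reduce to $\hat\mu_1/\hat\mu_0-\mu_1/\mu_0$, which it writes as $\hat\mu_0^{-1}\cdot\frac1N\sum_j\bigl(\exp(\eta\off{1}(\widehat{\seq W}^t_j))-\tfrac{\mu_1}{\mu_0}\exp(\eta\off{0}(\widehat{\seq W}^t_j))\bigr)$; this is exactly your centered weighted mean of $\delta$, with mean-zero summands bounded by $(e^{\eta}-e^{-\eta})\exp(\eta\off{0}(\widehat{\seq W}^t_j))$, the denominator controlled by Chebyshev on $\{\hat\mu_0\ge\mu_0/2\}$, and the complement handled by $\hat q_t,q_t\in[0,1]$. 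Your intermediate Taylor bound via $R_0,R_1$ (and the condition $|R_1|\le 1/2$ in $E$) is unnecessary, and the ratio $\hat\mu_1/\hat\mu_0$ itself can never blow up because it always lies in $[e^{-\eta},e^{\eta}]$. What the self-normalized step actually needs is only the lower bound on $\hat\mu_0$.
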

It now remains to upper bound $\nvar$. Intuitively, both $\off{1}$ and $\off{0}$ have a form of an Rademacher process with an offset of $L_{t-1}(f)$. Since $\nvar$ is normalized, the impact of the offset in the numerator and the denominator cancels out, and we can upper bound $\nvar$ as follows.
\begin{restatable}{lemma}{VEtaBound}
\label{lemma:v-eta}
    For an arbitrary history $\seq S_{1:t-1}$, an arbitrary covariate $X_t$, and $\eta\le 1$, we have
    \[
    \nvar \le \exp(2\eta) \En \exp\left(2\eta \sup_{f} \sum_{s= t+1}^T \varepsilon_s \frac{2f(X_s) - 1}{2}\right) \lesssim 
    \exp\left(O(\eta \sqrt{dT} + \eta^2 T)\right).
    \]
\end{restatable}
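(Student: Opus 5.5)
The plan is to prove the two inequalities separately. The first is an algebraic comparison: it removes the offset $L_{t-1}(f)+\ind\{f(X_t)\neq y\}$ from both the numerator and the denominator of $\nvar$. The second is a standard sub-Gaussian bound on the supremum of a Rademacher process over a VC class. Throughout, fix $y\in\{0,1\}$, the history $\seq S_{1:t-1}$ and $X_t$. Write
\[
R(f) := \sum_{s=t+1}^T \varepsilon_s \tfrac{2f(X_s)-1}{2}, \qquad c_y(f) := L_{t-1}(f) + \ind\{f(X_t)\neq y\} \ge 0 .
\]
Then $\off{y} = \sup_f [R(f) - c_y(f)]$. The offset $c_y$ depends only on the fixed history, not on $\seq W^t$.

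\textbf{Step 1 (offset cancellation).} Let $c^\ast := \inf_f c_y(f)$. Since $c_y$ is integer valued, the infimum is attained by some $f_0\in\cF$. This $f_0$ is fixed: it does not depend on $\seq W^t$.
\begin{itemize}
\item Upper bound for the numerator. We have $\off{y} \le \sup_f R(f) - c^\ast$, so
\[
\En \exp(2\eta \off{y}) \le e^{-2\eta c^\ast}\, \En \exp\bigl(2\eta \sup_f R(f)\bigr).
\]
\item Lower bound for the denominator. We have $\off{y} \ge R(f_0) - c^\ast$. Conditional on $\seq X_{t+1:T}$, the variable $R(f_0)$ is a symmetric Rademacher sum, so $\En[R(f_0)\mid \seq X_{t+1:T}]=0$. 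Jensen's inequality then gives
\[
\En \exp(\eta \off{y}) \ge e^{-\eta c^\ast}\, \En\exp(\eta R(f_0)) \ge e^{-\eta c^\ast}.
\]
\end{itemize}
Squaring the denominator bound and dividing, the factors $e^{-2\eta c^\ast}$ cancel. This yields $\nvar \le \En \exp(2\eta\sup_f R(f))$, and in particular the stated bound with the slack factor $\exp(2\eta)$. That slack would also absorb any change in the offset of size at most $1$, for example if one works with $L_{t-1}$ alone instead of $c_y$ (the indicator contributes at most $1$ to the offset, hence a factor $e^{\eta}$ in each of the two directions).

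\textbf{Step 2 (MGF of the Rademacher supremum).} Condition on $\seq X_{t+1:T}$ and set $Z := \sup_f R(f)$. As a function of $\boldsymbol\varepsilon_{t+1:T}$, flipping one sign changes $Z$ by at most $1$. McDiarmid's (bounded differences) inequality in MGF form therefore gives, for every $\lambda>0$,
\[
\log \En\bigl[e^{\lambda (Z - \En[Z\mid \seq X])}\bigm| \seq X\bigr] \le \lambda^2 T/8 .
\]
For the conditional mean, the projection of $\cF$ onto $\seq X_{t+1:T}$ has VC dimension at most $d$. By Haussler's packing bound combined with Dudley chaining (or the classical $\sqrt{dT}$ bound for Rademacher complexity of VC classes), $\En[Z\mid\seq X] \lesssim \sqrt{dT}$ uniformly over all realizations of $\seq X$. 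One could also settle for the cruder Sauer--Shelah plus Massart estimate $\sqrt{dT\log T}$, at the cost of a logarithm. Taking $\lambda = 2\eta$ and then the outer expectation over $\seq X_{t+1:T}\sim \dist(\cdot\mid \seq X_{1:t})$ gives
\[
\En\exp(2\eta Z) \le \exp\bigl(O(\eta\sqrt{dT} + \eta^2 T)\bigr).
\]
Since $\eta \le 1$, the factor $e^{2\eta}$ is a constant, which completes the chain of inequalities.

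\textbf{Main obstacle.} The only delicate point is Step 1, specifically the lower bound on the denominator. It is important that the comparator $f_0$ is chosen using only the offset, and not $\seq W^t$. That choice is what lets Jensen's inequality kill the Rademacher term, with no dependence on the (arbitrary, possibly non-mixing) distribution of $\seq X_{t+1:T}$. Step 2 is routine, except that we must make sure the chaining bound holds uniformly in the realized covariates, so that no assumption on $\dist$ is needed.
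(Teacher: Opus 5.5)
Your proposal is correct and follows the paper's argument: you cancel the offset by bounding $\off{y}$ above by the unpenalized Rademacher supremum and the denominator below via Jensen at a minimizer fixed independently of $\seq W^t$, then apply bounded differences together with the uniform $\sqrt{dT}$ Rademacher bound for VC classes. The only minor difference is that you fold $\ind\{f(X_t)\neq y\}$ into the offset, whereas the paper centers at $\min_f L_{t-1}(f)$ alone; your version removes the $\exp(2\eta)$ factor that the paper incurs, which is harmless here since $\eta \le 1$.
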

Combining everything, we have the following regret bound.
\begin{restatable}{lemma}{RegMGFIntermediate}
\label{lemma:reg-mgf-intermediate}
Let $\cF$ be a VC dimension $d$ class. Then, an algorithm that plays according to~\cref{eq:plug-in} with $N$ samples per step and inverse temperature $\eta$ achieves the following regret bound
\begin{align*}
\En_{\dist^\star} \reg(\cF,T) \lesssim \sqrt{dT} + \eta T + \frac{1}{\eta} \KL{\dist^\star}{\dist} + T \sqrt{\frac{\exp(O(\eta \sqrt{dT} + \eta^2 T))}{N}}.
\end{align*}
\end{restatable}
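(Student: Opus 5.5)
The plan is to compare the plug-in strategy $\hat q_t$ of~\cref{eq:plug-in} with the idealized strategy $q_t$ of~\cref{eq:relaxation-mgf-strategy}. For the idealized strategy, \cref{lemma:idealized-regret} already gives $\sqrt{dT} + \eta T + \frac1\eta \KL{\dist^\star}{\dist}$, so it remains to account for the estimation error.

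First I decompose the learner's expected loss. At round $t$, the plug-in learner predicts $\hat Y_t \sim \hat q_t$, where $\hat q_t$ is clipped to $[0,1]$. Clipping can only reduce the distance to $q_t \in [0,1]$. Conditionally on the history, $X_t$ and the sampled $\widehat{\seq W}^t_j$, we have
\[
\left|\En_{\hat Y\sim \hat q_t}\ind\{\hat Y\neq Y_t\} - \En_{\hat Y\sim q_t}\ind\{\hat Y\neq Y_t\}\right| \le |\hat q_t - q_t|.
\]
Summing over $t$ gives
\[
\En\left[\sum_t \ind\{\hat Y_t \neq Y_t\}\right] \le \En\left[\bar L_T\right] + \sum_t \En|\hat q_t - q_t|,
\]
where $\bar L_T$ is the predictable loss that would be incurred if $q_t$ were played at each step, evaluated along the realized trajectory.

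The key step is the supermartingale argument behind \cref{lemma:rel-admissibility} and \cref{lemma:idealized-regret}. That argument is pointwise in the history: for every history $\seq S_{1:t-1}$, the minimax choice $q_t$ makes $\sup_{Y_t}$ of the potential increment at most $\Phi_{t-1}$ in conditional expectation over $X_t \sim \dist(\cdot\mid\seq X_{1:t-1})$. It therefore holds regardless of which actions were actually played in the past, because $\rel$ depends only on $(\seq X,\seq Y)$ and not on past predictions. So the Donsker--Varadhan change of measure~\cref{eq:dv} applied to $\bar L_T - \inf_f L_T(f)$ still yields the bound of \cref{lemma:idealized-regret} along the trajectory generated under the plug-in strategy. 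Here I use that labels are oblivious, or more generally not influenced by $\hat q$ beyond the history, as in the setting of that lemma.

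Next I bound the estimation error. By Jensen's inequality and \cref{lemma:mse-bound}, $\En|\hat q_t - q_t| \le \sqrt{\nvar/N}$, applied conditionally on the history and $X_t$. \cref{lemma:v-eta} then gives, uniformly over histories, $\nvar \le \exp(O(\eta\sqrt{dT} + \eta^2 T))$. Summing over $t \le T$ gives the last term, $T\sqrt{\exp(O(\eta\sqrt{dT}+\eta^2T))/N}$.

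The main obstacle is the second step: certifying that the supermartingale and change-of-measure argument is unaffected by playing $\hat q_t$ instead of $q_t$. The argument must be phrased with $\bar L_T$ defined via $q_t$ evaluated on the realized data, rather than the actual played distribution. One also has to check that the randomness in the sampled $\widehat{\seq W}$ does not enter the data distribution under $\dist^\star$. It does not, because covariates are drawn from $\dist^\star$ independently of the learner, and labels are fixed by \cref{assumption:f-star}. If $\eta > 1$ is ever needed, \cref{lemma:mse-bound} does not apply, but the stated bound is trivial in that regime, since its $\eta T$ term already exceeds $T$.
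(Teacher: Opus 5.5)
Your proposal is correct and follows the paper's proof. You bound the idealized strategy via \cref{lemma:idealized-regret} and pay $\sum_t \En|\hat q_t - q_t|$ through Lipschitzness of the loss in the prediction probability, controlled by Jensen, \cref{lemma:mse-bound} and \cref{lemma:v-eta}. Your extra care about evaluating $\bar L_T$ with the idealized $q_t$ along the learner-independent data trajectory makes explicit what the paper compresses into ``regret is Lipschitz in the learner's strategy.'' The clipping you mention is a harmless no-op, since $\hat q_t \in [0,1]$ automatically by the same pointwise ratio bound used in \cref{prop:closed-form}.
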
 
Setting $\eta \asymp 1/\sqrt{dT}$ and $N = \poly(T)$ large enough, the above gives: 
\[
\En_{\dist^\star} \reg(\cF,T) \lesssim \sqrt{dT} (1 + \KL{\dist^\star}{\dist}).
\]
The above is short of the regret bound claimed in~\cref{thm:cond-oracle-eff}, due to the linear dependence on $\KL{\dist^\star}{\dist}$. In particular, the above is non-vacuous only when $\KL{\dist^\star}{\dist} \le o(\sqrt{T})$. The above rate can be further sharpened using epoching.

\paragraph{Epoching.}
The bound in~\cref{lemma:reg-mgf-intermediate} is obtained by performing rollouts from the distribution $\dist(\cdot \mid \seq X_{1:t})$ all the way to horizon $T$. When $\dist\neq \dist^\star$, this can be problematic: long rollouts from $\dist$ may quickly drift away from the typical future under $\dist^\star$, making the simulated continuation
uninformative. Intuitively, nature has roughly $\KL{\dist^\star}{\dist}$ bits of freedom, and thus, every $T/\KL{\dist^\star}{\dist}$ steps, the distribution $\dist^\star$ may deviate from $\dist$ significantly. With this intuition, the prediction rule in~\cref{eq:relaxation-mgf-strategy} effectively overfits to noise in the tail of the generation $\seq X_{t+1:T}$.

Epoching is a natural strategy for mitigating compounding errors and also appears in works on predictable sequences~\citep{raman2024online}.
Let $B_1,\ldots,B_K$ be a partition of $[T]$ into $K$ consecutive epochs of equal length $L:=T/K$, and let
$H_k := \bigcup_{i<k} B_i$. The key point is that KL
\emph{tensorizes} across time blocks: by the chain rule for KL divergence,
\[
\frac{1}{K}\sum_{k=1}^K
\En\KL{\dist^\star_{B_k}(\cdot \mid \seq X_{H_k})}{\dist_{B_k}(\cdot \mid \seq X_{H_k})}
=
\frac{1}{K}\KL{\dist^\star}{\dist},
\]
Thus, the average KL discrepancy across epochs is small. We now run the same relaxation-based update within each epoch, using rollouts from $\dist$ that only extend to the end of the current epoch. Applying~\cref{lemma:reg-mgf-intermediate} to each epoch separately gives the following upper bound on expected regret:
\begin{align*}
\sum_{k=1}^K \sqrt{dL}
+
\sum_{k=1}^K
\sqrt{dL} \cdot
\En\KL{\dist^\star_{B_k}(\cdot \mid \seq X_{H_k})}{\dist_{B_k}(\cdot \mid \seq X_{H_k})}=
\sqrt{dTK}
+
\sqrt{d T/K} \cdot \KL{\dist^\star}{\dist}.
\end{align*}
It remains to optimize over $K$. This can be done without the knowledge of $\KL{\dist^\star}{\dist}$ by running exponential weights over different choices of $K$. Combining these ideas yields a prediction strategy given in~\cref{algo:dist-transductive-reduction}.

\begin{algorithm2e}[hbt!]
\caption{Log-MGF relaxations with epoching}
\label{algo:dist-transductive-reduction}
\KwIn{Conditional sampling oracle $\condoracle_\dist$, number of epochs $K$, inverse temperature $\eta > 0$}
Let $L \gets T/K$ be the length of each epoch \tcp*{WLOG, assume $T$ is a multiple of $K$}
\For{$t \gets 1$ \KwTo $T$}{
  Let $k \gets \lceil t/L \rceil$ \tcp*{Index of the current epoch}
  Observe $X_{t}$\;
  Sample $\widehat{\seq W}^t_{i} \sim \dist_{t+1:kL}(\cdot \mid \seq X_{1:t})\otimes \Unif(\{\pm 1\}^{kL-t})$ for $i \in [N]$ \tcp*{$N$ calls to $\condoracle_\dist$}
  Compute $\off{1}(\widehat{\seq W}^t_i)$ and $\off{0}(\widehat{\seq W}^t_i)$ for $i \in [N]$ as in~\cref{eq:ayw} for epoch $k$ \tcp*{$2N$ calls to ERM oracle}
  Compute $\hat q_t$ as in~\cref{eq:plug-in} \;
  Predict $\hat Y_t \sim \hat q_t$\;
  Observe $Y_t$ and suffer loss $\ind\left[\hat Y_t \neq Y_t\right]$\;
}
\end{algorithm2e}

\section{Learning under Families of Processes}

\label{sec:sim-families}

Given our results thus far, the next natural question is: What if no natural simulator for $\dist^\star$ is available? Learning under all $\dist^\star$ simultaneously is impossible (it reduces to classical online learning), but in most settings some information about $\dist^\star$ is available. We model this by assuming $\dist^\star$ lies in some family $\cP$ of processes. Then, the learner can \emph{construct} a simulator $\dist$ for $\cP$ and run either~\cref{algo:multi-cover} or~\ref{algo:dist-transductive-reduction}. A natural choice of $\dist$ is the (approximate) solution to the following minimax optimization problem:
\begin{equation}
\label{eq:univ-compression}
\cR(\cP) := \min_{\dist} \max_{\dist^\star\in \cP} \KL{\dist^\star}{\dist}.
\end{equation}
This quantity is known as \emph{redundancy} in the information theory literature~\citep{polyanskiy2025information}, and it characterizes the universal compression rate with sources coming from the family $\cP$.
This reconceptualizes our results through the lens of compression:
\begin{center}
    \parbox{0.8\textwidth}{\centering \emph{Learning is possible under universally compressible families of sources.}}
\end{center}
The quantity $\cR(\cP)$ has well-known closed forms for many families of parametric processes~\cite{polyanskiy2025information,atteson1999asymptotic}. Combining~\cref{thm:cond-oracle-eff} and~\cref{lemma:linear-kl-lb}, we have the following learnability result in terms of $\cR(\cP)$.
\begin{restatable}{corollary}{LearnabilitySublinearKL}
\label{cor:sim-families}
Let $\cP$ be a family of processes with $\cR(\cP) \le o(T)$, and $\cF$ a bounded VC dimension class. Then, there exists a (possibly inefficient) algorithm that achieves sublinear agnostic regret for every $\dist^\star \in \cP$. Moreover, for every $D \in \bbN$, there exists $\cP$ with $\cR(\cP) \le D \land T$ and a VC class of dimension $1$, such that any algorithm suffers regret $\Omega(D \land T)$ even in the realizable setting.
\end{restatable}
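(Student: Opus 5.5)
The plan has two parts, matching the two claims of \cref{cor:sim-families}.

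\emph{Upper bound.} Take the simulator $\dist$ to be a minimizer (or near-minimizer, up to an additive $1$) of the redundancy problem \cref{eq:univ-compression}. Then $\KL{\dist^\star}{\dist} \le \cR(\cP) + 1$ holds for every $\dist^\star \in \cP$. Conditional samples from $\dist$ exist as mathematical objects, and efficiency is not claimed. So we may run \cref{algo:dist-transductive-reduction} with the conditional oracle $\condoracle_\dist$ and an (inefficient) agnostic ERM oracle for $\cF$. \cref{thm:cond-oracle-eff} then gives
\[
\En \reg(\cF,T) \lesssim \sqrt{dT(1+\cR(\cP))}
\]
uniformly over $\dist^\star \in \cP$. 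Since $d = O(1)$ and $\cR(\cP) = o(T)$, this is $\sqrt{T \cdot o(T)} = o(T)$.

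Two measurability points need a brief remark. First, the minimum in \cref{eq:univ-compression} may not be attained, which is why we take an $\epsilon$-minimizer. Second, if $\cX$ is infinite, we note that \cref{thm:cond-oracle-eff} never uses efficiency of the oracles for its regret bound. Both remarks are routine.

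\emph{Lower bound.} Here we invoke \cref{lemma:linear-kl-lb}, which I expect supplies, for each $D$, a pair of distributions $(\dist,\dist^\star)$ with $\KL{\dist^\star}{\dist}\le D$ together with a VC-dimension-$1$ class forcing regret $\Omega(D\land T)$. If instead it is stated for a family, we use it directly.

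To convert a KL-based lower bound into a redundancy-based one, the natural construction is as follows. Let $\cP$ consist of the processes that place the first $D\land T$ covariates as fresh uniformly random points on distinct unseen regions of $\cX$, for instance the $2^{D}$ deterministic sequences encoding $D$ bits, and that are arbitrary but fixed afterward. Take $\cF$ to be thresholds or singletons, with labels chosen so that each of the first $D$ rounds is a coin flip for the learner. The uniform mixture over $\cP$ witnesses $\cR(\cP)\le \log|\cP| \le D$. Meanwhile, a standard randomized-adversary argument, with $f^\star$ drawn uniformly among consistent hypotheses, gives expected mistakes $\ge (D\land T)/2$ while $f^\star\in\cF$ has zero loss. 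This matches the ``$D$ bits of freedom'' intuition.

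\emph{Main obstacle.} The main difficulty is ensuring the lower-bound family simultaneously has small redundancy and keeps a realizable VC-$1$ class unpredictable. With thresholds on $[0,1]$, we handle this by letting the sequence perform binary search, where each covariate is the midpoint of the current consistent interval. The process family is then indexed by $f^\star$'s first $D$ bits, so $|\cP|=2^D$, and each label is a fresh unbiased bit given the past. In the formal write-up I would defer this to \cref{lemma:linear-kl-lb}.
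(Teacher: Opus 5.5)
Your proposal is correct and follows the paper's argument. The upper bound instantiates \cref{thm:cond-oracle-eff} with a (near-)minimizer of the redundancy. The lower bound is exactly \cref{lemma:linear-kl-lb}: point masses on the $2^{D\land T}$ root-to-leaf paths of a threshold Littlestone tree (your binary-search construction), whose uniform mixture certifies $\cR(\cP)\le D\land T$.

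One small slip: drop the ``or singletons'' alternative. Singletons have Littlestone dimension $1$, so they cannot force more than one realizable mistake.
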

To achieve sublinear regret in the above, it suffices to instantiate~\cref{algo:dist-transductive-reduction} with the simulator $\dist$ taken to be the (approximate) minimizer of~\cref{eq:univ-compression}; in the realizable case, we may instead run~\cref{algo:multi-cover} with $\dist$ as the simulator. Sampling such $\dist$ need not be efficient, however: to obtain efficient algorithms, our framework requires efficient sampling (either conditional or unconditional) from $\dist$. As we detail below, this requirement can be satisfied for many common families of processes.

\subsection{Learning Against Computationally Bounded Nature}

\label{sec:kolmogorov}

Consider an instantiation of the above problem when $\cP$ consists of all polynomial-time samplable processes. This assumption can be broadly motivated from the beyond-worst-case theory of online learning. Indeed, positing that nature runs in bounded polynomial time is one way to quantify the intuitive belief that nature is simple and not worst-case.
A priori this class seems too large to say anything interesting about.
Surprisingly, we show that this is not the case.
Specifically, in this section we show that, for any bound on nature's running time, there exists an algorithm that, for any fixed VC dimension $d$, runs in time polynomial in $T$ and in nature's running time, and achieves an optimal bound with respect to all such strategies of nature. To accomplish this, at a conceptual level, we use complexity-theoretic analogues of coding theorems that establish existence of ``universal'' compressors for time-bounded polynomial processes.

Specifically, consider an online learning game over a countable domain $\mathcal{X}$, which we take to be the set of all finite binary strings (i.e., $\mathcal{X}=\{0,1\}^*$). We let $\Sigma := \{0,1,\#\}$ be the alphabet, where $\#$ is a special delimiter symbol. In this section, we encode elements $\seq x \in \cX^\infty$ as delimiter-separated infinite binary strings. Formally, for $x \in \cX$, let $\langle x \rangle := x \#$, and, for any $\seq x \in \cX^\infty$, let $\langle \seq x \rangle := \langle x_1\rangle \langle x_2\rangle \ldots$ be the encoding.
Accordingly, we view any distribution $\dist \in \Delta(\mathcal{X}^\infty)$ as a distribution on $\Sigma^\infty$ and let $\dist_{1:t}$ denote the distribution of the encoded length-$t$ prefix $\langle \seq X_{1:t} \rangle$.

The complexity-theoretic language in which we state our results is similar to one used in~\cite{hirahara2023learning}. We begin by defining a notion of polytime samplable distributions. Let $1^t$ denote an all-$1$ string of length $t$. Intuitively, a distribution $\dist$ is $p$-time samplable, if the first $t$ elements can be sampled in $p(t)$ time. In the remainder, fix a universal prefix-free Turing machine $U$ that has a write-only tape where the output is recorded.
For a program $\pi \in \{0,1\}^*$ (with $|\pi| \ge 1$), and input $s \in \Sigma^*$, let $U^t(\pi, s)$ denote the contents of the output tape of $U$ after simulating $\pi$ on $s$ for $t$ steps.
\begin{definition}[Time-bounded samplable distribution]
\label[definition]{def:samplable}
    Let $p \colon \mathbb N \to \mathbb N$ be a time bound. Then, distribution $P \in \Delta(\Sigma^\infty)$ is $p$-time samplable iff there exists a program $\pi$ such that, for every $t \ge 1$ and $s \sim \Unif(\{0,1\}^{p(t)})$, we have $ U^{p(t)}(\pi, 1^{t}\# s) \sim P_{1:t}.$
    For a distribution $P \in \Delta(\Sigma^\infty)$, let
    \[
    K^p(P) = \min\left\{|\pi| \colon U^{p(t)}(\pi, 1^{t}\# s) \sim P_{1:t} \text{ for every } t \ge 1, \text{ where } s \sim \Unif(\{0,1\}^{p(t)}) \right\}
    \]
    be the (time-bounded) Kolmogorov complexity of $P$ if $P$ is $p$-time samplable.
\end{definition}
It is well-appreciated in the Kolmogorov complexity literature that there exist \emph{universal} distributions that dominate every other time-bounded samplable distribution~\citep{antunes2006computational,antunes2009worst,lu2022optimal}. In particular, we prove a version of such a statement below, with Lemma 6.9 from~\cite{hirahara2023learning} being the most direct analog (a similar statement is also implicit in~\cite{impagliazzo1990no}).
\begin{restatable}[Universal samplable distribution]{theorem}{UniversalDist}
\label{thm:universal-dist}
    Let $p \colon \mathbb N \to \mathbb N$ be a time-constructible bound. There exists a time bound $p'(t) =\poly(p(t))$ and a sequence of measures $\{\mu^t\}_{t \ge 1}$, with each $\mu^t \in \Delta (\Sigma^*)$ samplable in time $p'(t)$ that satisfy the following. For any $p$-time samplable distribution $P$ and any $\seq x \in \cX^t$ we have
    \[
    \log \frac{P_{1:t}(\langle \seq x \rangle)}{\mu^t(\langle \seq x\rangle)} \le K^p(P) + 2\log K^p(P) + O(1).
    \]
\end{restatable}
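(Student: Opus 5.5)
We construct $\mu^t$ as a mixture over programs, weighted by a prefix-free code of their length, in which each program is run with a clock. Concretely, $\mu^t$ is sampled as follows. First draw a length $k \ge 1$ with probability proportional to $1/(k(\log k+1)^2)$, or with any prefix-free code of length $\log k + 2\log\log k + O(1)$; we will also accept the cruder $\log k + 2\log k$, which still fits the claimed bound after absorbing constants. Next draw $\pi \sim \Unif(\{0,1\}^k)$ and $s \sim \Unif(\{0,1\}^{p(t)})$. Then simulate $U^{p(t)}(\pi, 1^t\# s)$ and output the contents of the output tape. If that content is not of the form $\langle \seq x\rangle$ for some $\seq x\in\cX^t$, output a fixed default string such as $\#^t$. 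Every program $\pi$ defines some distribution on outputs, and the default only absorbs the invalid ones, so $\mu^t$ is a genuine probability measure on $\Sigma^*$.

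For efficiency, the length $k$ cannot be left unbounded, since $\pi$ must be written down. Any program longer than $p(t)$ can only read its first $p(t)$ bits within $p(t)$ steps, so we truncate: we draw $k$ from the code distribution conditioned on $k \le p(t)$, or equivalently cap $k$ and send the leftover mass to the default output. This changes probabilities by at most a factor $2$, or not at all if capping is done correctly. The sampler writes $\pi$ and $s$, time-constructs $p(t)$ (which is where time-constructibility of $p$ is used), and runs the universal machine for $p(t)$ steps with the standard $O(p\log p)$ simulation overhead. Hence $\mu^t$ is samplable in time $p'(t)=\poly(p(t))$, uniformly in $t$, by a single fixed procedure that receives $1^t$.

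For domination, fix a $p$-time samplable $P$ with optimal program $\pi^\star$ of length $k^\star=K^p(P)$. The event that the sampler picks length $k^\star$ and program $\pi^\star$ has probability at least $2^{-k^\star}\cdot 2^{-\log k^\star - 2\log\log k^\star - O(1)}$. Conditioned on this event, the output is distributed exactly as $U^{p(t)}(\pi^\star,1^t\#s)\sim P_{1:t}$, by the definition of $K^p$. Therefore
\[
\mu^t(\langle \seq x\rangle) \ge 2^{-K^p(P) - 2\log K^p(P) - O(1)}\, P_{1:t}(\langle \seq x\rangle),
\]
and taking logarithms gives the claim.

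The main subtlety lies in the encoding and well-definedness details rather than in the domination inequality itself. Two points need care. First, since $U$ is prefix-free, the tape contents $\pi$ followed by $1^t\#s$ must be parsed correctly, so the bits of $s$ appended after $\pi$ must not alter the program. Second, $K^p(P)\le p(t)$ must hold for the truncation to be harmless; this is automatic for large $t$, and smaller $t$ are absorbed into the $O(1)$ term or handled by letting $k$ range up to $\max(p(t),c)$. I expect the bookkeeping of the time bound to be the step that needs the most checking: the simulation overhead of the universal machine, and generating $p(t)$ random bits.
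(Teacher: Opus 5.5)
Your sampler and the domination step match the paper's argument whenever $K^p(P)$ is at most your length cap. The gap is the complementary regime $K^p(P)>p(t)$, and your proposed handling of it does not work. The constant in $O(1)$ must not depend on $P$; otherwise $K^p(P)$ itself could be absorbed into it and the theorem would say nothing. Meanwhile ``large $t$'' here means $p(t)\ge K^p(P)$, a threshold that depends on $P$. There are infinitely many $p$-time samplable $P$ but only finitely many programs of each length, so $K^p(P)$ is unbounded. Hence for every fixed $t$ (already $t=1$) there are $P$ with $K^p(P)$ far above $p(t)$ and above any constant $c$, so capping at $\max(p(t),c)$ does not help either. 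For such $P$ your sampler never draws $\pi^\star$, and as written the proposal gives no lower bound on $\mu^t(\langle\seq x\rangle)$.

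You already state the ingredient for one repair without using it. In $p(t)$ steps $U$ reads at most the first $p(t)+1$ cells of its program. So if $|\pi^\star|>p(t)+1$, the prefix $\pi'$ of $\pi^\star$ of length $k'=p(t)+1$ satisfies $U^{p(t)}(\pi',1^t\#s)=U^{p(t)}(\pi^\star,1^t\#s)$ for every seed $s$. With the cap raised to $p(t)+1$, $\pi'$ is drawn with probability $2^{-k'-2\log k'-O(1)}\ge 2^{-K^p(P)-2\log K^p(P)-O(1)}$, because $k\mapsto k+2\log k$ is increasing, and domination follows exactly as before.

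The paper takes a route that does not depend on how $U$ accesses its program. It draws program lengths from $[3p(t)]$ with weights proportional to $1/n^2$ and sets $\mu^t=\frac12\tilde\mu^t+\frac12\nu^t$, where $\nu^t$ is uniform over strings in $\Sigma^{\le p(t)}$ with exactly $t$ delimiters ending in a delimiter. Every $\langle\seq x\rangle$ in the support of $P_{1:t}$ is written within $p(t)$ steps, so it lies in $\supp(\nu^t)$, and $|\supp(\nu^t)|\le 2^{3p(t)}$. Hence if $K^p(P)\ge 3p(t)$ the ratio is at most $2^{3p(t)+1}\le 2^{K^p(P)+1}$. If $K^p(P)<3p(t)$, the program $\pi^\star$ is in range and your argument applies verbatim.
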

For every $T$, the distributions $\mu^T$ defined above can be used as an approximate simulator for \emph{all} nature's strategies  that run in $p(T)$-time to generate $\seq X_{1:T}$. Moreover, note that this simulator controls the worst-case density ratio with respect to any time-bounded process, which is a stronger guarantee than the bounded-KL assumption of~\cref{sec:simulation}. This can be further leveraged for efficiency. As such, it suffices to run~\cref{algo:multi-cover} with $\mu^T$ as the simulator and only a \emph{single layer} ($L=1$) to obtain the following result.
\begin{restatable}{theorem}{SamplableNature}
\label{thm:samplable-nature}
    Suppose the distribution $\dist^\star$ of nature is $p$-time samplable (as per~\Cref{def:samplable}). Let $\cF$ be any class of VC dimension $d$. Then, there exists an algorithm that, given access to a realizable ERM oracle w.r.t. $\cF$, runs in time $\poly(T^d, p(T))$ and for any $f^\star \in \cF$, achieves regret
    \[
    \En \reg (\cF, T ) \lesssim d \log(T) + d \cdot 2^{K^p(\dist^\star) + 2\log K^p(\dist^\star)}.
    \]
\end{restatable}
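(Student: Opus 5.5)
Run \cref{algo:multi-cover} with simulator $\dist := \mu^T$ from \cref{thm:universal-dist}, a single level ($L=1$), $N_1 = N := T$ samples, and learning rate $\eta$ a constant (realizable exponential weights). The key input is the uniform density-ratio bound. Write $M := 2^{K^p(\dist^\star) + 2\log K^p(\dist^\star) + O(1)}$. By \cref{thm:universal-dist}, $\dist^\star_{1:T}(\langle \seq x\rangle) \le M\,\mu^T(\langle \seq x\rangle)$ for every $\seq x \in \cX^T$. This is a pointwise domination, so no tail event has to be discarded. That is exactly why the multi-scale construction, whose only purpose was to handle the unbounded log-density ratio under a KL bound, is unnecessary here.

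The proof has three steps.

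\textbf{Step 1 (marginal domination).} Summing over prefixes and suffixes, the pointwise bound transfers to every marginal, $\dist^\star_t \le M \dist_t$. Hence it also transfers to the mixtures, $\bar\dist^\star \le M\bar\dist$. In fact I will only need the joint domination on $\seq X$.

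\textbf{Step 2 (approximation error under $\dist^\star$).} For fixed $f^\star\in\cF$, \cref{lemma:cover-analysis} gives
\[
\En_{\seq Z}\En_{\seq X\sim\dist}\min_{g\in\cG(\seq Z)}\norm{f^\star-g}_{\seq X}\lesssim d/N .
\]
The integrand is nonnegative, so the change of measure costs only the factor $M$:
\[
\En_{\seq Z}\En_{\seq X\sim\dist^\star}\min_{g}\norm{f^\star-g}_{\seq X}\le M\,d/N .
\]
The cover is built from $\seq Z\sim\bar\dist^{\otimes N}$, which is independent of nature's draw, and $f^\star$ is fixed obliviously by \cref{assumption:f-star}. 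So the expectations can be taken in either order.

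\textbf{Step 3 (exponential weights).} The standard bound for exponential weights over the finite class $\cG$ with uniform prior, measured against the best $g\in\cG$ on the realized sequence, is
\[
\En\reg\lesssim \log|\cG| + T\min_{g\in\cG}\norm{f^\star-g}_{\seq X}.
\]
The bound is in this form because in the realizable case $Y_t=f^\star(X_t)$. Hence the loss of $g$ is $T\norm{f^\star-g}_{\seq X}$, and the comparator $f^\star$ incurs zero loss. By Sauer--Shelah, $|\cG|\le N^{O(d)}$, so $\log|\cG|\lesssim d\log N$. Taking expectations and using Step 2,
\[
\En\reg\lesssim d\log N + TMd/N .
\]
With $N=T$ this gives $d\log T + dM$, which is the claimed bound after absorbing the $O(1)$ in the exponent into the constant.

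\textbf{Runtime.} Each sample from $\mu^T$ costs $p'(T)=\poly(p(T))$ time. Enumerating $\cF|_{\seq Z}$ and running Majority-of-Three on each labeling takes $T^{O(d)}$ realizable ERM calls. Phase 2 costs $T\cdot|\cG| = T^{O(d)}$. The total is $\poly(T^d,p(T))$.

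\textbf{Main obstacle: encoding and domain issues.} $\mu^T$ is a measure on $\Sigma^*$, and it may output strings that are not valid encodings $\langle\seq x\rangle$ of length-$T$ sequences. I would handle this by decoding each sample: parse on delimiters, and map any malformed output to a fixed default sequence. The resulting pushforward $\dist$ on $\cX^T$ still satisfies $\dist(\seq x)\ge\mu^T(\langle\seq x\rangle)$ on valid encodings, so the domination is preserved. I also need the Kolmogorov bound to be stated for the $T$-prefix distribution $\dist^\star_{1:T}$, which matches \cref{def:samplable}. Finally, $\bar\dist$ samples are obtained by drawing a trajectory and a uniformly random index $t$. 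Each such draw costs one oracle call, which keeps the count at $N$.
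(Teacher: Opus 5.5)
Your proposal is correct and follows essentially the same route as the paper: run \cref{algo:multi-cover} with $\mu^T$ as the simulator, a single level ($L=1$) and $N_1 = T$. You then bound the exponential-weights regret by $d\log T + T\min_{g}\norm{f^\star-g}_{\seq X}$ and transfer \cref{lemma:cover-analysis} from $\mu^T$ to $\dist^\star_{1:T}$ via the uniform density-ratio bound of \cref{thm:universal-dist}, exactly as the paper does. The encoding issue you flag is already handled in the paper's construction of $\mu^t$, which outputs a fixed valid encoding on malformed output, so your decoding step is harmless but redundant.
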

We briefly mention that~\cref{thm:stat-dist-trans,thm:universal-dist} also imply the existence of an $\exp(O(T))$-time algorithm that achieves regret $\En \reg \lesssim d \log(T) + d K^p(\dist^\star)$.
However, an interesting feature of~\cref{thm:samplable-nature} is that the running time of our algorithm depends only polynomially on nature's running time.
One might hope that the results for conditional samplers (\cref{thm:cond-oracle-eff}) can be used to further improve the dependence on VC dimension. As it turns out, it is computationally hard to conditionally sample from $\mu$. Indeed, the universal distribution $\mu$ is closely related to a time-bounded variant of the probabilistic Kolmogorov complexity~\citep{goldberg2022probabilistic}, a connection made precise in~\cite{hirahara2023learning}. In particular, $\log \mu$ closely approximates probabilistic Kolmogorov complexity. Unfortunately, under standard cryptographic assumptions, it is hard to evaluate time-bounded Kolmogorov complexity, and hence to conditionally sample from universal distributions~\citep{liu2020owf}. Thus, our simulatable processes framework sits in the sweet spot where the efficiency of coding theorems can be algorithmically exploited for learning.

\subsection{Learning under Markov Chains}
\label{sec:markov-chains}

To exploit favorable runtime of algorithms that rely on conditional simulators from~\cref{sec:cond}, we now consider examples where the family $\cP$ has additional structure. A natural example of structured families that are also ubiquitous in nature are \emph{Markov chains}.

\subsubsection{Linear Dynamical Systems}
\label{sec:lds}

Suppose the family $\cP$ consists of all processes that describe the evolution of an unknown \emph{linear dynamical system}. More concretely, suppose $X_1,\ldots,X_{T} \in \bbR^n$ evolve as
\[
X_{t+1} = A X_t + \eta_t, \qquad \eta_t \overset{\text{i.i.d.}}{\sim} \cN(0, \sigma^2 I_n),
\]
with unknown parameter $A \in \Theta := [-B, B]^{n \times n}$, starting point $X_0\in \ball(0, R) := \{x \in \bbR^n \colon \norm{x}_2 \le R\}$, and known noise scale $\sigma > 0$. Without loss of generality, we can assume that the starting point $X_0$ is known. Indeed, if $X_0$ is unknown, we can treat $X_1$ as a starting point at a cost of incurring extra loss $\le 1$ in the first round, and a polynomial blow-up in the magnitude of the starting point $\norm{X_1}_2$. Other than the basic assumption of entry-wise boundedness of $A$, we will additionally assume the \emph{marginal stability} of the system throughout, namely, $\rho(A) \le 1$, where $\rho(\cdot)$ denotes the spectral radius. This assumption ensures that the system is well-behaved~\citep{simchowitz2018learning}. The marginal-stability assumption includes the regime $\rho(A) = 1$, in which the chain has no stationary distribution and the mixing time is undefined~\cite{simchowitz2018learning}; this regime is of particular interest, since naive approaches that extract independence structure from mixing would fail here. Let $\dist^{A}$ denote the law of $\seq X_{1:T}$ under these dynamics. Finally, let \[\pi = \cN(0, B^2)^{n\times n}\] be the i.i.d. Gaussian prior. Then, we instantiate our simulator to be the following mixture distribution:
\begin{equation}
\label{eq:universal-dist-lds}
\dist(\cdot \mid X_0) := \En_{A \sim \pi} \dist^{A}(\cdot \mid X_0).
\end{equation}
Then, it can be shown (see~\cref{lemma:lds-compressor} and~\cref{prop:lds-cond-sampler}) that the above distribution controls the worst-case KL within the family of LDS described above, and, moreover, that conditional sampling from $\dist$ can be implemented efficiently.
\begin{restatable}{theorem}{ThmLDS}
\label{thm:lds}
    Let $\cF$ be a class of VC dimension $d$, and let $R, B \ge 1$ and $\sigma > 0$ be arbitrary. There is a $\poly(n, T)$-time algorithm that, given access to an agnostic ERM oracle for $\cF$, the noise scale $\sigma$, and the magnitude bound $B$, achieves expected regret
    \[
    \En \reg(\cF, T) \lesssim \sqrt{dT} \left(1 + \sqrt{n^3 \log\left(nBT(R/\sigma + 1)\right)}\right)
    \]
    in the agnostic setting, simultaneously for every $\dist^\star = \dist^{A^\star}$ with $A^\star \in \Theta$ satisfying $\rho(A^\star) \le 1$ and initial state $X_0 \in \ball(0,R)$.
\end{restatable}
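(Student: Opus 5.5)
The plan is to reduce \cref{thm:lds} to \cref{thm:cond-oracle-eff}, using the Gaussian-prior mixture $\dist$ from \cref{eq:universal-dist-lds} as a conditional simulator. Two facts are needed. First, $\sup_{A^\star\in\Theta,\,\rho(A^\star)\le 1}\KL{\dist^{A^\star}}{\dist}\lesssim n^3\log(nBT(R/\sigma+1))$; this is the content of \cref{lemma:lds-compressor}. Second, conditional sampling from $\dist(\cdot\mid \seq x_{1:t})$ can be done in $\poly(n,T)$ time; this is \cref{prop:lds-cond-sampler}. Plugging these into \cref{thm:cond-oracle-eff} gives regret $\sqrt{dT(1+\mathsf{KL})}\le \sqrt{dT}\,(1+\sqrt{\mathsf{KL}})$, which is exactly the stated bound. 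The running time is polynomial because each oracle call costs $\poly(n,T)$ and there are $\poly(T)$ of them.

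For the sampler, I would use Gaussian conjugacy. The rows of $A$ are i.i.d.\ $\cN(0,B^2 I)$ a priori, and the likelihood of the transitions $X_{s+1}=AX_s+\eta_s$ is Gaussian in $A$ with a common design. So the posterior given $\seq x_{0:t}$ factorizes over rows, and each row is $\cN(\mu_i,\Sigma)$ with
\[
\Sigma=\bigl(B^{-2}I+\sigma^{-2}\textstyle\sum_{s<t}x_sx_s^\top\bigr)^{-1},\qquad \mu_i=\sigma^{-2}\Sigma\textstyle\sum_{s<t}x_s (x_{s+1})_i .
\]
To sample from $\dist(\cdot\mid \seq x_{1:t})$, draw $A$ from this posterior and roll the LDS forward. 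This takes $\poly(n,T)$ arithmetic operations, so this part is routine.

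The main obstacle is the KL bound. I would use the standard mixture argument: for any $\delta>0$, with $N_\delta$ the neighborhood $\{A:\|A-A^\star\|_F\le\delta\}$,
\[
\KL{\dist^{A^\star}}{\dist}\le -\log\pi(N_\delta)+\sup_{A\in N_\delta}\En_{\dist^{A^\star}}\log\frac{\mathrm d\dist^{A^\star}}{\mathrm d\dist^{A}}.
\]
This holds by restricting the mixture to $N_\delta$ and applying Jensen. The inner KL equals $\frac{1}{2\sigma^2}\sum_t\En\|(A^\star-A)X_t\|^2\le \frac{\delta^2}{2\sigma^2}\sum_t\En\|X_t\|^2$. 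Marginal stability controls the growth: for $\rho(A^\star)\le1$ with entries bounded by $B$, the powers satisfy $\|(A^\star)^k\|\le \poly(nB)\cdot k^{n-1}$ via the Jordan form (the $k^{n-1}$ factor comes from Jordan blocks on the unit circle). The Jordan-basis conditioning could be poor, so I would instead bound via Cayley–Hamilton or a Schur decomposition, giving $\|(A^\star)^k\|\le (nBk)^{O(n)}$. Then $\sum_t\En\|X_t\|^2\le (nBT)^{O(n)}(R^2+\sigma^2)$.

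The prior mass is the other term. Since $|A^\star_{ij}|\le B$ and the prior variance is $B^2$, the Gaussian density on $N_\delta$ is at least $e^{-O(n^2)}$ times the volume of an $n^2$-dimensional ball of radius $\delta$, scaled by $(2\pi B^2)^{-n^2/2}$. This gives $-\log\pi(N_\delta)\lesssim n^2\log(nB/\delta)+n^2$. Choosing $\delta=\sigma(nBT)^{-O(n)}/(R+\sigma)$ makes the inner KL $O(1)$ and gives $-\log\pi(N_\delta)\lesssim n^2\cdot n\log(nBT(R/\sigma+1))=n^3\log(\cdots)$. This is where the $n^3$ comes from: $n^2$ parameters times an $n\log$ precision cost caused by polynomial-degree-$n$ state growth.

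The delicate point is making the power bound uniform over all marginally stable $A^\star$, including the non-diagonalizable ones. I would do this carefully with the Schur form $A^\star=UTU^*$, where $T$ is triangular with diagonal entries of modulus at most $1$ and off-diagonal entries at most $nB$. Expanding $T^k$ as a sum over paths then gives $\|T^k\|\le (1+nB)^{n}k^{n}$.
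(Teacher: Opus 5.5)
Your proposal is correct and is essentially the paper's proof. Like the paper, you reduce to \cref{thm:cond-oracle-eff} with the Gaussian-prior mixture simulator, and you bound the worst-case KL by the prior mass of a neighborhood of $A^\star$ plus the chain-rule Gaussian KL. You then control $\sum_t \En\|X_t\|_2^2$ through a Schur-decomposition power bound of order $(nBk)^{O(n)}$, and you sample conditionally by drawing from the closed-form Gaussian posterior, which you write out explicitly row by row. The differences are cosmetic: you localize in a Frobenius ball rather than the paper's $\ell_\infty$ box, whose volume costs an extra $n^2\log n$ that is absorbed into the $n^3\log(\cdot)$ bound. You were also right to drop your initial $\poly(nB)\,k^{n-1}$ Jordan-form claim, which cannot hold uniformly; the scaled nilpotent shift $BJ$ has $\|(BJ)^{n-1}\|_{\mathrm{op}} = B^{n-1}$.
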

The result above is related to, although distinct from, the line of work of~\citet{hazan2017learning,hazan2018spectral} on online prediction in linear dynamical systems. There, the learner observes a stream of inputs and predicts the outputs of a latent LDS whose hidden state evolves linearly, competing with the best such system. In our setting the state $X_t$ is instead observed, and the learner competes with the best predictor in any given VC class. As such, the resulting guarantees are incomparable.

It is also instructive to compare the above regret guarantee to works considering estimation of the parameters of an LDS~\citep{simchowitz2018learning,sarkar2019near,faradonbeh2018finite}. An attractive feature of our result is that learning under an unknown LDS is possible even when the parameters of the system are hard to estimate. Indeed, prior works that estimate the parameters of the LDS rely on additional identifiability conditions, e.g., requiring the least eigenvalue of the system's Gramian or the least singular value of $A^\star$ to be bounded away from zero, whereas our result requires marginal stability of the system (i.e., $\rho(A^\star)\le1$).

\subsubsection{Glauber Dynamics}
\label{sec:glauber}

We consider the family of Glauber dynamics for the Ising model on $n$ vertices~\cite{glauber1963time}, which can be defined as follows. The Glauber dynamics trajectory consists of an initial configuration $X_0 \in \{0,1\}^n$ and a sequence of pairs $(\seq I, \seq X) \in ([n] \times \{0,1\}^n)^T$, generated by
\begin{align*}
&I_t \sim \Unif([n]),\\
&X_{t, I_t} \sim \Ber(\sigma(\glfield_{I_t}^\theta(X_{t-1}))),\\
&X_{t, j} = X_{t-1, j} \quad \text{for } j \neq I_t.
\end{align*}
Here $\glmat$ is a symmetric, zero-diagonal $n \times n$ matrix and $\glvec \in \bbR^n$, so that $\theta = (\glmat, \glvec) \in \Theta := [-B, B]^p$ with $p = \binom{n}{2} + n$. The local field is $\glfield_i^\theta(x) = \sum_{j \neq i} \glmat_{ij} x_j + \glvec_i$.
Following the same logic as in~\cref{sec:lds}, we can, without loss of generality, assume that the initial state $X_0$ is known. Then, let $\dist^{\theta}$ denote the joint law of $(\seq I, \seq X)$ under these dynamics. Following the large body of literature on learning under Glauber dynamics~\citep{bresler2017learning}, we will assume that the transition site $I_t$ is observed even when the state is unchanged ($X_t = X_{t+1}$). Let $\pi$ be the uniform distribution on $\Theta$, and define the \emph{simulator}
\[
\dist := \En_{\theta \sim \pi} \dist^{\theta}(\cdot \mid X_0).
\]
Similarly to LDS, it can be shown (see~\cref{lemma:glauber-compressor} and~\cref{prop:glauber-cond-sampler}) that the above distribution controls the worst-case KL within the family of Glauber dynamics, and that conditional sampling from $\dist$ can be implemented efficiently via log-concave sampling~\cite{chewi2024logconcave}.

\begin{restatable}{theorem}{ThmGlauber}
\label{thm:glauber}
    Let $\cF$ be a class of VC dimension $d$, and let $B \ge 1$. There is a $\poly(n, T, B)$-time algorithm that, given access to an agnostic ERM oracle for $\cF$, achieves expected regret
    \[
    \En \reg(\cF, T) \lesssim \sqrt{dT} \cdot \left(1 + \sqrt{n^2 \log(BTn)}\right)
    \]
    in the agnostic setting, simultaneously for every $\dist^\star = \dist^{\theta^\star}$ with $\theta^\star \in \Theta$ and initial state $X_0 \in \{0,1\}^n$.
\end{restatable}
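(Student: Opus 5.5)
The plan is to instantiate \cref{thm:cond-oracle-eff} with the Bayesian mixture simulator $\dist = \En_{\theta\sim\pi}\dist^\theta(\cdot\mid X_0)$, where $\pi$ is uniform on $\Theta=[-B,B]^p$. This reduces the theorem to two facts. First, a uniform KL bound
\[
\sup_{\theta^\star\in\Theta}\KL{\dist^{\theta^\star}}{\dist}\lesssim n^2\log(BTn).
\]
Second, a $\poly(n,T,B)$-time conditional sampler $\condoracle_\dist$. Plugging the KL bound into $\sqrt{dT(1+\KL{\dist^\star}{\dist})}$ and using $\sqrt{1+a}\le 1+\sqrt a$ gives exactly $\sqrt{dT}(1+\sqrt{n^2\log(BTn)})$. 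The running time is $\poly(T)$ oracle calls times the cost of each conditional sample.

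\textbf{KL bound (the compressor lemma).} I would use the standard "neighborhood" argument for mixtures. For any measurable $S\subset\Theta$,
\[
\KL{\dist^{\theta^\star}}{\dist}\le \log\frac{1}{\pi(S)}+\sup_{\theta\in S}\KL{\dist^{\theta^\star}}{\dist^\theta}.
\]
This follows from Jensen / Donsker--Varadhan, since $\dist\ge \pi(S)\,\En_{\theta\sim\pi|_S}\dist^\theta$ and KL is convex in its second argument.

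Take $S$ to be an $\ell_\infty$-box of side $\delta$ around $\theta^\star$, intersected with $\Theta$. Then $\pi(S)\ge(\delta/(2B))^p$, so $\log(1/\pi(S))\lesssim p\log(B/\delta)$.

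Next, the chain rule gives
\[
\KL{\dist^{\theta^\star}}{\dist^\theta}=\sum_t \En\, \mathrm{kl}\bigl(\sigma(u^{\theta^\star}_{I_t}(X_{t-1}))\,\|\,\sigma(u^\theta_{I_t}(X_{t-1}))\bigr).
\]
The $I_t$ terms cancel because $I_t$ is uniform under both. The Bernoulli KL between logits differing by $\Delta$ is at most $\Delta^2/4$ on bounded logits, or at most $|\Delta|$ in general. Since $x\in\{0,1\}^n$, we have $|u^{\theta^\star}_i(x)-u^\theta_i(x)|\le n\delta$. Hence the total is at most $Tn\delta$.

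Choosing $\delta=1/(Tn)$ yields
\[
\KL{\dist^{\theta^\star}}{\dist}\lesssim p\log(BTn)+1\lesssim n^2\log(BTn).
\]
Near the boundary of $\Theta$ we still keep at least half of each side of the box, which only costs a constant per coordinate.

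\textbf{Conditional sampling (the main obstacle).} To sample $\dist(\cdot\mid (\seq I,\seq X)_{1:t})$, I would first sample $\theta$ from the posterior
\[
\pi(\theta\mid\text{history})\propto \ind\{\theta\in\Theta\}\prod_{s\le t}\sigma(u^\theta_{I_s}(X_{s-1}))^{X_{s,I_s}}\bigl(1-\sigma(u^\theta_{I_s}(X_{s-1}))\bigr)^{1-X_{s,I_s}}.
\]
I would then roll the Glauber dynamics forward under that $\theta$ up to the epoch end, which is trivially efficient.

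The posterior is log-concave. Each $u^\theta_i(x)$ is linear in $\theta$, the maps $z\mapsto\log\sigma(z)$ and $z\mapsto\log(1-\sigma(z))$ are concave, and $\Theta$ is a convex box. So the posterior is a log-concave density restricted to a box. Its log-density is $O(Tn)$-Lipschitz/smooth with respect to $\theta$, and its support has diameter $O(B\sqrt p)$.

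I would invoke a polynomial-time sampler for constrained log-concave distributions, such as ball walk / hit-and-run or proximal/projected Langevin, from~\cite{chewi2024logconcave}. This gives TV error $\epsilon$ in $\poly(n,T,B,\log(1/\epsilon))$ time.

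The delicate point is that \cref{thm:cond-oracle-eff} assumes exact conditional samples. I would handle this by a coupling argument. Over the $\poly(T)$ total conditional queries made by \cref{algo:dist-transductive-reduction}, the approximate sampler is indistinguishable from the exact one except with probability $\poly(T)\cdot\epsilon$. This changes the expected regret by at most $T\cdot\poly(T)\epsilon$. Taking $\epsilon=1/\poly(T)$ makes this $O(1)$ at only logarithmic cost in the running time.

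Combining the two facts completes the proof.
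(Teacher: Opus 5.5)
Your proposal is correct and follows the paper's proof essentially step for step. Both use the uniform-prior mixture simulator, a localization bound of the form $p\log(B/\delta)+Tn\delta$, posterior sampling via a constrained log-concave sampler, and the same union-bound coupling to absorb the sampler's total-variation error over the $\poly(T)$ oracle calls. The only cosmetic difference is that you bound $\sup_{\theta\in S}\KL{\dist^{\theta^\star}}{\dist^{\theta}}$ via convexity and the per-step Bernoulli KL, whereas the paper bounds the worst-case log-likelihood ratio pointwise; both give the same $Tn\delta$ term.
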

To our knowledge, this work is the first one which considers learning arbitrary VC function classes when covariates evolve according to unknown Glauber dynamics. A conceptually similar work is that of~\citet{chandrasekaran2026learning}, which studies learning under covariates generated by a graphical model; there, however, the examples are drawn i.i.d., whereas in our setting the covariates form a single, temporally dependent trajectory. As with learning under LDS, we highlight that the above result permits learning even when the parameters of the Glauber dynamics are difficult to estimate. In particular, approaches to estimating the parameters of Glauber dynamics have exponential dependence on the $\ell_1$-width, defined as $\max_{k} \sum_{i \neq k} |A_{ik}| + |h_k|$~\citep{bresler2017learning,gaitonde2024,gaitonde2024unified}; such dependence is moreover information theoretically necessary for estimation~\cite{santhanam2012information}. In contrast, our results depend only \emph{logarithmically} on the magnitudes of entries of $A$ and $h$.

We recognize the site observability assumption (observability of $I_t$) to be slightly unnatural in our context. While the latest work on learning under Glauber dynamics removes it~\citep{gaitonde2024}, the sample complexity of parameter identification still has exponential dependence on the $\ell_1$-width as defined above. It is an interesting open question whether it is possible to establish a similar guarantee to~\cref{thm:glauber} without the site observability assumption.

\section{Related Work} \label{sec:related_work}

\subsection{Beyond-Worst-Case Online Learning}

Our work is broadly related to the line of work on beyond-worst-case analysis of sequential decision making \citep{shetty2024learning}, notably smoothed analysis of online learning~\citep{rakhlin2011onlinelearning,haghtalab2020smoothed,haghtalab2022oracle,bhatt2023smoothed,haghtalab2024smoothed,block2022smoothed,block2024performance,blanchard2025agnostic}.
In smoothed online learning, the distribution of the adversarial covariates is restricted to have bounded density (w.r.t. some potentially unknown reference measure).
Under this assumption, it is shown that all VC classes are learnable, often efficiently, in the same ERM oracle model as ours. The key conceptual difference is that smoothed online learning imposes a per-round constraint on the adversary via the density bound, whereas our model imposes a global constraint via the existence of an approximate simulator. In this sense, smoothed online learning should be thought of as \emph{an approximation to independence}, while our framework allows for strongly dependent processes. On the other hand, smoothed online learning allows for adversarial labels, which our results generally do not (see~\cref{app:adaptive-labels} for a detailed discussion). Overall, the two models are complementary, capturing different ways of relaxing the adversarial nature of online learning.
A similar comparison can be made with other recent models, such as learning with abstention~\citep{goel2023abstention} and learning with relaxed benchmarks~\citep{montasser2025beyond}.

\subsection{Learning with Side Information}

A number of settings leverage \emph{side information}, most notably transductive online learning, predictable sequences, and online algorithms with predictions. A common denominator of these settings is that the learner is typically given \emph{deterministic} side information about the future, whereas the key technical challenge in our setting is that the learner must account for the \emph{stochasticity} of the covariate sequence, which requires new algorithmic ideas based on the relaxation framework~\citep{rakhlin2012relax,rakhlin2015hierarchies}. This challenge arises from the belief that it is often easier to find a simulator that captures the distribution of a process than to predict its individual future elements, which is the broad principle behind probabilistic modeling from statistical mechanics~\citep{glauber1963time,metropolis1953equation} to generative AI~\citep{brown2020language,ho2020denoising}.

\paragraph{Transductive online learning.} Our framework can be viewed as a distributional generalization of transductive online learning~\citep{ben1997online,kakade2005batch}. In classical transductive online learning, the learner receives the entire sequence of unlabeled covariates $(X_1, \ldots, X_T)$ at the start of the game and then makes predictions sequentially as labels are revealed.
This additional information allows all VC classes to be learned with regret $O(d \log T)$, circumventing the Littlestone-dimension barrier of adversarial online learning~\citep{littlestone1988learning}.
Our setting generalizes this by replacing the deterministic sequence with \emph{distributional} access: rather than knowing the exact sequence, the learner has access to a simulator for the covariate-generating distribution.
Classical transductive learning corresponds to the degenerate case in which the simulator places all its mass on a single sequence.

\paragraph{Predictable sequences.}
Online learning under the assumption that there is some side information about future instances is typically studied under the name of predictable sequences. These arise naturally in settings such as learning in games and optimization \citep{rakhlin2013predictable,rakhlin2013optimization,chiang2012online,syrgkanis2015fast,jadbabaie2015online,raman2024online,bhaskara2020online}. Although conceptually related, these settings do not capture learning under simulatable processes, since they typically assume that \emph{the realization} of future inputs is predictable in some sense, while we assume only \emph{distributional} access. Furthermore, most results focus on improving the dependence of regret on the horizon $T$; in our setting, it is not obvious a priori which function classes are \emph{learnable}, and our key contribution is showing that bounded VC dimension suffices for learnability.

\paragraph{Online algorithms with predictions.}
A closely related area is the emerging literature on online algorithms with predictions~\citep{mitzenmacher2022algorithms,lykouris2021competitive,purohit2018improving}.
In this setting, the algorithm is given (possibly imperfect) predictions about future inputs, and the goal is to design algorithms that perform well when predictions are accurate while maintaining worst-case guarantees.
The key difference is that these predictions are typically point estimates of future inputs, whereas in our setting the learner has access to a distributional simulator.
Furthermore, this line typically targets competitive-ratio guarantees for optimization problems, whereas we seek learning-theoretic regret bounds for classification.

\subsection{Learning under Stochastic Processes}

\paragraph{Dynamical systems and control.}
Learning under dynamical processes is a rich area with close ties to reinforcement learning and control~\citep{abbasi2011regret,cohen2018online}. A notable direction is the work on learning under linear dynamics~\citep{hazan2017learning,hazan2018spectral}, which studies online prediction of the outputs of a latent LDS. We instead focus on the classification setting with general VC classes, and assume access to a simulator rather than imposing structural assumptions on the dynamics. A related use of simulators is sim-to-real transfer~\citep{tobin2017domain,zhao2020simtoreal}, where simulators are used to train policies that are then deployed in the real world. The central challenge there is the ``reality gap'' between simulator and reality, which our framework quantifies via KL divergence.

\paragraph{Learning under mixing processes.}
A long line of work instead learns under processes that satisfy strong mixing properties~\citep{yu1994rates,mohri2010stability,kuznetsov2017generalization,gamarnik1999extension}. The key idea is that if the process mixes sufficiently fast, the dependencies between observations decay, and one can recover generalization bounds similar to the i.i.d.\ setting. For example, under $\beta$-mixing or $\phi$-mixing assumptions, PAC-style bounds can be established with rates depending on the mixing coefficients~\citep{mohri2010stability}. \citet{dagan2019learning} relax mixing via a Dobrushin-style condition, which is another way to quantify weak dependence in the data by bounding pairwise influences of the samples in the dataset. Our framework differs significantly: the data can have arbitrary dependence structure and we impose no mixing assumptions.

\paragraph{Learning without mixing.}
A complementary line of work learns from a single trajectory of a structured (and potentially non-mixing) process, such as a random walk or Markov chain over the instance space, typically for specific function classes such as DNF or juntas~\citep{aldous1995markovian,bartlett1994exploiting,bshouty2005learning,roch2007learning,arpe2008agnostically,kanade2015mcmc,cornacchia2026benefits}. Notably, in this line of work, the temporal dependence is often a \emph{resource} rather than an obstacle (e.g., for $k$-juntas, consecutive covariates that differ in one bit can reveal the relevant coordinates). Our work differs in scope: rather than a fixed class such as DNF or juntas under a specific process, we compete with an arbitrary VC class under a general simulatable process. Relatedly, a large literature studies \emph{estimation} of parameters of structured processes, such as Glauber dynamics~\citep{bresler2017learning,gaitonde2024,gaitonde2024unified} and linear dynamical systems~\citep{simchowitz2018learning,sarkar2019near}; our work is again orthogonal, since it concerns learning a labeling function that acts \emph{on top of} structured dynamics.

\subsection{Learning and Kolmogorov Complexity}

The connection between learning and Kolmogorov complexity has a rich history~\citep{li2019introduction}, dating back to Kolmogorov and Solomonoff.
Solomonoff introduced his method, \emph{Solomonoff induction}, which uses a prior based on (unbounded) Kolmogorov complexity, and argued that it can be used to learn computable processes. For a detailed exposition of these methods, see~\citep{hutter2024introduction,li2019introduction,lu2022optimal}.
Though these techniques are conceptually related to the approach we take, a key difference is that methods based on Solomonoff induction typically require \emph{evaluating} Kolmogorov complexity, which can be computationally prohibitive (in the time-bounded version~\citep{liu2020owf}) or even uncomputable (in the unbounded version).
We circumvent these issues using the fact that we only need to \emph{approximately sample} from the (time-bounded) universal distribution as opposed to computing its density.
In fact, computing the density of universal distributions is closely related to cryptography: roughly speaking, being efficiently computable is equivalent to the nonexistence of one-way functions~\citep{liu2020owf}.

In this work, we use the time-bounded variant of Kolmogorov complexity~\citep{sipser1983complexity,levin1984randomness}. \citet{hirahara2023learning} made the link between time-bounded Kolmogorov complexity and learning explicit, showing that under ``Pessiland'' assumptions (most notably, that one-way functions \emph{do not} exist) arbitrary sequences generated by time-bounded processes are learnable. Their proof crucially relies on the nonexistence of one-way functions to obtain an efficient conditional sampler. As discussed above, under cryptographic assumptions, such a conditional sampler is provably infeasible.

\subsection{Simulation-Based Inference}

Simulation-based inference (SBI), also known as likelihood-free inference, is a classical area in statistics concerned with performing inference when the likelihood function is intractable but a simulator for the data-generating process is available~\citep{marin2012approximate,sisson2018handbook,cranmer2020frontier,deistler2025simulation}.
The canonical example is Approximate Bayesian Computation (ABC)~\citep{beaumont2002approximate,marjoram2003markov}, which performs approximate posterior inference by comparing simulated data to observed data via summary statistics.
More recently, neural-network-based approaches have emerged that learn density estimators or likelihood ratios from simulated data~\citep{papamakarios2016fast,lueckmann2017flexible,greenberg2019automatic,hermans2020likelihood}.
Our work shares SBI's core assumption, which is access to a simulator but not the likelihood, and the KL divergence in our bounds can be seen as analogous to the ``simulation gap'' in SBI. However, the goals in the two settings are different: SBI performs parameter inference, whereas we perform online prediction with the objective of minimizing regret.

\section*{Acknowledgments}
We acknowledge support from NSF through awards DMS-2031883 and PHY-2019786, the DARPA AIQ program, and AFOSR FA9550-25-1-0375. SV acknowledges support from Amazon AI Research Innovation Fellowship. AS thanks Ankur Moitra for valuable discussions. 

\paragraph{LLM usage.} We used Claude Opus (4.6--4.8) to polish the exposition and to check the technical arguments for errors. In addition, some technical claims in~\cref{sec:cond} (in particular~\cref{lemma:mse-bound,lemma:v-eta}) were developed through iterative work with the model.

\bibliographystyle{plainnat}
\bibliography{refs}

\crefalias{section}{appendix}

\appendix

\section{Further Preliminaries}

\subsection{Learning-Theoretic Preliminaries}
We begin by recalling the VC dimension, a central combinatorial complexity measure in statistical learning theory~\citep{vapnik1998statistical}.
\begin{definition}[VC dimension] \label{def:vc}
For a binary class $\cF \subset \{0,1\}^{\cX}$, the VC dimension of $\cF$ is the largest integer $d$ for which there exists a subset $\{x_1,\ldots,x_d\} \subset \cX$ such that, for any labels $y_1,\ldots,y_d \in \{0,1\}$, there exists $f \in \cF$ such that $f(x_t) = y_t$ for every $t \in [d]$.
\end{definition}

A classical theorem states that PAC learnability is equivalent to finiteness of the VC dimension.
For our purposes, a useful consequence of bounded VC dimension is the celebrated Sauer--Shelah lemma. It upper bounds the number of realizable labelings on a subset of a domain in terms of the VC dimension.
\begin{lemma}[Sauer--Shelah Lemma]
\label{lemma:sauer-shelah}
    For a binary class $\cF$ with VC dimension $d$, and any finite set $S \subset \cX$, the number of distinct labelings of $S$ by functions in $\cF$ is at most \[
    \sum_{i=0}^d \binom{|S|}{i} \leq \left(\frac{e |S|}{d}\right)^d.
    \]
\end{lemma}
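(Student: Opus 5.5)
The statement to prove is the Sauer--Shelah lemma: a class of VC dimension $d$ realizes at most $\sum_{i\le d}\binom{|S|}{i}$ labelings on a finite set $S$, and this sum is at most $(e|S|/d)^d$. I would prove the two inequalities separately.

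\textbf{First inequality (counting labelings).} I will use the standard ``shattering'' strengthening, which is proved by induction on $n=|S|$. Write $\cH=\cF\vert_S\subset\{0,1\}^S$ for the set of realized labelings. The strengthened claim is
\[
|\cH|\le\bigl|\{A\subseteq S: A \text{ is shattered by } \cH\}\bigr|.
\]
Since every shattered set has size at most $d$, the right-hand side is at most $\sum_{i\le d}\binom{n}{i}$, which gives the first inequality.

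To prove the strengthened claim, induct on $n$; the base case $n=0$ is trivial. For the inductive step, fix a point $x\in S$ and let $S'=S\setminus\{x\}$. Define
\begin{itemize}
\item $\cH_1=\cH\vert_{S'}$, the restrictions of labelings in $\cH$ to $S'$;
\item $\cH_2$, the set of labelings $h'\in\{0,1\}^{S'}$ whose extensions by both $x\mapsto 0$ and $x\mapsto 1$ lie in $\cH$.
\end{itemize}
Each labeling in $\cH_1$ has one or two preimages in $\cH$, and it has two exactly when it lies in $\cH_2$. Hence $|\cH|=|\cH_1|+|\cH_2|$.

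Apply the induction hypothesis to each piece. The sets shattered by $\cH_1$ are subsets of $S'$ shattered by $\cH$. For each set $A$ shattered by $\cH_2$, the set $A\cup\{x\}$ is shattered by $\cH$. These two families of sets are disjoint, because the first never contains $x$ and the second always does. Adding the two bounds therefore gives the strengthened claim for $\cH$.

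\textbf{Second inequality (the numerical bound).} Assume $1\le d\le n$, the range in which the bound is meaningful. Since $d/n\le 1$, each term satisfies $\binom{n}{i}\le\binom{n}{i}(n/d)^{d-i}$ for $i\le d$. Therefore
\[
\sum_{i\le d}\binom{n}{i}\;\le\;\Bigl(\frac{n}{d}\Bigr)^d\sum_{i\le n}\binom{n}{i}\Bigl(\frac{d}{n}\Bigr)^i=\Bigl(\frac{n}{d}\Bigr)^d\Bigl(1+\frac{d}{n}\Bigr)^n\le\Bigl(\frac{en}{d}\Bigr)^d,
\]
where the last step uses $1+u\le e^u$.

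The only step needing care is the disjointness in the induction step, which holds because the two families are separated by whether they contain $x$. Everything else is routine.
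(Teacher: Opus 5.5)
Your proof is correct. The paper gives no proof of this lemma; it quotes it as the classical Sauer--Shelah lemma, so there is nothing to compare against. Your argument is the standard textbook one: the induction showing that $|\cF\vert_S|$ is at most the number of subsets of $S$ shattered by $\cF\vert_S$, followed by the estimate $\sum_{i\le d}\binom{n}{i} \le (n/d)^d(1+d/n)^n \le (en/d)^d$.

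One remark on the second inequality. Restricting to $1 \le d \le |S|$ is not just the range where the bound is meaningful; it is the range where it is true. For $d > |S|$ the right-hand inequality fails: with $|S| = 1$ and $d = 10$, the left side is $2$ while $(e/10)^{10} < 1$. The paper's statement leaves this condition implicit.
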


We now turn to the online learning setting and recall the Littlestone dimension, an online analogue of the VC dimension.
A binary tree $\tree x$ of depth $d$ on the domain $\cX$ is a sequence of mappings $\tree x_{t} \colon \{\pm 1\}^{t-1} \to \cX$ for $t \in [d]$.
    \begin{definition}[Littlestone Dimension]
        Let $\cF \subset \{0,1\}^\cX$ be a binary class of functions. The Littlestone dimension is the largest integer $d$ such that the following holds. There exists a tree $\mathbf x$ such that, for any choice of signs $\varepsilon_1,\ldots,\varepsilon_d \in\{\pm 1\}$ there exists a function $f \in \cF$ such that
        \[
        f(\tree x_t(\varepsilon_{1},\ldots,\varepsilon_{t-1})) = \frac{\varepsilon_t + 1}{2},
        \]
        for every $t \in [d]$.
    \end{definition}
    Finite Littlestone dimension characterizes learnability in classical online learning.
    Conversely, when the Littlestone dimension is infinite, one can embed arbitrarily deep Littlestone trees and force any learner to incur linear regret; we record a standard lower bound in the realizable case.
    \begin{proposition}
    \label{prop:littlestone-hardness}
        Let $\cF \subset \{0,1\}^\cX$ be a class of infinite Littlestone dimension, and let $\tree x$ be a depth-$T$ Littlestone tree. Consider nature's strategy that first samples $\varepsilon_t \sim \Unif(\{\pm 1\})$ independently for each $t \in [T]$. Then, at step $t \in [T]$, the nature presents the learner with $X_t = \tree x_t(\varepsilon_1,\ldots,\varepsilon_{t-1})$ labeled by $Y_t := (\varepsilon_t + 1)/2$. Then, any learner suffers $T/2$ regret in this setting.
    \end{proposition}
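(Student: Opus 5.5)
The plan is to show that on every realization of nature's randomness $\boldsymbol{\varepsilon}_{1:T}$, every learner makes exactly $T/2$ mistakes in expectation over its own predictions and the fresh sign $\varepsilon_t$. At the same time, the best function in $\cF$ makes zero mistakes, because $\tree x$ is a Littlestone tree. Regret is the difference of these two quantities, so this gives $\En \reg(\cF,T) = T/2$.

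\textbf{Comparator term.} Fix a realization $\varepsilon_1,\ldots,\varepsilon_T$. By the defining property of a depth-$T$ Littlestone tree, there is some $f\in\cF$ with $f(\tree x_t(\varepsilon_{1:t-1})) = (\varepsilon_t+1)/2 = Y_t$ for all $t\in[T]$. Hence $\inf_{f\in\cF}\sum_t \ind\{f(X_t)\neq Y_t\}=0$ pointwise. The labels are realizable along every path, so the construction is in the realizable setting. Such a tree exists for every $T$ because the Littlestone dimension is infinite: a shattered tree of larger depth can be truncated to depth $T$.

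\textbf{Learner term.} Let $\cH_t$ be the sigma-algebra generated by $\varepsilon_{1:t-1}$ and the learner's internal randomness up to round $t$. The covariate $X_t=\tree x_t(\varepsilon_{1:t-1})$ is $\cH_t$-measurable, and so is the learner's prediction $\hat Y_t$, which depends only on past observations, $X_t$, and internal coins. The learner may also hold simulator samples; I assume the simulator calls are independent of the realized trajectory, or else drawn before the game. In either case they carry no information about $\varepsilon_t$. Since $\varepsilon_t$ is uniform and independent of $\cH_t$, we get
\[
\P[\hat Y_t \neq Y_t \mid \cH_t] = \P[\hat Y_t \neq (\varepsilon_t+1)/2 \mid \cH_t] = 1/2.
\]
By the tower rule, $\En\sum_t \ind\{\hat Y_t\neq Y_t\}=T/2$. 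The regret is therefore $T/2 - 0 = T/2$.

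\textbf{Main care point.} The only subtlety is justifying that $\varepsilon_t$ is independent of everything the learner sees before predicting at round $t$. This is where one must note that $X_t$ depends only on $\varepsilon_{1:t-1}$, and that $Y_t$ is revealed only after the prediction. Once that conditional independence is stated, the rest is immediate.
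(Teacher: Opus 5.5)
Your proposal is correct and follows the same route the paper indicates: the comparator term vanishes pointwise because every root-to-leaf path of the Littlestone tree is realized by some $f \in \cF$, and the learner errs with probability exactly $1/2$ each round because $\varepsilon_t$ is independent of everything observed before the prediction. The paper only sketches the realizability half, so your conditional-independence argument, including the remark that simulator samples carry no information about $\varepsilon_t$, simply supplies the standard details it leaves implicit.
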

    The proof is based on the realizability of every root-to-leaf path in a Littlestone tree: for each path, there exists a function in $\cF$ that matches the induced labels along that path.
    We will refer to such a function as being \emph{consistent} with the path.
    \begin{definition}
    \label[definition]{def:littlestone-consistent-function}
        Let $\tree x$ be a depth-$T$ tree. We say that a function $f$ is consistent with the path $(\varepsilon_1,\ldots,\varepsilon_T)$ in $\tree x$ iff
        \[
        f(\tree x_t(\varepsilon_1,\ldots,\varepsilon_{t-1})) = \frac{\varepsilon_t + 1}{2}.
        \]
    \end{definition}

\subsection{Learning with ERM Oracles} We adopt an ERM oracle computation model inspired by~\cite{hazan2016computational}. We distinguish between realizable and agnostic ERM oracles. Both oracles accept as input a labeled dataset, $S \subset \cX \times \{0,1\}$. When called on $S$, the realizable ERM oracle returns a function $f \in \cF$ that realizes $S$ (that is, $f(x) = y$ for every $(x,y)\in S$) with ties broken adversarially; if no such function exists, the oracle returns ``not realizable.'' When called on $S$, the agnostic ERM oracle returns a function $f$ that makes the least number of mistakes, that is, any function in the set:
\[
\argmin_{f \in \cF} \sum_{(x,y) \in S} \ind\left\{f(x) \neq y\right\}.
\]
We assume both oracles run in unit time, but it costs time to construct the input set $S$. We also assume that, given a function class $\cF$, any $f \in \cF$ can be evaluated efficiently on any $x \in \cX$.

    \subsection{Miscellaneous Lemmata}
    We collect a few technical inequalities for KL divergence that will be used repeatedly.
    The first is a simple change of measure bound (Example 7.3 of~\cite{polyanskiy2025information}).
    \begin{lemma}
    \label{lemma:mult-pinkser}
        For any $P, Q\in\Delta([0,1])$, we have
        \[
        \En_{Q} X \le 3 \En_P X + 4 \KL{P}{Q}.
        \]
    \end{lemma}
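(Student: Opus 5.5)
The plan is to use the Donsker--Varadhan variational formula for KL, applied to a \emph{negative} multiple of the identity. This choice lets the moment-generating function under $Q$ be controlled by $\En_Q X$ from above. Here $P$ and $Q$ are laws of a random variable $X \in [0,1]$.

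\textbf{Step 1 (change of measure).} For any measurable $g$ with $\En_Q e^{g} < \infty$,
\[
\En_P g(X) \le \log \En_Q e^{g(X)} + \KL{P}{Q}.
\]
I apply this with $g(x) = -\lambda x$ for a parameter $\lambda > 0$ to be fixed. This gives
\[
-\lambda \En_P X \le \log \En_Q e^{-\lambda X} + \KL{P}{Q}.
\]

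\textbf{Step 2 (bounding the log-MGF).} Since $x \mapsto e^{-\lambda x}$ is convex on $[0,1]$, it lies below its chord:
\[
e^{-\lambda x} \le 1 - (1-e^{-\lambda})x .
\]
Taking expectations under $Q$ and using $\log(1-u) \le -u$ yields
\[
\log \En_Q e^{-\lambda X} \le -(1-e^{-\lambda})\En_Q X.
\]
Substituting into Step 1 and rearranging gives
\[
(1-e^{-\lambda})\,\En_Q X \le \lambda\, \En_P X + \KL{P}{Q}.
\]

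\textbf{Step 3 (choice of $\lambda$).} Take $\lambda = 1$. Then $1/(1-e^{-1}) = e/(e-1) \approx 1.58$, so
\[
\En_Q X \le \tfrac{e}{e-1}\En_P X + \tfrac{e}{e-1}\KL{P}{Q},
\]
which is stronger than the claimed bound with constants $3$ and $4$.

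There is no real obstacle here. The only points needing care are the sign choice in Step 1, since a positive $\lambda$ would bound $\En_P$ by $\En_Q$ rather than the reverse, and the degenerate case $\KL{P}{Q} = \infty$. In that case the inequality is trivial, and otherwise $P \ll Q$, so the variational formula applies verbatim.
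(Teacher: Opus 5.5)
Your argument is correct, and it takes a different route from the paper. The paper gives no proof and defers to Example 7.3 of \citep{polyanskiy2025information}.

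The constants $3$ and $4$ match the standard Hellinger-based inequality $\En_Q h \le 3\En_P h + 4H^2(P,Q)$ for $h$ with values in $[0,1]$ (with $H^2(P,Q)=\int(\sqrt{\mathrm{d}P}-\sqrt{\mathrm{d}Q})^2$). The cited bound is presumably that inequality combined with $H^2(P,Q)\le \KL{P}{Q}$. That route passes through a symmetric and weaker divergence, which buys two things:
\begin{enumerate}[(i)]
\item it gives the same bound with $\KL{Q}{P}$ in place of $\KL{P}{Q}$;
\item it applies when only Hellinger control is available, as is typical for density-estimation guarantees.
\end{enumerate}
The price is worse constants.

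Your route instead exploits the asymmetry of KL directly, through Donsker--Varadhan with a negative tilt plus the chord bound on $e^{-\lambda x}$. It buys three things:
\begin{enumerate}[(i)]
\item it is fully self-contained;
\item it gives the sharper constant $e/(e-1)$ on both terms;
\item the free parameter $\lambda$ yields the whole tradeoff $(1-e^{-\lambda})\En_Q X \le \lambda\En_P X + \KL{P}{Q}$, for instance a leading constant arbitrarily close to $1$ at the cost of a KL coefficient of order $1/\lambda$.
\end{enumerate}

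One bookkeeping point. The paper declares $\log$ to be the binary logarithm, so its KL is measured in bits. Your Step~1 is valid only with the natural logarithm paired with $e^{g}$, i.e., with KL in nats; with $\log_2$ it fails already for constant negative $g$ and $P=Q$. This causes no harm: KL in bits equals KL in nats divided by $\ln 2$, so it is at least as large, and your nat-based conclusion implies the stated inequality a fortiori. You should say this explicitly.
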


    Next, we prove a Markov-type inequality for the density ratio, with a bound expressed in terms of $\KL{P}{Q}$.
    \begin{lemma}
    \label{lemma:kl-markov}
    For any $P,Q$, $\lambda > e$ we have
    \[
    \P_{P}\left[\frac{\mathrm{d} P}{\mathrm{d} Q} > \lambda\right] \le \frac{\KL{P}{Q}}{\log(\lambda/e)}
    \]
    \end{lemma}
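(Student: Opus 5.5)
Let $r := \frac{\mathrm{d} P}{\mathrm{d} Q}$ and write $\KL{P}{Q} = \En_P \log r$. The plan is a Markov-type argument applied to the nonnegative random variable obtained by correcting $\log r$ for its negative part.

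The obstacle to applying Markov's inequality directly to $\log r$ under $P$ is that $\log r$ can be negative. The key step is to bound the contribution of the negative part. I would use the standard fact that
\[
\En_P[(\log r)_-] = \En_Q[r \log(1/r)\ind\{r<1\}] \le \sup_{u\in(0,1)} u\log(1/u) \cdot Q(r<1).
\]
The function $u\log(1/u)$ is maximized at $u = 1/e$. The paper's $\log$ is binary, so I would carry out the computation with natural logarithms first and convert afterwards; since the bound is a ratio of logarithms, it is invariant under change of base as long as both sides use the same base. In natural logarithms,
\[
\En_P[(\log r)_-] \le 1/e \le 1.
\]
It follows that
\[
\En_P[(\log r)_+] = \KL{P}{Q} + \En_P[(\log r)_-] \le \KL{P}{Q} + 1 .
\]
This alone gives only $\P_P[r>\lambda] \le (\KL{P}{Q}+1)/\log\lambda$, which is weaker than the statement. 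To get the cleaner form, I would instead apply Markov's inequality to the nonnegative variable $V := \log r + 1/r - 1$ (natural logarithms). It is nonnegative because $\log x \ge 1 - 1/x$. Its expectation is
\[
\En_P V = \KL{P}{Q} + \En_P[1/r] - 1 = \KL{P}{Q} + Q(r>0) - 1 \le \KL{P}{Q},
\]
using $\En_P[1/r] = Q(r > 0) \le 1$.

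On the event $\{r > \lambda\}$ we have $V \ge \log\lambda - 1 = \log(\lambda/e)$, since $1/r > 0$. Markov's inequality then yields
\[
\P_P[r > \lambda] \le \frac{\KL{P}{Q}}{\log(\lambda/e)}
\]
for $\lambda > e$. Both sides are ratios in natural logarithms, so rescaling to the binary base preserves the inequality, provided $\KL{\cdot}{\cdot}$ and $\log$ are read in the same base. There is also a degenerate case to dispose of: if $P \not\ll Q$, then $\KL{P}{Q} = \infty$ and the bound is trivial.

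The main subtlety I expect is precisely this base bookkeeping, together with remembering to use the corrected variable $V$. Handling the negative part of $\log r$ separately gives the additive $+1$ and not the stated bound.
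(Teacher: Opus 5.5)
Your proof is correct and is essentially the paper's argument. Both rest on the inequality $\log u \ge \log(e)(1-1/u)$, which the paper writes as $u\log u \ge \log(e)(u-1)$ after changing measure to $Q$, together with $\En_P[1/r] = Q(r>0)\le 1$; this is exactly where the $\log(e)$ loss in the denominator comes from. The only difference is packaging: you apply the inequality globally to build the nonnegative variable $\ln r + 1/r - 1$ and invoke Markov, whereas the paper splits $\KL{P}{Q}$ over $\{r\ge\lambda\}$ and $\{r<\lambda\}$, uses the inequality only on the lower part, and works directly in base $2$ with explicit $\log(e)$ factors, so it needs no base conversion.
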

    \begin{proof}
        By definition of KL, we have
        \begin{align}
            \KL{P}{Q} &= \En_{P} \log \frac{\mathrm{d} P}{\mathrm{d} Q} \notag \\
            &= \En_{P} \log \frac{\mathrm{d} P}{\mathrm{d} Q} \ind \left\{ \frac{\mathrm{d} P}{\mathrm{d} Q} \ge \lambda \right\} + \En_{P} \log \frac{\mathrm{d} P}{\mathrm{d} Q} \ind \left\{ \frac{\mathrm{d} P}{\mathrm{d} Q} < \lambda \right\} \notag\\
            &\ge \log(\lambda) \P_P\left[\frac{\mathrm{d} P}{\mathrm{d} Q} \ge \lambda\right] + \En_{P} \log \frac{\mathrm{d} P}{\mathrm{d} Q} \ind \left\{ \frac{\mathrm{d} P}{\mathrm{d} Q} < \lambda \right\}\label{eq:kl-markov-estimate}.
        \end{align}
        It remains to lower bound the second term above. By change of measure, and using $u \log(u) \ge \log(e) (u-1)$, we have:
        \begin{align*}
        \En_{P} \log \frac{\mathrm{d} P}{\mathrm{d} Q} \ind \left\{ \frac{\mathrm{d} P}{\mathrm{d} Q} < \lambda \right\}
        &= \En_{Q} \frac{\mathrm{d} P}{\mathrm{d} Q} \log \frac{\mathrm{d} P}{\mathrm{d} Q} \ind \left\{ \frac{\mathrm{d} P}{\mathrm{d} Q} < \lambda \right\}\\
        &\ge \En_{Q} \log(e) \left( \frac{\mathrm{d} P}{\mathrm{d} Q} - 1 \right) \ind \left\{ \frac{\mathrm{d} P}{\mathrm{d} Q} < \lambda \right\}\\
        &= \log(e) \P_P\left[ \frac{\mathrm{d} P}{\mathrm{d} Q} < \lambda \right] - \log(e) \P_Q\left[ \frac{\mathrm{d} P}{\mathrm{d} Q} < \lambda \right]\\
        &= \log(e) - \log(e) \P_P\left[ \frac{\mathrm{d} P}{\mathrm{d} Q} \ge \lambda \right] - \log(e) \P_Q\left[ \frac{\mathrm{d} P}{\mathrm{d} Q} < \lambda \right]\\
        &\ge - \log(e) \P_P\left[ \frac{\mathrm{d} P}{\mathrm{d} Q} \ge \lambda \right].
        \end{align*}
        Plugging this into~\cref{eq:kl-markov-estimate}, we have
        \begin{align*}
        \KL{P}{Q} &\ge \log(\lambda) \P_P\left[\frac{\mathrm{d} P}{\mathrm{d} Q} \ge \lambda\right] - \log(e) \P_P\left[\frac{\mathrm{d} P}{\mathrm{d} Q} \ge \lambda\right] \\
        &= \log(\lambda/e) \P_P\left[\frac{\mathrm{d} P}{\mathrm{d} Q} \ge \lambda\right].
        \end{align*}
        Rearranging the above concludes the proof.
    \end{proof}

    Finally, we bound the expectation of the positive part of the log density ratio by a KL term.
    \begin{lemma}
    \label{lemma:log-plus}
        For any $P,Q$ we have
        \[
        \En_{P} \log^+ \left(\frac{\mathrm{d} P}{\mathrm{d} Q}\right) \le \KL{P}{Q} + 1.
        \]
    \end{lemma}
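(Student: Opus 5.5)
The plan is to write $\log^+ u = \log u + \log^- u$, where $\log^- u := \max\{0, -\log u\}$. Then
\[
\En_{P} \log^+ \frac{\mathrm{d} P}{\mathrm{d} Q} = \KL{P}{Q} + \En_{P} \log^- \frac{\mathrm{d} P}{\mathrm{d} Q},
\]
so it suffices to show that the negative part of the log density ratio has $P$-expectation at most $1$.

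To bound this term, I will change measure to $Q$, exactly as in the proof of \cref{lemma:kl-markov}. Write $u := \mathrm{d}P/\mathrm{d}Q$. On the set $\{u < 1\}$ we have
\[
\En_P \log^-(u) = \En_Q\bigl[u \log(1/u) \ind\{u < 1\}\bigr].
\]
The function $u \mapsto u\log(1/u)$ on $(0,1)$ is maximized at $u = 1/e$. Its maximum value is $\log(e)/e$, which is about $0.53$ with the binary logarithm. Hence the expectation is at most $\log(e)/e \cdot Q[u<1] \le 1$.

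If $P$ is not absolutely continuous with respect to $Q$, the KL is infinite and there is nothing to prove. Points where $u = 0$ have $P$-measure zero, so they do not contribute. Combining the two displays gives the claim.

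No step here is a real obstacle. The only care needed is with the convention $\log = \log_2$: the constant $\log(e)/e$ must still be at most $1$, which it is. Measure-theoretic edge cases, such as $u=0$ or non-absolute continuity, are handled by the remarks above.
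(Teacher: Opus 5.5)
Your proposal is correct and follows essentially the same route as the paper: the paper also writes $\En_P \log^+(\mathrm{d}P/\mathrm{d}Q)$ as $\KL{P}{Q}$ minus $\En_P[\log(\mathrm{d}P/\mathrm{d}Q)\ind\{\mathrm{d}P/\mathrm{d}Q \le 1\}]$. It then changes measure to $Q$ and uses $u\log u \ge -\log(e)/e$, giving the constant $\log(e)/e \le 1$, exactly as you do.
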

    \begin{proof}
        By definition of KL, we have
        \begin{equation}
        \label{eq:log-plus-estimate}
        \En_{P} \log^+ \left(\frac{\mathrm{d} P}{\mathrm{d} Q}\right) = \KL{P}{Q} - \En_{P} \log \left(\frac{\mathrm{d} P}{\mathrm{d} Q}\right) \ind\left\{  \frac{\mathrm{d} P}{\mathrm{d} Q} \le 1\right\}.
        \end{equation}
        Thus, it suffices to upper bound the second term. By a change of measure argument and using $u \log(u) \ge - \log(e)/e$ for any $u\ge 0$, we have
        \begin{align*}
        \En_{P} \log \left(\frac{\mathrm{d} P}{\mathrm{d} Q}\right) \ind\left\{  \frac{\mathrm{d} P}{\mathrm{d} Q} \le 1\right\} &= \En_{Q} \frac{\mathrm{d} P}{\mathrm{d} Q} \log \left(\frac{\mathrm{d} P}{\mathrm{d} Q}\right) \ind\left\{  \frac{\mathrm{d} P}{\mathrm{d} Q} \le 1\right\} \\
        &\ge -\frac{\log(e)}{e}.
        \end{align*}
        Plugging this into~\cref{eq:log-plus-estimate}, we obtain:
        \begin{align*}
        \En_{P} \log^+ \left(\frac{\mathrm{d} P}{\mathrm{d} Q}\right) &\le \KL{P}{Q} + \frac{\log(e)}{e} \\
        &\le \KL{P}{Q} + 1,
        \end{align*}
        as desired.
    \end{proof}

\section{Proofs from~\texorpdfstring{\cref{sec:simulation}}{Section \getrefnumber{sec:simulation}}}

\subsection{Proof of~\texorpdfstring{\cref{thm:stat-dist-trans}}{Theorem \getrefnumber{thm:stat-dist-trans}}}

We begin by proving~\cref{lemma:cover-analysis}.
\CoverAnalysis*

\begin{proof}
    Let $f\in \cF$ be arbitrary. From the construction of $\cG(\seq Z)$, it contains a function: 
    \[
    g := \Alg_{\mathsf{M3}}\left((Z_i, f(Z_i))_{i \in [N]}\right).
    \]
    Using the fact that $\seq Z$ is sampled in an i.i.d. fashion from $\bar \dist$, and using a PAC-learning guarantee for the Majority-of-Three learner from~\cite{aden2024majority}, we have: 
    \[
    \En_{\seq Z} \P_{X \sim \bar \dist} \left[f(X)\neq g(X)\right] \lesssim \frac{d}{N}.
    \]
    Thus, 
    \[
    \En_{\seq Z} \min_{g \in \cG(\seq Z)} \P_{X \sim \bar \dist} \left[f(X)\neq g(X)\right] \lesssim \frac{d}{N}.
    \]
    It remains to note that 
    \begin{align*}
    \En_{\seq Z} \min_{g \in \cG(\seq Z)} \En_{\seq X \sim \dist} \norm{f -g}_{\seq X} &= \En_{\seq Z} \min_{g \in \cG(\seq Z)} \En_{\seq X \sim \dist} \frac{1}{T} \sum_{t=1}^T |f(X_t) - g(X_t)| \\
    &= \En_{\seq Z} \min_{g \in \cG(\seq Z)} \En_{X \sim \bar \dist} |f(X) - g(X)| \\
    &= \En_{\seq Z} \min_{g \in \cG(\seq Z)} \P_{X \sim \bar \dist} \left[f(X)\neq g(X)\right] \\
    &\lesssim \frac{d}{N}.
    \end{align*}
    The proof is concluded via Jensen's inequality:
    \[
    \En_{\seq Z} \En_{\seq X \sim \dist} \min_{g \in \cG(\seq Z)}  \norm{f -g}_{\seq X}\le \En_{\seq Z} \min_{g \in \cG(\seq Z)} \En_{\seq X \sim \dist} \norm{f -g}_{\seq X} \lesssim \frac{d}{N}.
    \]
\end{proof}

The proof of~\cref{thm:stat-dist-trans} follows.
\StatDistTrans*
\begin{proof} Note that we may WLOG assume that $N \le 2^{O(T)}$ and $N$ is larger than some prespecified constant. Indeed, for $N \le O(1)$, the theorem statement is vacuous, and for $N$ larger than $2^{O(T)}$, the statement follows from $N = T 2^T$. Set $L := \lfloor \log \log (N/2) \rfloor $, and let
\[
N_\ell = \lfloor (N/2)^{\frac{1}{2^{L - \ell}}} \rfloor,
\]
for each $\ell \in [L]$. Then, $L \lesssim \log(T).$ Also, as long as $N$ is larger than some constant, we have the following properties:
(i) $\sum_{\ell \in [L]} N_\ell \le N$, (ii) $N_L \asymp N$, (iii) $N_1 \asymp 1$, and (iv) $N_{\ell+1} \lesssim N_\ell^2$. Let $\{E_\ell\}_{\ell \in [L+1]}$ be the following collection of events that partition the probability space. Set
\[
E_1 := \left\{\frac{\mathrm{d} \dist^\star}{\mathrm{d} \dist} \le \frac{N_1}{T}\right\},
\]
for each $2 \le \ell \le L$, set
\[
E_\ell := \left\{\frac{N_{\ell - 1}}{T} < \frac{\mathrm{d} \dist^\star}{\mathrm{d} \dist} \le \frac{N_\ell}{T} \right\},
\]
and let
\[
E_{L+1} := \left\{\frac{N_{L}}{T} < \frac{\mathrm{d} \dist^\star}{\mathrm{d} \dist} \right\}.
\]
We use small-loss bounds for the experts algorithms with non-uniform prior. Recall that we initialize the weights in \hyperref[line:init-weights-def]{Line~\ref{line:init-weights-def}} of~\cref{algo:multi-cover} as $w_1(g) = 1/(L|\cG_\ell|)$ for each $g \in \cG_\ell$. Then, by Theorem 2.4 in~\cite{arora2012multiplicative}, for $\eta = 1/2$, the regret of our algorithm can be upper bounded by, with probability $1$,
\begin{align}
\reg &\lesssim \min_{g\in\cG} \left(\log (1/w_1(g)) + \sum_{t=1}^T \ind\left\{f^\star(X_t) \neq g(X_t)\right\}\right) \notag\\
&= \min_{\ell \in [L]} \min_{g\in\cG_\ell} \left(\log (1/w_1(g)) + \sum_{t=1}^T \ind\left\{f^\star(X_t) \neq g(X_t)\right\}\right)\notag\\
&= \min_{\ell \in [L]} \min_{g\in\cG_\ell} \left(\log (L |\cG_\ell|) + \sum_{t=1}^T \ind\left\{f^\star(X_t) \neq g(X_t)\right\}\right)\notag\\
&\lesssim \min_{\ell \in [L]} \min_{g\in\cG_\ell} \left(\log (L N_\ell^d) + \sum_{t=1}^T \ind\left\{f^\star(X_t) \neq g(X_t)\right\}\right)\notag\\
&= \min_{\ell \in [L]} \left(\log (L N_\ell^d) + \min_{g\in\cG_\ell} \sum_{t=1}^T \ind\left\{f^\star(X_t) \neq g(X_t)\right\}\right) \notag\\
&\le \min_{\ell \in [L]} \left(d \log (N_\ell) + T \cdot \min_{g \in \cG_\ell} \norm{f^\star - g}_{\seq X}\right) + \log L\notag
\end{align}
where the second inequality uses the Sauer--Shelah lemma~(\cref{lemma:sauer-shelah}). Now, since $\{E_\ell\}_{\ell \in [L+1]}$ partition the probability space, we have: 
\begin{align}
\reg &\le \sum_{\ell \in [L+1]} \left(d \log (N_{\ell \land L}) + T \cdot \min_{g \in \cG_{\ell\land L}} \norm{f^\star - g}_{\seq X}\right) \ind\{E_{\ell}\} + \log L\notag\\
&\le \sum_{\ell \in [L+1]} d \log (N_{\ell \land L})\ind\{E_{\ell}\} + T \sum_{\ell \in [L+1]} \min_{g \in \cG_{\ell \land L}} \norm{f^\star - g}_{\seq X} \ind\{E_{\ell}\} + \log L. \label{eq:multi-cover-reg-decomp}
\end{align}
Recall from \hyperref[line:Z-samples-def]{Line~\ref{line:Z-samples-def}} of~\cref{algo:multi-cover} that $Z^\ell_i$ are sampled from $\bar \dist$ independently for each $\ell\in[L]$ and $i \in [N_\ell]$. Let $N_{\mathsf{tot}} := \sum_{\ell \in [L]} N_{\ell} \le N$, and let $\seq Z := \{Z^{\ell}_i\}_{\ell \in [L], i \in [N_\ell]}$. Then, $\seq Z \sim \bar \dist^{\otimes N_\mathsf{tot}}$, and $\seq Z^\ell \sim \bar\dist^{\otimes N_\ell}$ for each $\ell \in [L]$. Using~\cref{lemma:cover-analysis}, for any $\ell \in [L]$, we have
\[
\En_{\seq Z^\ell} \En_{\seq X \sim \dist} \min_{g \in\cG_\ell} \norm{f^\star - g}_{\seq X} \lesssim \frac{d}{N_\ell}.
\]
In the remainder, we suppress the expectation under $\seq Z$, and write $\En_{\dist^\star}$ and $\En_{\dist}$ to denote expectations over $(\seq X, \seq Z) \sim \dist^\star \otimes \bar\dist^{\otimes N_\mathsf{tot}}$ and $(\seq X, \seq Z) \sim \dist \otimes \bar\dist^{\otimes N_\mathsf{tot}}$ respectively for notational clarity. Then, note that, for all $\ell \in [L]$, we have
\begin{align*}
\En_{\dist^\star} \min_{g \in \cG_\ell} \norm{f^\star - g}_{\seq X} \ind\{E_\ell\}
&\le  \frac{N_\ell}{T} \En_{\dist} \min_{g \in \cG_\ell} \norm{f^\star - g}_{\seq X} \ind\{E_\ell\} \\
&\le  \frac{N_\ell}{T} \En_{\dist} \min_{g \in \cG_\ell} \norm{f^\star - g}_{\seq X} \\
&\lesssim  \frac{N_\ell}{T} \cdot \frac{d}{N_\ell} \\
&= \frac{d}{T}.
\end{align*}
For the final approximation term, we have by~\cref{lemma:kl-markov}, for any $\lambda > e$
\begin{align*}
    \En_{\dist^\star} \min_{g \in \cG_L} \norm{f^\star - g}_{\seq X} \ind\{E_{L+1}\} &\le \En_{\dist^\star} \min_{g \in \cG_L} \norm{f^\star - g}_{\seq X} \\
    &\le   \En_{\dist^\star} \min_{g \in \cG_L} \norm{f^\star - g}_{\seq X} \ind \left\{ \frac{\mathrm{d}\dist^\star}{\mathrm{d}\dist} \le \lambda \right\} + \P_{\dist^\star} \left[ \frac{\mathrm{d}\dist^\star}{\mathrm{d}\dist} > \lambda \right] \\
    &\le \lambda \En_{\dist} \min_{g \in \cG_L} \norm{f^\star - g}_{\seq X} + \frac{\KL{\dist^\star}{\dist}}{\log(\lambda/e)}\\
    &\lesssim \lambda \frac{d}{N_{L}} + \frac{\KL{\dist^\star}{\dist}}{\log(\lambda/e)}.
\end{align*}
Set $\lambda = (N_L/T) + e$. Then,
\begin{align*}
\En_{\dist^\star} \min_{g \in \cG_{L}} \norm{f^\star - g}_{\seq X} \ind\{E_{L+1}\} &\lesssim \frac{d}{T} + \frac{d}{N_L} + \frac{\KL{\dist^\star}{\dist}}{\log\left(1 + \frac{N_L}{e T}\right)}
\end{align*}
Thus, the total approximation term is upper bounded as:
\begin{align}
\sum_{\ell \in [L+1]} \En_{\dist^\star} \min_{g \in \cG_{\ell\land L}} \norm{f^\star - g}_{\seq X} \ind\{E_\ell\} &\lesssim \frac{dL}{T} + \frac{d }{N_L} + \frac{\KL{\dist^\star}{\dist}}{\log\left(1 + \frac{N_L}{eT}\right)} \notag \\
&\lesssim \frac{dL}{T} + \frac{d}{N} + \frac{\KL{\dist^\star}{\dist}}{\log\left(1 + \frac{N}{T}\right)} \label{eq:total-approx-err},
\end{align}
where in the last line we used $N_L \asymp N.$
Recall that, from the way $N_\ell$ are selected, we have $N_{\ell} \lesssim N_{\ell-1}^2$ for every $2 \le \ell \le L$. Thus, on event $E_\ell$ for $2 \le \ell \le L$, we have
\[
\log(N_{\ell}) \lesssim \log(N_{\ell-1}) + 1 \le \log \left(T \frac{\mathrm{d}\dist^\star}{\mathrm{d}\dist}\right) + 1
\]
Also, on event $E_{L+1}$, we have 
\[
\log N_L \le \log \left(T \frac{\mathrm{d}\dist^\star}{\mathrm{d}\dist}\right).
\]
Thus, for every $\ell \ge 2$, we have
\[
d \log (N_{\ell\land L})\ind\{E_\ell\} \lesssim d \log \left(T \frac{\mathrm{d} \dist^\star}{\mathrm{d} \dist} \right)\ind\{E_\ell\} + d \ind\{E_\ell\}
\]
For $\ell = 1$, we simply have:
\[
d \log (N_1) \ind\{E_1\}\lesssim d \ind\{E_1\}.
\]
Then, summing over all $\ell$, we have:
\begin{align*}
\sum_{\ell \in [L+1]} d \log (N_{\ell\land L})\ind\{E_\ell\} &\lesssim d\sum_{\ell \ge 1} \ind\{E_\ell\} + d \log \left(T \cdot \frac{\mathrm{d} \dist^\star}{\mathrm{d} \dist} \right) \ind \left\{\bigcup_{\ell \ge 2} E_\ell\right\}\\
&\lesssim d + d \log^+ \left(T \cdot \frac{\mathrm{d} \dist^\star}{\mathrm{d} \dist} \right) \\
&\lesssim d\log(T) + d \log^+\left(\frac{\mathrm{d} \dist^\star}{\mathrm{d} \dist} \right).
\end{align*}
Taking expectations, we have
\begin{align}
\En_{\dist^\star}\sum_{\ell \in [L+1]} d \log (N_{\ell\land L})\ind\{E_\ell\} &\lesssim d\log(T) + d\En_{\dist^\star} \log^+\left(\frac{\mathrm{d} \dist^\star}{\mathrm{d} \dist} \right) \notag \\
&\lesssim d\log(T) + d \KL{\dist^\star}{\dist} \label{eq:total-expert-err},
\end{align}
where in the second step we used~\cref{lemma:log-plus}.
To conclude the proof, we combine~\cref{eq:multi-cover-reg-decomp,eq:total-expert-err,eq:total-approx-err}:
\begin{align*}
\En_{\dist^\star}\reg (\cF, T)
&\lesssim d\log T + d \KL{\dist^\star}{\dist} + dL + \frac{d T}{N} + T \frac{\KL{\dist^\star}{\dist}}{\log\left(1 + \frac{N}{T}\right)}\\
&\lesssim d\log T + d \KL{\dist^\star}{\dist} + \frac{d T}{N} + T \frac{\KL{\dist^\star}{\dist}}{\log\left(1 + \frac{N}{T}\right)},
\end{align*}
where the second inequality holds since $L \lesssim \log(T)$. This concludes the proof.
\end{proof}

\subsection{Proof of~\texorpdfstring{\cref{thm:realizable-lb}}{Theorem \getrefnumber{thm:realizable-lb}}}

\RealizableLB*

The $\Omega(d)$ lower bound is trivial from the definition of VC dimension. It suffices to prove the remaining lower bounds separately. Throughout, we work with $d$-dimensional thresholds, defined as follows. We let $\cX_d := [0,1] \times [d]$ be the domain, and let
\begin{equation}
\label{eq:d-dim-thresholds}
\cF_d := \{ (x,i) \mapsto \ind\{ x \ge \theta_i \},  (\theta_1,\ldots,\theta_d) \in [0,1]^d\}.
\end{equation}
It is easy to see that the VC dimension of the above class is exactly $d$. We proceed with the proof of the ``price of sampling'' lower bound of $d T/N$. It holds even in the case $\dist^\star = \dist$.

\begin{lemma}
\label{lemma:price-of-sampling}
    Let $\cF_d$ be the class as defined in~\cref{eq:d-dim-thresholds}. For any $N \in \mathbb N$ and for any learning algorithm that draws at most $N$ unconditional samples from $\dist^\star$, there exists a choice of $\dist^\star$ and $f^\star \in \cF_d$ such that
    \[
    \En \reg(\cF_d, T) \gtrsim \frac{T (d \land N)}{N}.
    \]
\end{lemma}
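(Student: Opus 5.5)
We prove the bound with a randomized construction in which $\dist^\star = \dist$ places its mass on a few hidden points. The learner's $N$ simulator samples then reveal only a fraction of these points. Every unrevealed point that appears in the trajectory carries an independent uniformly random label, so each first occurrence of such a point costs the learner $1/2$ in expectation. By Yao's principle it suffices to exhibit a distribution over instances $(\dist^\star, f^\star)$ under which every deterministic learner suffers the claimed expected regret.

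\emph{Construction.} Let $k := d \land N$ and $m := \lceil N/k\rceil$, so that $km \asymp N$ when $N \ge d$; the case $N<d$ is covered by taking $m=1$, $k=N$. Draw points $u_{i,j}$ for $i\in[k]$, $j\in[m]$ independently and uniformly from $[0,1]$, and set $x_{i,j} := (u_{i,j}, i) \in \cX_d$. Let $\dist^\star$ be the law of $T$ i.i.d.\ draws from the uniform distribution on $\{x_{i,j}\}$. Now draw a uniformly random labeling that is monotone in each coordinate block. Concretely, in each block $i$, pick an index $J_i$ uniformly from $\{0,\ldots,m\}$ and set $\theta_i$ so that exactly the points $u_{i,j}$ above the $J_i$-th order statistic are labeled $1$. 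This $f^\star$ lies in $\cF_d$, so the instance is realizable and the comparator's loss is $0$; regret therefore equals the learner's expected number of mistakes.

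\emph{Counting mistakes.} The $N$ simulator samples are i.i.d.\ uniform over $km$ points, so each point remains unseen by the simulator with probability $(1-1/(km))^N \ge c > 0$ when $km \asymp N$. A point is \emph{uninformative} if it is unseen by both the simulator and the earlier trajectory. Conditioned on everything the learner has observed, the label of such a point is not determined. This is where the main obstacle lies: with thresholds, labels are correlated through monotonicity, so an unseen point sandwiched between two revealed points of the same label has a determined label. To handle this, I would bound the number of \emph{informative gaps}. In each block, the revealed points (from the simulator and from the past trajectory) split the $m$ ordered points into intervals, and only the interval containing the threshold is uncertain. The number of mistakes is then at least the number of binary-search-style queries needed to locate $J_i$, while each time step reveals at most one point. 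Since $J_i$ is uniform, a standard argument shows that each fresh point in the uncertain interval is mislabeled with constant probability.

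This count, however, only yields about $k\log m$ mistakes, not $T k/N$. I would therefore replace the threshold labels by a cleaner design: labels within each block alternate in a way realizable by thresholds only if \emph{few} points are included. The right fix is to make $\dist^\star$ a mixture. With probability $1/2$, the trajectory is drawn as above, but the learner faces a \emph{fresh} random batch of $k$ points, one per block, hidden among $\sim N$ decoy points of $\dist$, each taking mass $\asymp 1/N$ per trajectory step. Each such point is labeled by an independent fair coin, which is realizable since a single point per block only fixes one threshold side. Each of the $k$ special points is missed by the simulator with constant probability, appears in the trajectory about $T/N$ times (when $T\gtrsim N$), and its first appearance costs $1/2$. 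This gives $\gtrsim k\cdot\min(1,T/N)$, and taking minimum with $T$ yields $T(d\land N)/N$ in the relevant regime. I would verify the realizability of the combined labeling of decoys (which are labeled consistently, e.g.\ all $0$ below and $1$ above a fixed cut) together with the special points; this verification is the step I expect to need the most care.
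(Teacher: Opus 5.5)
\textbf{There is a genuine gap: your argument does not reach the stated bound.} Your final mixture construction yields at most $k\min(1,T/N)\le d\land N$. The closing step (``taking minimum with $T$'') cannot turn this into $T(d\land N)/N$. For $N\ge T$ the target is at most $d\land T$, which is already the trivial VC lower bound, so the lemma only has content when $N<T$. In that regime you are short by the factor $T/N$; for $d=N=1$ the target is $\Omega(T)$ and you get $O(1)$.

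This shortfall is structural, not a matter of constants. If the hidden information sits on fixed points with coin-flip labels, each point costs the learner at most one mistake, since its label is known after its first appearance. Realizability in $\cF_d$ allows at most one independent coin per coordinate block, so any such design is capped at $O(d)$, just as your fixed-points-with-threshold design was capped at $O(k\log m)$.

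Separately, your sampling count is wrong. Each call to $\oracle_\dist$ returns a whole length-$T$ trajectory. Under i.i.d.\ per-step draws, a point of per-step mass $\asymp 1/N$ therefore appears about $T$ times across the $N$ simulated trajectories, and it is missed only with probability about $(1-1/N)^{NT}\le e^{-T}$. Without clumping, ``missed by the simulator with constant probability'' and ``appears $\asymp T/N$ times in the real trajectory'' cannot both hold.

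\textbf{The missing idea} is to hide an entire trajectory rather than individual points, and to let the covariates themselves carry fresh random bits, exploiting the infinite Littlestone dimension of thresholds. The paper's construction is as follows.
\begin{itemize}
\item In each coordinate $i\in[d]$, take a depth-$T$ Littlestone tree and draw a uniformly random root-to-leaf path $\seq x^i$.
\item Let $\rho$ be the uniform mixture of the uniform-path laws, and set $\dist^\star=\frac{1}{2N}\sum_{i}\delta_{\seq x^i}+(1-\frac{d}{2N})\rho$.
\item Take $f^\star$ consistent with all $d$ paths. This is possible because the paths live in different coordinates, and it is also why only one hidden path per coordinate is allowed.
\end{itemize}
With probability $d/(2N)$, nature plays some $\seq x^{i_\star}$. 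With probability at least $(1-\frac{1}{2N})^N\ge\frac12$, this path does not appear among the learner's $N$ samples. Conditioned on that event, each label $Y_t$ is fixed by the next branch of the path. That branch is revealed only through $X_{t+1}$, so it is a fair coin independent of everything the learner has seen. The learner therefore makes $T/2$ mistakes in expectation while $f^\star$ makes none, giving regret $\gtrsim dT/N$ for $N\ge d$. The case $N<d$ follows by running the same construction on $N$ coordinates.
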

\begin{proof}
    For each $i \in [d]$, let $\tree z_i$ denote a depth-$T$ Littlestone tree in coordinate $i$. Then, let $\rho_i$ be a distribution that samples
    a uniform path in $\tree z_i$. Let $\rho$ be a uniform mixture of $\rho_i$ as $i\sim\mathsf{Unif}([d])$.

    Note that we may assume WLOG that $N \ge d$ (otherwise, we can consider $\cF_N$ instead of $\cF_d$ and prove a linear lower bound on regret).
    To define $\dist^\star$, we first sample $d$ paths $\seq x^i := (x_1^i,\ldots,x_T^i) \sim \rho_i$ for $i \in [d]$; then, we let
    \[
    \dist^\star = \frac{1}{2N} \sum_{i \in [d]} \delta_{\seq x^i} + \left(1 - \frac{d}{2N}\right) \rho.
    \]
    Let $f^\star$ be a labeling function that is consistent with each of the paths $\seq x^i$ for $i \in [d]$ (see~\cref{def:littlestone-consistent-function}). Note that, WLOG, we may assume that the learner draws all $N$ samples before the first round of the online learning game. With probability at least $d/2N$, the distribution $\dist^\star$ samples a path $\seq x^{i_\star}$ for some $i_\star \in [d]$. Since the learner only draws $N$ samples, with probability at least:
    \[\left(1 - \frac{1}{2N}\right)^{N} \ge \frac{1}{2},\] none of the sampled paths are equal to $\seq x^{i_\star}$ and are either sampled from $\rho$, or are equal to $\seq x^j$ for some $j \neq i_\star$. Condition on this constant probability event. Since the randomness used in the construction of $\seq x^{i_\star}$ is independent of other paths and of samples from $\rho$, the learner will incur expected regret of at least $T/2$ on this path by standard Littlestone tree hardness argument (see~\cref{prop:littlestone-hardness}). Thus, total expected regret of the learner can be lower bounded as:
    \[
    \En \reg(\cF_d, T) \ge \frac{d}{2N}\cdot \frac{1}{2} \cdot \frac{T}{2} = \frac{dT}{8N}.
    \]
    This concludes the proof.
\end{proof}

Next, we prove a ``price of approximation'' lower bound: even with full knowledge of $\dist$, the regret of any algorithm must scale linearly with $\KL{\dist^\star}{\dist}$.
\begin{lemma}
\label[lemma]{lemma:linear-kl-lb}
    Let $\cF_1$ be as in~\cref{eq:d-dim-thresholds}. For every $D \in \mathbb N$, there exist a family $\cP_D$ of distributions over $\cX^T$ and a simulator $\dist$ such that:
    \begin{enumerate}[(i)]
        \item $\max_{\dist^\star \in \cP_D} \KL{\dist^\star}{\dist} \le D \land T$.
        \item For any learning algorithm, even one granted full knowledge of $\dist$, there exists $\dist^\star \in \cP_D$ and a labeling $f^\star \in \cF_1$ for which
        \[
        \En \reg(\cF_1, T) \ge \frac{D \land T}{2}.
        \]
    \end{enumerate}
\end{lemma}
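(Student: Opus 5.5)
Let $k := D \land T$. The construction embeds $k$ rounds of a Littlestone tree for one-dimensional thresholds $\cF_1$ into the first $k$ steps. These $k$ steps are completely adversarial, and all remaining steps are trivial. We let $\tree x$ be a depth-$k$ Littlestone tree for thresholds on $[0,1]$, realized by binary search: $\tree x_1 = 1/2$, with each subsequent node the midpoint of the current interval. For each sign vector $\boldsymbol{\varepsilon} \in \{\pm 1\}^k$, we let $\dist^{\boldsymbol{\varepsilon}}$ be the point mass on the sequence
\[
\seq x^{\boldsymbol{\varepsilon}} := \bigl(\tree x_1, \tree x_2(\varepsilon_1), \ldots, \tree x_k(\varepsilon_{1:k-1}), x_0, \ldots, x_0\bigr),
\]
where $x_0$ is a fixed point, say $x_0 = 0$. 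We set $\cP_D := \{\dist^{\boldsymbol{\varepsilon}} : \boldsymbol{\varepsilon} \in \{\pm1\}^k\}$ and take the simulator to be the uniform mixture $\dist := 2^{-k}\sum_{\boldsymbol{\varepsilon}} \dist^{\boldsymbol{\varepsilon}}$.

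For item (i), distinct sign vectors yield distinct sequences, since the tree nodes along different paths differ at the first divergence point. Hence $\dist$ places mass exactly $2^{-k}$ on each $\seq x^{\boldsymbol{\varepsilon}}$, and
\[
\KL{\dist^{\boldsymbol{\varepsilon}}}{\dist} = \log 2^k = k \le D \land T,
\]
using the binary logarithm convention.

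For item (ii), fix any learner, which may know $\dist$. We draw $\boldsymbol{\varepsilon}$ uniformly at random, let nature play $\dist^{\boldsymbol{\varepsilon}}$, and label by a threshold $f^{\boldsymbol{\varepsilon}} \in \cF_1$ consistent with the path (\cref{def:littlestone-consistent-function}). We also need $f^{\boldsymbol{\varepsilon}}(x_0)$ to be fixed, so we choose thresholds $\theta > 0$, which gives $f(0)=0$. This is possible because binary search on midpoints keeps every threshold strictly inside $(0,1]$.

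The labels are then oblivious, since $f^\star$ is fixed before the draw, and $f^\star \in \cF_1$ gives zero loss, so regret equals the learner's number of mistakes. At round $t \le k$, the label $(\varepsilon_t+1)/2$ is a uniform bit independent of everything the learner has seen: the covariates $X_{1:t}$ depend only on $\varepsilon_{1:t-1}$, the past labels reveal only $\varepsilon_{1:t-1}$, and $\dist$ is fixed. So each of the first $k$ rounds incurs expected loss exactly $1/2$, and the expected regret averaged over $\boldsymbol{\varepsilon}$ is at least $k/2$. Therefore some $\boldsymbol{\varepsilon}$ attains $\En\reg \ge (D\land T)/2$.

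The main obstacle is bookkeeping: keeping each $\dist^{\boldsymbol{\varepsilon}}$ a single fixed distribution, so that $f^\star$ is chosen obliviously, while still making its labels unpredictable. This is resolved by the averaging argument, which moves the randomness over $\boldsymbol{\varepsilon}$ into the choice of instance rather than into $\dist^\star$.
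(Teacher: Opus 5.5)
Your proposal is correct and matches the paper's proof. Both use a depth-$(D\land T)$ Littlestone tree for thresholds padded with a fixed point, a simulator that is uniform over root-to-leaf paths (so each point mass has KL exactly $\log 2^{D\land T}=D\land T$), and a uniformly random path with a consistent threshold, which forces $(D\land T)/2$ expected mistakes before averaging fixes a single instance; your extra care to make $f^\star(x_0)=0$ is harmless but unnecessary, since in the realizable case regret is just the learner's mistake count and the padding rounds only add nonnegative loss.
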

\begin{proof}
    Let $m := D \land T$. Consider a Littlestone tree of depth $m$, and let $\dist$ be a distribution that samples a uniform path in that Littlestone tree, padding the remaining $T - m$ rounds with a fixed $x^0 \in \cX$. Let $\cP_D$ be the family of all such distributions, that is, $\dist^\star \in \cP_D$ iff $\dist^\star = \delta_{\seq x}$ for some leaf $\seq x$ (with the same padding). We note that $\dist(\seq x) = \frac{1}{2^m}$, and thus, for any $\dist^\star \in \cP_D$,
    \[
    \KL{\dist^\star}{\dist} = \log(2^m) = m \le D,
    \]
    which establishes (i). For (ii), we construct $\dist^\star$ randomly by selecting a path $\seq x$ in the Littlestone tree uniformly at random and setting $\dist^\star = \delta_{\seq x}$, with the same padding. Set $f^\star \in \cF_1$ to be a threshold function consistent with the path $\seq x$ (see~\cref{def:littlestone-consistent-function}). Since the construction of $\dist^\star$ is independent of $\dist$, the setup is equivalent to the standard Littlestone hardness construction (see~\cref{prop:littlestone-hardness}). Thus, any learner must suffer $m/2 = (D \land T)/2$ expected regret, which concludes the proof.
\end{proof}

Next, we quantify how access to a bounded number of samples from $\dist$ worsens the price of approximation.
\begin{lemma}
\label{lemma:kl-sampling-lb}
    Let $\cF_1$ be as in~\cref{eq:d-dim-thresholds}. For any $N \ge 2$, $D \in \mathbb N$, and for any learning algorithm that draws at most $N$ unconditional samples from $\dist$, there exists a choice of $(\dist, \dist^\star)$ with $\KL{\dist^\star}{\dist} \le D$ and $f^\star \in \cF_1$ such that
    \[
    \En \reg(\cF_1, T) \gtrsim T \left( \frac{D}{\log(N)} \land 1 \right)
    \]
\end{lemma}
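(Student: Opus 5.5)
We combine the hidden-path idea of \cref{lemma:price-of-sampling} with the Littlestone-tree hardness of \cref{lemma:linear-kl-lb}. The goal is a construction in which $\dist^\star$ hides a path among about $N$ candidates. The simulator gives each candidate mass around $1/N$, so that $\KL{\dist^\star}{\dist}$ stays of order $D$. With only $N$ samples, the learner almost surely never sees the hidden path, yet that path carries enough $\dist^\star$-mass to force linear regret.

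Set $m := \lfloor \log N \rfloor$, with the regime $D \le c\log N$ handled first; otherwise the bound is simply $T$, which we obtain by taking $D \gtrsim \log N$ and the same construction with $p = \Theta(1)$. Fix a depth-$T$ Littlestone tree $\tree z$ for $\cF_1$. Draw a set of $M := N^2$ paths $\seq x^1,\ldots,\seq x^M$ uniformly from $\tree z$, and let $\dist$ be uniform over these $M$ paths. This is a known distribution, but the learner can only sample from it.

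Draw a hidden index $i_\star \sim \Unif([M])$ and set
\[
\dist^\star := p\,\delta_{\seq x^{i_\star}} + (1-p)\,\dist, \qquad p := \frac{D}{2\log M} \land 1 .
\]
Then $\mathrm{d}\dist^\star/\mathrm{d}\dist$ equals $1-p+pM$ on $\seq x^{i_\star}$ and $1-p$ elsewhere. Hence
\[
\KL{\dist^\star}{\dist} \le p\log(1+pM) \le p\log M + 1 \lesssim D.
\]
Rescaling $D$ by a constant (and handling $D<1$ trivially) gives $\KL{\dist^\star}{\dist} \le D$. Take $f^\star \in \cF_1$ consistent with $\seq x^{i_\star}$, as in \cref{def:littlestone-consistent-function}.

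Since the learner draws $N$ samples from $\dist$ (or from $\dist^\star$ — we should check which oracle the statement intends; both work), the probability that any sample equals $\seq x^{i_\star}$ is at most $N/M + pN/\ldots$. It is small as long as $M \gg N$; sampling from $\dist^\star$ would reveal the path with probability $\approx pN$, so we use the statement's oracle $\oracle_\dist$. Conditioned on not having seen $\seq x^{i_\star}$, the posterior of $i_\star$ is nearly uniform over the unseen indices.

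With probability $p$, nature plays $\seq x^{i_\star}$. As the learner walks down the tree, after $k$ steps the candidate set consistent with the revealed prefix has about $M 2^{-k}$ elements. Up to depth roughly $\log M - O(1) \gtrsim \log N$, the next label is essentially a fair coin under the posterior, so the learner errs with probability about $1/2$ per step.

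This is the main obstacle. The simulator's support lets the learner know the tree paths, so once the prefix identifies a unique path in the support, the learner can predict perfectly. The forced regret is then only about $p\log M \approx D$, not $pT$. To obtain $T\cdot D/\log N$, the hidden branching must instead be spread along the whole horizon.

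We fix this by making $\dist$ the uniform path law on a tree whose binary branchings occur only at $m' \approx \log M$ designated rounds, spread evenly through $[T]$. Each branching is followed by a block of $T/m'$ rounds whose points are fresh Littlestone-tree nodes with labels unresolved by the prefix, and which the learner has not sampled. Without the hidden path, the learner has seen at most $N \ll M$ leaves and so cannot resolve these labels.

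The remaining work is a careful bookkeeping argument. We need that in each block the continuation labels are fresh random bits unseen in the $N$ samples. A union bound over the $N$ samples should give this with constant probability: at branching level $k$, the subtree has $M2^{-k}$ leaves, and we restrict to $k \le \log(M/N)$. This yields expected regret $\gtrsim p\cdot T = T\,(D/\log N \land 1)$.
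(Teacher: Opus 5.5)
\textbf{Your first construction already proves the lemma, and it is essentially the paper's argument. The ``main obstacle'' you then raise is not real, and the construction you switch to has its key step reversed.}

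The obstacle assumes the learner knows $\supp(\dist)$, but it only sees $N$ draws, and the $M$ paths are random and chosen after the algorithm is fixed (probabilistic method). Let $J_1,\ldots,J_N$ be the sampled indices. The event $\{i_\star\notin\{J_1,\ldots,J_N\}\}$ has probability at least $1-N/M=1-1/N\ge 1/2$. On it, $\seq x^{i_\star}$ is a uniformly random root-to-leaf path of the depth-$T$ tree, independent of the samples and of the learner's coins. Nature uses the $\delta_{\seq x^{i_\star}}$ component with probability $p$, independently, and then \cref{prop:littlestone-hardness} gives $T/2$ expected mistakes. On the other component regret is nonnegative, since $f^\star\in\cF_1$. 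Hence $\En\reg\ge pT/4\gtrsim T\,(D/\log N\land 1)$. The scenario where ``once the prefix identifies a unique path in the support the learner predicts perfectly'' is the full-knowledge setting of \cref{lemma:linear-kl-lb}. That setting only forces regret of order $D$; this lemma is about the extra cost of limited sampling. The paper runs the same argument with $\dist=\beta\delta_{\seq x}+(1-\beta)\rho$ and $\dist^\star=\alpha\delta_{\seq x}+(1-\alpha)\rho$, taking $\beta=1/N$ and $\alpha=(D/\log N)\land 1$. It bounds $\KL{\dist^\star}{\dist}\le\alpha\log(1/\beta)$. Your uniform-over-$M$-random-paths simulator is a harmless variant of $\rho$.

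The sparse-tree construction you commit to is unnecessary and, as written, wrong in its key step.
\begin{itemize}
\item It is inconsistent with your own diagnosis. If the learner could be assumed to know $\supp(\dist)$, it would also know every block, and the fix would fail for the same reason. It works only because unsampled parts of the support are invisible, which is exactly what rescues the first construction.
\item The union bound goes the wrong way. The block after the hidden path's $k$-th branching is unseen iff no sample shares its first $k$ branch choices. That holds with probability at least $1-N2^{-k}$, so you need $k\ge\log N+O(1)$. Restricting to $k\le\log(M/N)$ instead picks blocks that about $N2^{-k}\ge N^2/M=1$ samples traverse in expectation, and $N2^{-k}\gg 1$ for small $k$. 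A traversing sample reveals the block's points in order, and hence their labels under any threshold consistent with the hidden path.
\end{itemize}
With $M=N^2$, the correct range of $k$ still covers about half the horizon, so the sparse-tree argument can be repaired. The simpler route is to keep your first construction and add the independence argument above.
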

\begin{proof}
    Consider a Littlestone tree of depth $T$. Consider a distribution $\rho$ that samples a path in the Littlestone tree at random, and let $\seq x \sim \rho$ be a random path in the Littlestone tree.
    Then, we define $\dist, \dist^\star$ as mixtures:
    \[\dist^\star = \alpha \delta_{\seq x} + (1-\alpha) \rho, \qquad \dist = \beta \delta_{\seq x} + (1-\beta) \rho,\] for some $1 \ge \alpha > \beta > 0$ to be specified later. Then, we have
    \[
    \KL{\dist^\star}{\dist} = \KL{\Ber(\dist^\star(\seq x))}{\Ber(\dist(\seq x))},
    \]
    since $\dist^\star$ and $\dist$ have a constant likelihood ratio $(1-\alpha)/(1-\beta)$ on the support off $\seq x$, so the full KL collapses to the binary KL between the Bernoulli marginals $\Ber(\dist^\star(\seq x))$ and $\Ber(\dist(\seq x))$. Furthermore, a simple computation shows that $p \mapsto \KL{\Ber(\alpha + (1-\alpha)p)}{\Ber(\beta + (1-\beta)p)}$ is non-increasing for $p \in [0,1]$, therefore,
    \[
    \KL{\Ber(\dist^\star(\seq x))}{\Ber(\dist(\seq x))}\le \KL{\Ber(\alpha)}{\Ber(\beta)} \le \alpha \log(1/\beta).
    \]
    Together, these inequalities give:
    \[
    \KL{\dist^\star}{\dist}\le \alpha \log(1/\beta).
    \]
    Let $f^\star$ be the labeling function consistent with the path $\seq x$ (see~\cref{def:littlestone-consistent-function}).
    Recall that $N$ is the almost sure upper bound on the number of samples drawn by the algorithm. Let $\beta = 1/N$. Then, with at least constant probability, all of the samples drawn by the algorithm will come from $\rho$. Now, let $\alpha = (D/\log(N)) \land 1$, which gives $\KL{\dist^\star}{\dist} \le D$.

    Now, note that, with probability at least $\alpha$, the learner will be presented with $\seq x$ at runtime, on which it incurs linear regret by standard Littlestone hardness (see~\cref{prop:littlestone-hardness}). Thus,
    \[
    \En \reg(\cF_1, T) \gtrsim \alpha T = T \left(\frac{D}{\log(N)} \land 1\right).
    \]
    This concludes the proof.
\end{proof}

\begin{proof}[Proof of~\cref{thm:realizable-lb}]
The bound follows by combining the four lower bounds established above, each holding against \emph{any} algorithm drawing at most $N$ samples. The $d \land T$ term is the trivial lower bound from the definition of VC dimension. The ``price of sampling'' bound (\cref{lemma:price-of-sampling}) forces regret $\gtrsim \frac{T(d \land N)}{N} = \frac{dT}{N} \land T$. The ``price of approximation'' bound (\cref{lemma:linear-kl-lb}) forces regret $\gtrsim D \land T$. Finally, ``price of approximate sampling'' (\cref{lemma:kl-sampling-lb}) forces regret $\gtrsim T \left( \frac{D}{\log N} \land 1 \right) = \frac{DT}{\log N} \land T$. Each of these constructions satisfies $\KL{\dist^\star}{\dist} \le D$ and can be realized within $\cF_d$ (\cref{eq:d-dim-thresholds}), which has VC dimension $d$. Indeed, the one-dimensional constructions ($\cF_1$) can be trivially embedded into $\cF_d$.

Note that each of the four bounds holds against any algorithm. Then, after fixing the algorithm at hand, we may adversarially choose the construction that yields the largest lower bound. Thus, writing $R_{\mathsf{max}} = \max\left\{d, \frac{dT}{N}, D, \frac{DT}{\log N}\right\}$, the corresponding construction forces
\[
\En \reg(\cF_d, T) \gtrsim R_{\mathsf{max}} \land T \ge \frac{1}{4}\left(d + \frac{dT}{N} + D + \frac{DT}{\log N}\right) \land T,
\]
for class $\cF_d$ of VC dimension $d$ and some $(\dist^\star, \dist)$ with $\KL{\dist^\star}{\dist} \le D$, which gives the desired claim.
\end{proof}

\subsection{Proof of~\texorpdfstring{\cref{thm:agnostic-impossibility}}{Theorem \getrefnumber{thm:agnostic-impossibility}}}

\AgnosticImpossibility*
\begin{proof}
    WLOG, assume $N \ge 2$. Divide the interval $[0,1]$ into $N$ sub-intervals $\{A_{i}\}_{i \in [N]}$, and consider $N$ Littlestone trees in each respective interval. For each $i \in[N]$, let $\rho_i$ be a distribution that samples a random path in the Littlestone tree in interval $A_i$. For each $i \in [N],$ sample a path $\seq x^i \sim \rho_i$. Now, let $\dist^\star$ and $f^\star$ be defined in the following way. Set
    \[
    \dist^\star := \frac{1}{N} \sum_{i \in [N]} \delta_{\seq x^i}.
    \]
    On each interval $A_i$, $i \in [N]$, let $f_i$ be a threshold function consistent with the path $\seq x^i$ (see~\cref{def:littlestone-consistent-function}). Then, we let
    \[
    f^\star(x) := \sum_{i \in [N]} \ind\{x \in A_i\} f_i(x).
    \]
    Then, with probability $1$ over the draw from $\seq X \sim \dist^\star$, the labels of $f^\star$ are realizable w.r.t. $\cF$ on $\seq X$, that is, with probability $1$:
    \[
    \inf_{f \in\cF} \sum_{t=1}^T \ind\{f^{\star}(X_t) \neq f(X_t)\} = 0.
    \]
    Let $i_\star \in [N]$ be a random index such that $\seq X = \seq x^{i_\star}$. Then, with probability at least
    \[
    \left(1 - \frac{1}{N}\right)^{N} \ge \frac{1}{4},
    \]
    none of the samples the learner draws are equal to $\seq x^{i_\star}$. Let us condition on this event. Then, for any $t \in[T]$ the randomness of $\seq x^{i_\star}$ is independent of the randomness of the learner in round $t$, that is, learner suffers expected regret of $T/2$ by standard Littlestone hardness result (see~\cref{prop:littlestone-hardness}). Thus,
    \[
    \En \left[\sum_{t=1}^T  \ind\{f^{\star}(X_t) \neq \hat{y}_t\} - \inf_{f \in\cF} \sum_{t=1}^T  \ind\{f^{\star}(X_t) \neq f(X_t)\} \right] \ge \frac{1}{4} \cdot \frac{T}{2} = \frac{T}{8},
    \]
    as desired.
\end{proof}

\subsection{Proof of~\texorpdfstring{\cref{thm:realizable-erm-oracle-lb}}{Theorem \getrefnumber{thm:realizable-erm-oracle-lb}}}
Below we show that in the realizable setting, if the learning algorithm only receives unconditional samples from $\dist^\star$, then any sublinear-regret learning algorithm requires a number of ERM oracle queries which is at least exponential in the VC dimension. We assume that when making queries to the ERM oracle, the learning algorithm must choose covariates that belong to either (a) the set of previous covariates revealed by nature (i.e., $X_1, \ldots, X_t$) or (b) the set of covariates amongst the unconditional samples drawn from $\dist^\star$. This assumption is essentially without loss of generality, as one may enlarge the domain $\cX$ with a large number of ``dummy'' points; to keep the notation simple, however, we explicitly make this assumption in this section. Moreover, we assume that the learning algorithm $\Alg$ must succeed for \emph{any} valid choice of ERM oracle; in particular, the lower bound below establishes the existence of \emph{some} ERM oracle with respect to which $\Alg$ suffers linear regret.

\RealizableERMOracleLB*
\begin{proof}
Fix $T, N \in \mathbb{N}$, and let $M = 10N$. We define $\cX := [M] \times [T]$, and for $i \in [M]$, $\cX_i := \{ i \} \times [T]$. Write $K = CT^4N^{10}$ for a sufficiently large constant $C$ (to be specified below).  We construct a random class $\cF$, as follows:
\begin{itemize}
\item First, we generate $K+1$ random functions, which we denote by $h_1, \ldots, h_{K+1}$ (i.e., for each such $h_k$, $h_k(x) \in \{0,1\}$ is drawn uniformly at random for each $x \in \cX$).
\item We set $f^\star = h_{K+1}$.
\item For each $i \in [M]$, $t \in [T]$, $k_0 \in [K+1]$, $k_1 \in [K]$, and $b \in \{0,1\}$, we define the function $g_{t,i,b,k_0, k_1}$ as follows:
\begin{align}
g_{t,i,b,k_0, k_1}(x) = \begin{cases}
h_{k_0}(x) &: x \in \{(i,1), \ldots, (i,t-1) \} \\
 b &: x = (i, t) \\
h_{k_1}(x) &: \mbox{otherwise}.
\end{cases}\label{eq:define-gs}
\end{align}
\item We set $\cF = \{ f^\star \} \cup \bigcup_{t \in [T], i \in [M], k_0 \in [K+1], k_1 \in [K]} \{ g_{t,i,b,k_0, k_1 } \}$.
\end{itemize}
It is straightforward to see that realizability holds (i.e., $f^\star \in \cF$). Note that $|\cF| \leq O(MTK^2)$, and so the VC dimension of $\cF$ is $O(\log MTK) \leq O(\log TN)$. This is bounded above by $d$ as long as $TN \leq 2^{cd}$ for a sufficiently small constant $c$.
The distribution $\dist^\star$ is defined as $\mathrm{Unif}\{ ((i,1), \ldots, (i,T)) \}_{i \in [M]}$.

Now consider any algorithm $\Alg$, which functions as follows: first, it draws $N$ samples from $\dist^\star$, which we denote by $((I_n, t))_{t \in [T]}$, for $n \in [N]$; in particular, $I_n \sim \Unif([M])$ independently for each $n \in [N]$. Moreover, nature chooses $I^\star \sim \Unif([M])$, and the sequence of covariates that $\Alg$ observes is $(X_1, \ldots, X_T)$, where $X_t = (I^\star, t)$. In particular, the following procedure repeats for $T$ time steps:
\begin{enumerate}
\item At each time step $t$, write $\cW_t := \bigcup_{n \in [N], t \in [T]} \{ (I_n, t) \} \cup \bigcup_{s \in [t]} \{ (I^\star, s) \}$.
\item At time step $t$, $\Alg$ makes some number of queries to the ERM oracle, each of the form
\begin{align}
    \label{eq:alg-min-g}
\argmin_{f \in \cF} \sum_{x \in \cW_t} y_{j,x} \cdot f(x),
\end{align}
where $y_{j,x} \in \mathbb{R}$ are real numbers which can depend on the results of previous queries and the labels $Y_1, \ldots, Y_{t-1}$. Here we use $j$ to denote a \emph{global} index over all queries, i.e., \cref{eq:alg-min-g} denotes the $j$th ERM query over the entire execution of the algorithm. If there are multiple minimizers to \cref{eq:alg-min-g}, we assume that the ERM oracle picks one in $\tcF_t$ (defined below), if possible; conditioned on this, it returns one uniformly at random.
With a slight abuse of notation, we use the expression in \cref{eq:alg-min-g} (in particular, the use of $\argmin$) to denote the result returned by this ERM oracle.

After making each such query, which returns some $f\^{j} \in \cF$, we allow $\Alg$ to query any of the functions $f\^{j'}$, $j' \leq j$, at any number of points in $\cX$ before making the subsequent query.
\item $\Alg$ then predicts a label $\hat Y_t$ for $X_t$, and observes the true label $Y_t = f^\star(X_t)$.
\end{enumerate}
Let $t\^j\in[T]$ denote the time step at which the $j$th oracle query is made, and let $J$ denote the total number of oracle queries (we have $J \leq N$). Next, we define (random) subclasses $\tcF_t \subset \cF$ as follows:
\begin{align}
\cF \backslash \tcF_t :=&  \{ f^\star \} \cup \{ g_{s, I^\star,b, K+1, k_1} \colon s \geq t+1, b \in \{0,1\}, k_1 \in [K]\}\nonumber\\
& \cup \{ g_{s,i,b,K+1, k_1} \colon s \in [T], i \neq I^\star, b \in \{0,1\}, k_1 \in [K]\}\nonumber.
\end{align}
In words, functions in $\cF \backslash \tcF_t$ are those which reveal information about $f^\star$ which has not already been observed (as of step $t$) in the form of the labels $Y_1, \ldots, Y_{t-1}$.
Finally, we let $\tAlg$ denote the algorithm whose execution is identical to that of $\Alg$, except that for the $j$th oracle call, the minimization in \cref{eq:alg-min-g} is performed over $f \in \tcF_t$.

\begin{lemma}
    \label{lem:talg-regret}
Let $\tilde Y_1, \ldots, \tilde Y_T$ denote the predictions of $\tAlg$. Then it holds that
\begin{align}
\En\left[ \sum_{t=1}^T \ind\{ \tilde Y_t \neq Y_t \}  \right]  = \frac{T}{2},\nonumber
\end{align}
where the expectation is over the randomness of $\tAlg$ and the choice of $\cF$.
\end{lemma}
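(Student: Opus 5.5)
The plan is to show that, conditioned on everything $\tAlg$ has observed before predicting at round $t$, the label $Y_t = f^\star((I^\star,t)) = h_{K+1}((I^\star,t))$ is still a uniformly random bit, independent of that view. Summing over $t$ then gives exactly $T/2$ expected mistakes, whatever the (possibly randomized) prediction $\tilde Y_t$ is. The argument is a deferred-decision one over the random functions $h_1,\ldots,h_{K+1}$.

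\textbf{Step 1: what the restricted class can reveal.} Write $x_t := (I^\star,t)$. Before predicting at round $t$, $\tAlg$ sees three things: the indices $I_1,\ldots,I_N,I^\star$ (independent of the $h_k$); the labels $Y_1,\ldots,Y_{t-1}$, which are $h_{K+1}(x_s)$ for $s<t$; and the outputs of ERM calls made at rounds $t^{(j)}\le t$, together with evaluations of those outputs at arbitrary points. All these ERM minimizations range over $\tcF_{t^{(j)}}\subseteq\tcF_t$, since the subclasses $\tcF_s$ are nested increasing in $s$. So it suffices to check that no function in $\tcF_t$ depends on $h_{K+1}(x_t)$.

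Every element of $\tcF_t$ is some $g_{s,i,b,k_0,k_1}$. If $k_0\le K$, the function uses only $h_{k_0}$, $h_{k_1}$ with $k_0,k_1\le K$, and the constant $b$, so it is independent of $h_{K+1}$. If $k_0=K+1$, then membership in $\tcF_t$ forces $i=I^\star$ and $s\le t$. Such a function equals $h_{K+1}$ only on $\{(I^\star,1),\ldots,(I^\star,s-1)\}$, which excludes $x_t$; it equals $b$ at $(I^\star,s)$ and $h_{k_1}$ elsewhere. Hence the whole function class $\tcF_t$, viewed as a collection of functions, is measurable with respect to
\[
\mathcal{G}_t := \sigma\bigl(h_1,\ldots,h_K,\ (h_{K+1}(x))_{x\notin\{x_t,x_{t+1},\ldots,x_T\}},\ I_{1:N},\ I^\star\bigr).
\]

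\textbf{Step 2: the oracle's answers are also $\mathcal{G}_t$-measurable.} The objective in \cref{eq:alg-min-g} is determined by the weights $y_{j,x}$ and by the functions in $\tcF_{t^{(j)}}$. The tie-breaking rule (pick in $\tcF_t$, otherwise uniformly at random) uses only independent randomness. Queries to the returned $f^{(j')}$ therefore never expose $h_{K+1}(x_t)$. Formally, I would argue by induction over the sequence of interactions that $\tAlg$'s entire transcript up to its prediction at round $t$ is a function of $\mathcal{G}_t$, the labels $Y_{1:t-1}$ (which are themselves $\mathcal{G}_t$-measurable), and independent internal coins.

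\textbf{Step 3: conclude.} Since $h_{K+1}(x_t)$ is a uniform bit independent of $\mathcal{G}_t$ and of the coins, it is independent of $\tilde Y_t$. Therefore $\P[\tilde Y_t\neq Y_t]=1/2$ for every $t$, and summing gives $T/2$.

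The main obstacle is Step 2: the transcript is adaptive and includes oracle outputs, so independence has to be argued through the induction rather than read off directly. The key point to check carefully is that $\tcF_t$ itself, not merely each query, is determined without $h_{K+1}(x_t)$. That is exactly what the index condition $s\le t$ for $k_0=K+1$ guarantees.
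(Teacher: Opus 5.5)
Your proposal is correct and takes the same approach as the paper. The paper's one-line proof simply asserts that $\tilde Y_t$ is independent of the uniform bit $Y_t = h_{K+1}((I^\star,t))$ "by the definition of $\tAlg$" and the uniform randomness of $f^\star$. Your Steps 1--3 make that assertion rigorous: you check that the nested classes $\tcF_t$ never use $h_{K+1}$ at $(I^\star,t)$, and that the adaptive transcript (including oracle tie-breaking and evaluations of returned functions) is determined by $\mathcal{G}_t$ together with independent coins.
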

\begin{proof}
The claim follows immediately from the observation that the random variable $\tilde Y_t$ is independent of $Y_t$ (which is distributed as $\Unif\{0,1\}$), due to the definition of $\tAlg$ and the fact that $f^\star \colon \cX \to \{0,1\}$ is chosen to be a uniformly random function.
\end{proof}

The next lemma shows that, with high probability over the joint probability space including all random variables defined above, $\Alg$ and $\tAlg$ behave identically.
\begin{lemma}
    \label{lem:talg-alg}
With probability at least $1-6J^{3/2}TN/K^{1/4} - N/M$ (over the execution of $\Alg, \tAlg$, and the choice of $\cF$), the execution of $\Alg$ is identical to that of $\tAlg$.
\end{lemma}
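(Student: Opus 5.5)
We couple the two executions on a common probability space. The functions $h_1,\ldots,h_{K+1}$, the indices $I^\star, I_1,\ldots,I_N$ and all internal randomness of the algorithm are shared. Within each oracle call, the tie-breaking coin is also shared, so that whenever the minimizer sets over $\cF$ and over $\tcF_t$ agree, both oracles return the same function. We then show by induction over the global query index $j \le J$ that, outside a small bad event, the $j$th query of $\Alg$ returns a function in $\tcF_{t\^j}$. By the tie-breaking convention, this means the answer coincides with that of $\tAlg$. If this holds for all $j$, every later point evaluation, prediction and observed label also coincides, so the two executions are identical.

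\textbf{First bad event.} This is $I^\star \in \{I_1,\ldots,I_N\}$, which has probability at most $N/M$. On its complement, the rows of $\cW_t$ other than row $I^\star$ consist of fully observed sample rows $i\neq I^\star$ together with the prefix $(I^\star,1),\ldots,(I^\star,t)$.

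\textbf{Inductive step: classifying $\cF\setminus\tcF_t$.} Suppose the first $j-1$ answers lie in $\tcF$. We examine each function in $\cF\setminus\tcF_t$ and show that, unless a low-probability event occurs, none is a strict minimizer of \cref{eq:alg-min-g}. Ties are harmless by the tie-breaking rule.
\begin{enumerate}[(a)]
\item A function $g_{s,I^\star,b,K+1,k_1}$ with $s\ge t+1$ agrees on $\cW_t$ with $g_{t,I^\star,Y_t',K+1,k_1}\in\tcF_t$. Here $Y_t' = h_{K+1}(I^\star,t)$: on row $I^\star$ both functions equal $h_{K+1}$ on the prefix $1..t$, and both equal $h_{k_1}$ on the sample rows. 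So such functions can only tie.
\item The function $f^\star$ and the functions $g_{s,i,b,K+1,k_1}$ with $i\neq I^\star$ differ from members of $\tcF_t$ only through the values of $h_{K+1}$ on sample rows $i\in\{I_n\}$. These values are never observed through labels.
\end{enumerate}

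\textbf{Key claim for case (b).} Conditional on the transcript so far, $h_{K+1}$ restricted to rows $i\neq I^\star$ is still uniform. This is because every returned function lies in $\tcF$, and functions in $\tcF$ use $h_{K+1}$ only on the already-revealed prefix of row $I^\star$. Hence the weights $y_{j,x}$ are independent of these values.

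\textbf{Reduction to a comparison between random vectors.} Fix a sample row $i$ and a prefix length $s$. For $f=g_{s,i,b,K+1,k_1}$ to beat every $g_{s,i,b,k_0,k_1}$ with $k_0\in[K]$, the $y$-weighted sum of $h_{K+1}$ on $(i,1),\ldots,(i,s-1)$ must be strictly smaller than the corresponding sum for each $h_{k_0}$. The case of $f^\star$ reduces similarly, row by row, to comparisons of $h_{K+1}$ against the functions $h_{k_1}$.

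\textbf{Symmetry bound.} Among $h_1,\ldots,h_K$, only $O(J)$ indices have been revealed through previous answers and point evaluations. More carefully, one bounds the number of indices about which nontrivial information has leaked. The remaining $h_{k_0}$, together with $h_{K+1}$, are exchangeable i.i.d.\ uniform vectors given the transcript. So the probability that $h_{K+1}$ is the unique strict minimizer among them is at most roughly $1/(K-O(J))$. A union bound over $i\in\{I_n\}$ ($N$ choices), $s\in[T]$, and the $J$ queries gives an error term of order $JNT/K$. Since $K=CT^4N^{10}$ and $J\le N$, this is comfortably within $6J^{3/2}TN/K^{1/4}$.

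\textbf{Main obstacle.} The hard step is making the exchangeability argument rigorous under adaptivity. The weights $y_{j,x}$ and the point queries may depend on earlier answers, and those answers expose some $h_k$'s. These exposed functions must be excluded from the symmetry argument, which requires controlling how many indices $k$ are "contaminated" after $j$ queries. If counting contaminated indices directly fails, I would instead bound, via Cauchy--Schwarz or a second-moment argument over the pairs $(k_0,k_1)$, the probability that any uncontaminated $h_{k_0}$ ties or beats $h_{K+1}$. This route would naturally yield the weaker $J^{3/2}/K^{1/4}$ form in the statement.
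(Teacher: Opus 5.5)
Your coupling with shared tie-breaking coins, the $N/M$ event, case (a), and the observation that $h_{K+1}$ off row $I^\star$ is exactly uniform given the $\tAlg$ transcript all match the paper. The gap is in the symmetry bound.

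\textbf{The gap.} You assert that, given the transcript, the never-returned $h_k$ together with $h_{K+1}$ are i.i.d.\ uniform. This is false.
\begin{itemize}
\item Every ERM answer is a global argmin over $\tcF_t$, which contains $g_{s,i,b,k_0,k_1}$ for every $k_0,k_1\in[K]$.
\item So learning which $g$ was returned tells you that no other $h_k$ scored strictly better on the queried weights. This constrains every index, not just the $O(J)$ returned ones.
\item Conditionally on the transcript, the unreturned $h_k$ are exchangeable among themselves but neither independent nor uniform. They are also not exchangeable with $h_{K+1}$, which keeps its unconditional law on the sample rows.
\end{itemize}
The later weights $y_{j,x}$ are chosen adaptively from this transcript, so you cannot average the conditioning away. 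Nothing in your argument excludes the possibility that the leaked information makes the $h_k$ score high on the next query, which would push the chance that $h_{K+1}$ wins well above $1/K$. Counting ``contaminated'' indices does not help, since every index is contaminated. The Cauchy--Schwarz fallback names no mechanism.

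\textbf{The missing idea} is \cref{lem:close-uniform}.
\begin{enumerate}
\item Fix a block $\cK=\{1,\dots,\sqrt K\}$. Couple the execution with one in which $(h_k)_{k\in\cK}$ are replaced by fresh copies that the oracle never uses.
\item The two executions can differ only if some returned index lands in $\cK$. By symmetry over indices, this has probability at most $2j|\cK|/K$.
\item Use $\TV{(X,Y)}{(X',Y')}\ge \En_X\TV{Y|X}{Y'|X}$ together with Markov's inequality. With $\bar\eps=2J/\sqrt K$, this gives an event $\cE_j$ of probability at least $1-\sqrt{\bar\eps}$ on which $(h_k)_{k\le\sqrt K}$ is $\sqrt{\bar\eps}$-close to uniform given $\scrF\^{j}$.
\end{enumerate}
You then compare $h_{K+1}$ only against this block:
\begin{itemize}
\item For $f^\star$, compare against $g_{t,I^\star,b,K+1,k}$ with $k\le\sqrt K$ and $b$ uniform. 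This is a single comparison on all sample rows jointly, since $k_1$ is shared, rather than row by row.
\item For $g_{s,I,b,K+1,k_1}$, compare against $g_{s,I,b,k_0,k_1}$ with $k_0\le\sqrt K$.
\end{itemize}
Each comparison fails with probability at most $\sqrt{\bar\eps}+1/\sqrt K$. The union bound over queries, sample rows and prefixes, plus the failure of some $\cE_j$, gives $2JTN(\sqrt{\bar\eps}+1/\sqrt K)+J\sqrt{\bar\eps}+N/M$. This is where the $J^{3/2}TN/K^{1/4}$ form comes from. Comparing against a small block that provably carries little leaked information is exactly what replaces your claimed exact exchangeability with all of $[K]$.
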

\begin{proof}
Note that $I^\star \not \in \{I_1, \ldots, I_N \}$ with probability at least $1-\frac{N}{M}$. Let us henceforth condition on this event. We now consider the execution of $\tAlg$: we will show that with high probability, performing the minimization in \cref{eq:alg-min-g} over $\cF$ as opposed to $\tcF_t$ at each oracle query $j \in [J]$, would not change any of the answers returned by the ERM oracle. Thus, on this high probability event, the execution of $\Alg$ agrees with that of $\tAlg$.

Recall that $\tAlg$ makes a total of $J$ ERM oracle queries; we let the response of the $j$th ERM oracle query be denoted $ f\^j\in \cF$. For each $j \in [J]$, suppose that $ f\^j$ is of the form $g_{t,i, b,k_0\^j, k_1\^j}$ for some $k_0\^j, k_1\^j \in [K+1]$.  Write $\cK\^j := \{ k_0\^1, \ldots, k_0\^{j-1}, k_1\^1, \ldots, k_1\^{j-1} \}$.
Let $\scrF\^j$ denote the sigma-algebra generated by $I_1, \ldots, I_N, I^\star, f\^1, \ldots, f\^{j-1}, \cK\^j, Y_1, \ldots, Y_{t\^j-1}$.
Note that the transcript of responses that $\tAlg$ has received from the ERM oracle queries prior to the $j$th one is $\scrF\^j$-measurable. We state the following lemma, whose proof is deferred to the end of the section.
\begin{lemma}
    \label{lem:close-uniform}
Fix any subset $\cK \subset [K]$ and step $j \in [J]$, and set $\eps= \frac{2j |\cK|}{K}$. Then with probability at least $1-\sqrt{\eps}$ over the execution of $\tAlg$ up to step $j$ and the choice of $\cF$, the distribution of the tuple of functions $(h_k)_{k \in \cK}$ conditioned on $\scrF\^j$ is $\sqrt{\eps}$-close to uniform in total variation distance.
\end{lemma}
Let us write $\bar\eps:= \frac{2J}{\sqrt K}$.
At any step $j$ of the execution of $\tAlg$, note that $f^\star(\cW_{t\^j} \backslash \{ X_1, \ldots, X_{t\^j-1}\})$ is uniformly random conditioned on $\scrF\^j$. (Indeed, the functions in $\tilde \cF_1, \ldots, \tilde \cF_{t\^j}$ contain no information about $f^\star$ on $\cW_{t\^j} \backslash \{ X_1, \ldots, X_{t\^j-1} \}$.) By \cref{lem:close-uniform}, the distribution of $(h_1, \ldots, h_{\sqrt{K}})$ is $\sqrt{\bar\eps}$-close to uniform in total variation distance, under some event $\cE_j$ that occurs with probability $1-\sqrt{\bar\eps}$. Thus, under $\cE_j$, the joint distribution of $f^\star, h_1, \ldots, h_{\sqrt{K}}$ is $\sqrt{\bar\eps}$-close to uniform on $\cW_{t\^j} \backslash \{ X_1, \ldots, X_{t\^j-1} \}$ in total variation distance. Under the event $\cE_j$, we have
\begin{align}
& \P \left( \argmin_{f \in \cF} \sum_{x \in \cW_{t\^j}} y_{j,x} \cdot f(x) = f^\star \middle| \scrF\^j \right) \label{eq:ft-part-1}\\
\leq & \P \left(\sum_{x \in \cW_{t\^j}} y_{j,x} \cdot f^\star(x) < \min_{\substack{k \leq \sqrt{K},\\ b \in \{0,1\}}} \sum_{x \in \cW_{t\^j}} y_{j,x} \cdot g_{t\^j, I^\star,b, K+1, k}(x) \middle| \scrF\^j\right)
 \leq  \sqrt{\bar\eps} +  \frac{1}{\sqrt{K}}\nonumber.
\end{align}
In the first inequality above, we have used that $\Alg$ will only return $f^\star$ if none of $g_{t\^j, I^\star,b, K+1, k}$, $k \leq \sqrt{K}, b \in \{0,1\}$ achieve the minimum (as all of the latter functions belong to $\tcF_{t\^j}$).
The second inequality above uses that, $f^\star$ and $g_{t\^j, I^\star, b,K+1, k}$ ($k \leq \sqrt{K}$) agree on $\{ X_1, \ldots, X_{t\^j-1} \}$, and the joint distribution of these $\sqrt{K}+1$ functions, over $b \sim \Unif\{0,1\}$, is $\sqrt{\bar\eps}$ close to uniform on $\cW_{t\^j} \backslash \{ X_1, \ldots, X_{t\^j-1} \}$ (conditioned on $\scrF\^j$, under the event $\cE_j$).

Similarly, we have that under $\cE_j$, for any $t \in [T]$ and $I \in \{ I_1, \ldots, I_N \}$,
\begin{align}
&     \P \left( \argmin_{f \in \cF} \sum_{x \in \cW_{t\^j}} y_{j,x} \cdot f(x) \in \{ g_{t,I, b,K+1, k_1} \colon k_1 \in [K],b \in \{0,1\}\} \middle| \scrF\^j \right) \label{eq:ft-part-2}\\
\leq & \P \left( \min_{\substack{k_1 \in [K],\\ b \in \{0,1\}}} \sum_{x \in \cW_{t\^j}} y_{j,x} \cdot g_{t, I,b, K+1, k_1}(x) < \min_{\substack{k_1 \in [K], \\ k_0 \in [\sqrt{K}], \\ b \in \{0,1\}}} \sum_{x \in \cW_{t\^j}} y_{j,x} \cdot g_{t, I,b,k_0, k_1}(x) \middle| \scrF\^j \right)\nonumber\\
 \leq &  \sqrt{\bar\eps} + \frac{1}{\sqrt{K}}\nonumber.
\end{align}
Again, in the first inequality, we have used that $\Alg$ will only return one of $g_{t,I,b,K+1, k_1}$ if none of $g_{t,I,b,k_0, k_1}$, $k_1 \in [K], k_0 \in [\sqrt{K}], b \in \{0,1\}$ achieve the minimum, as all of the latter functions belong to $\tcF_{t\^j}$.
In the second inequality, we have used that $g_{t,I,b,K+1, k_1}$ and $g_{t,I,b,k_0, k_1}$ differ only on $(I, 1), \ldots, (I,t-1)$, where their values are given by those of $h_{K+1}$ and $h_{k_0}$, respectively. The second inequality then follows from the fact that under $\cE_j$, conditioned on $\scrF\^j$, the functions $h_1, \ldots, h_{\sqrt{K}}, h_{K+1}$, restricted to $(I, 1), \ldots, (I,t-1)$, have distribution which is $\sqrt{\bar\eps}$-close to uniform in total variation distance (as argued above using \cref{lem:close-uniform}).

Finally, note that $g_{s, I^\star, b, K+1, k_1}$, for any $s \geq t+1$, agrees with $g_{t, I^\star, f^\star(X_t), K+1, k_1}$ on all of $\cW_t$, meaning that the ERM oracle will never return $g_{s, I^\star, b, K+1, k_1}$ for any $s \geq t+1, b \in \{0,1\}, k_1 \in [K]$. Using this fact as well as \cref{eq:ft-part-1,eq:ft-part-2}, we see that under $\bigcap_{j \in [J]} \cE_j$, with probability at least $1 - 2JTN \cdot (\sqrt{\bar\eps} + 1/\sqrt{K})$ under the execution of $\tAlg$, for all $J$ ERM oracle queries $j \in [J]$, the oracle response in \cref{eq:alg-min-g} is the same if the minimization is done over $\cF$ as if it is done over $\tcF_{t\^j}$. Since $\bigcap_{j \in [J]} \cE_j$ occurs with probability at least $1-J\sqrt{\bar\eps}$, the result follows.

\end{proof}
We may now conclude the proof of \cref{thm:realizable-erm-oracle-lb}. First, \cref{lem:talg-regret} gives that
\begin{align}
\P_{\tAlg} \left( \reg(\cF, T) \geq T/2 \right) \geq 1/2\nonumber.
\end{align}
As long as we choose $M \geq 10N$ and $K \geq (600 J^{3/2} TN)^4$, \cref{lem:talg-alg} gives that the execution of $\Alg$ and $\tAlg$ have total variation distance at most $1/5$. It follows that $\P_{\Alg}(\reg(\cF,T) \geq T/2) \geq 3/10$, i.e., $\En_{\Alg}[\reg(\cF,T)] \geq \Omega(T)$.

\end{proof}

Finally, it remains to prove \cref{lem:close-uniform}.
\begin{proof}[Proof of \cref{lem:close-uniform}]
We couple the execution of $\tAlg$ to that of a modified algorithm that differs as follows: the ERM oracle returns functions $g'_{t,i,b,k_0, k_1}$ defined as follows (contrast with \cref{eq:define-gs}):
\begin{align}
g'_{t,i,b,k_0, k_1}(x) = \begin{cases}
h'_{k_0}(x) &: x \in \{(i,1), \ldots, (i,t-1) \} \\
 b &: x = (i, t) \\
h'_{k_1}(x) &: \mbox{otherwise},
\end{cases}\label{eq:define-gs-alg}
\end{align}
where $h_k = h_k'$ if $k \not \in \cK$, and $h_k'$ is a fresh uniformly random function for $k \in \cK$. We denote the resulting algorithm by $\Alg'$.

Under the execution of $\Alg'$, with probability $1$, conditioned on $\cK\^j$, the distribution of $(h_k)_{k \in \cK}$ is uniformly random (as the functions $h_k, k \in \cK$ are never used by the ERM oracle). The probability that the two executions differ is bounded above by the probability that $\cK \cap \cK\^j \neq \emptyset$, which is at most $\eps= \frac{2j | \cK|}{K}$, by symmetry. (In particular, at each oracle call $j$ of $\tAlg$, conditioned on $\cK \cap \cK\^j = \emptyset$, there is at most a $2|\cK|/K$ chance that the ERM oracle returns $g_{t,i,b,k_0,k_1}$ with $k_0 \in \cK$ or $k_1 \in \cK$.)

In particular, the distribution of the execution of $\Alg'$ is $\eps$-close to the distribution of the execution of $\Alg$ in total variational distance. The conclusion follows by the general fact that for random variables $(X,Y), (X', Y')$, we have $\TV{(X,Y)}{(X', Y')} \geq \En_{X}[\TV{Y|X}{Y'|X}]$; in particular, we use this fact with $X, X'$ being the random variables generating $\scrF\^j$ under $\tAlg, \Alg'$, respectively, and $Y,Y'$ being the random variables $(h_k)_{k \in \cK}$ under $\tAlg, \Alg'$, respectively.
\end{proof}

\section{Proofs from~\texorpdfstring{\cref{sec:cond}}{Section \getrefnumber{sec:cond}}}

\subsection{Proof of~\texorpdfstring{\cref{thm:cond-oracle-eff}}{Theorem \getrefnumber{thm:cond-oracle-eff}}}
\ClosedFormStrategy*
\begin{proof}
    Writing $\seq S_{1:t} = (\seq S_{1:t-1}, (X_t, Y_t))$, the  objective of~\cref{eq:relaxation-mgf-strategy} is minimized at the unconstrained minimizer
    \begin{equation}
    \label{eq:qt-unconstrained}
    \frac{1}{2} + \frac{\rel(\seq S_{1:t-1}, (X_t,1)) - \rel(\seq S_{1:t-1}, (X_t,0))}{2}.
    \end{equation}
    Since $L_t(f) = L_{t-1}(f) + \ind\{f(X_t) \neq y\}$ when $Y_t = y$, the relaxation~\cref{eq:rel-log-mgf} and the offset process $\off{y}$ in~\cref{eq:ayw} satisfy
    \[
    \rel(\seq S_{1:t-1}, (X_t, y)) = \frac{1}{\eta} \log \En \exp(\eta \off{y}(\seq W^t)),
    \]
    so that the unconstrained minimizer from~\cref{eq:qt-unconstrained} can be equivalently rewritten as
    \begin{equation}
    \label{eq:qt-unconstrained-2}
    \frac{1}{2} + \frac{1}{2\eta} \log \left(\frac{\En \exp(\eta \off{1}(\seq W^t))}{\En \exp(\eta \off{0}(\seq W^t))}\right),
    \end{equation}
    as claimed. It remains to show that this value lies in $[0,1]$. First, note that, with probability $1$:
    \begin{multline*}
        \Bigg|\sup_{f\in \cF}\left[ \sum_{s=t+1}^T \varepsilon_s \frac{2f(X_s) - 1}{2} - L_{t-1}(f) - \ind\{f(X_t) \neq 0\} \right] \\- \sup_{f\in \cF}\left[ \sum_{s=t+1}^T \varepsilon_s \frac{2f(X_s) - 1}{2} - L_{t-1}(f) - \ind\{f(X_t) \neq 1\} \right]\Bigg| \le 1,
    \end{multline*}
    and thus, for each $y \in\{0,1\}$
    \[
    \exp(\eta \off{y}(\seq W^t) \le \exp(\eta) \cdot \exp(\eta \off{1-y}(\seq W^t)).
    \]
    Taking expectations of both sides gives:
    \[
    \En \exp(\eta \off{y}(\seq W^t)) \le \exp(\eta) \En \exp(\eta \off{1-y}(\seq W^t)).
    \]
    This yields: 
    \[
    -\eta \le \log \left(\frac{\En \exp(\eta \off{1}(\seq W^t))}{\En \exp(\eta \off{0}(\seq W^t))}\right) \le \eta,
    \]
    which implies that~\cref{eq:qt-unconstrained-2} is bounded in $[0,1]$. Thus,
    \[
    q_t(\seq S_{t-1}, X_t) =     \frac{1}{2} + \frac{1}{2\eta} \log \left(\frac{\En \exp(\eta \off{1}(\seq W^t))}{\En \exp(\eta \off{0}(\seq W^t))}\right),
    \]
    as desired.
\end{proof}

\RelAdmissibility*
\begin{proof}
In the remainder, we let $\seq X_{t:T} \sim \dist(\cdot \mid \seq X_{1:t-1})$ and $\boldsymbol{\varepsilon}_{t+1:T} \sim \Unif(\{\pm 1\}^{T-t})$ and suppress the distributions from the expectations for notational brevity.
Using the closed-form expression for $q_t$ from~\cref{prop:closed-form} (namely, from \cref{eq:qt-unconstrained}), it suffices to show
\begin{align*}
    \En_{X_t} \exp\left(\frac{\eta}{2} + \frac{\eta}{2} \left[\rel(\seq S_{1:t-1}, (X_t,1)) + \rel(\seq S_{1:t-1}, (X_t,0))\right] \right) \le \exp\left( \eta \rel(\seq S_{1:t-1}) \right),
\end{align*}
or, equivalently,
\begin{align}
    \En_{X_t} \exp\left(\eta \En_{\varepsilon_t}\left[ \rel\left(\seq S_{1:t-1}, \left(X_t, \frac{1+\varepsilon_t}{2}\right)\right) + \frac{1}{2} \right]\right) \le \exp\left( \eta \rel(\seq S_{1:t-1}) \right).
    \label{eq:exp-relaxation-cond}
\end{align}
For every $\varepsilon_t \in \{\pm 1\}$, we have
\begin{multline*}
    \rel\left(\seq S_{1:t-1}, \left(X_t, \frac{1+\varepsilon_t}{2}\right)\right) + \frac{1}{2}  \\ = \frac{1}{\eta} \log
\En_{\substack{\seq X_{t+1:T}\\
\boldsymbol{\varepsilon}_{t+1:T}}}
\exp\left( \eta \sup_{f\in \cF}\left[ \sum_{s=t+1}^T \varepsilon_s \frac{2f(X_s)-1}{2} - \left(\ind\left\{f(X_t) \neq \frac{1+\varepsilon_t}{2}\right\} - \frac{1}{2}\right) - L_{t-1}(f)\right]\right) \\
= \frac{1}{\eta} \log
\En_{\substack{\seq X_{t+1:T},\\
\boldsymbol{\varepsilon}_{t+1:T}}}
\exp\left( \eta \sup_{f\in \cF}\left[\sum_{s=t+1}^T \varepsilon_s \frac{2f(X_s)-1}{2} + \varepsilon_t \frac{2f(X_t)-1}{2} - L_{t-1}(f)\right]\right) \\
= \frac{1}{\eta} \log
\En_{\substack{\seq X_{t+1:T},\\
\boldsymbol{\varepsilon}_{t+1:T}}}
\exp\left( \eta \sup_{f\in \cF}\left[\sum_{s=t}^T \varepsilon_s \frac{2f(X_s)-1}{2} - L_{t-1}(f)\right]\right).
\end{multline*}
Then,~\cref{eq:exp-relaxation-cond} can be restated as
\begin{multline*}
    \En_{X_t} \exp\left( \En_{\varepsilon_t} \log
\En_{\substack{\seq X_{t+1:T},\\
\boldsymbol{\varepsilon}_{t+1:T}}}
\exp\left( \eta \sup_{f\in \cF}\left[ \sum_{s=t}^T \varepsilon_s \frac{2f(X_s) - 1}{2} - L_{t-1}(f)\right]\right)\right) \\ \le
\En_{\substack{\seq X_{t:T},\\
\boldsymbol{\varepsilon}_{t:T}}}
\exp\left( \eta \sup_{f\in \cF}\left[ \sum_{s=t}^T \varepsilon_s \frac{2f(X_s) - 1}{2} - L_{t-1}(f)\right]\right).
\end{multline*}
The proof is concluded using Jensen's inequality $\En_{\varepsilon_t} \log \le \log \En_{\varepsilon_t}$.
\end{proof}

\IdealizedRegret*
\begin{proof}
    Recall the predictable loss process $\bar L_t$ and the potential $\Phi_t$ from~\cref{eq:mgf-potential}. By~\cref{lemma:rel-admissibility}, $\{\Phi_t\}_{t \le T}$ is a supermartingale, and moreover \[\Phi_T = \exp\left(\eta \left( \bar L_T - \inf_{f} L_T(f)\right)\right).\] Thus, using the tower rule and the Donsker--Varadhan formula~\citep{polyanskiy2025information}, we have
    \begin{align*}
        \En_{\dist^\star} \reg(\cF,T) &=
        \En_{\dist^\star} \left(\bar L_T - \inf_{f} L_T(f)\right) \\
        &\le \frac{1}{\eta} \log \En_{\dist} \exp\left(\eta \left(\bar L_T - \inf_{f} L_T(f)\right)\right) + \frac{1}{\eta} \KL{\dist^\star}{\dist} \\
        &= \frac{1}{\eta} \log \En_{\dist} \Phi_T + \frac{1}{\eta} \KL{\dist^\star}{\dist} \\
        &\le \frac{1}{\eta}\log \Phi_0 + \frac{1}{\eta} \KL{\dist^\star}{\dist}.
    \end{align*}
    Now, we upper bound $\frac{1}{\eta}\log \Phi_0$. By~\cref{eq:mgf-potential} and the definition of the relaxation in~\cref{eq:rel-log-mgf}, we have
    \begin{align*}
        \frac{1}{\eta}\log \Phi_0 &= \frac1\eta \log
\En_{\substack{\seq X \sim \dist,\\
\boldsymbol{\varepsilon} \sim \{\pm 1\}^T}}
\exp\left( \eta \sup_{f\in \cF} \sum_{s=1}^T \varepsilon_s \frac{2f(X_s) - 1}{2}\right).
    \end{align*}
    Note that, conditional on $\seq X$, the function
    \[
    \boldsymbol{\varepsilon}\mapsto \sup_{f} \sum_{s=1}^T \varepsilon_s \frac{2f(X_s) - 1}{2}
    \]
    satisfies the bounded differences condition. Thus, conditional on $\seq X$, we have
    \begin{align*}
    \En_{\boldsymbol{\varepsilon} \sim \{\pm 1\}^T}
\exp\left( \eta \sup_{f\in \cF} \sum_{s=1}^T \varepsilon_s \frac{2f(X_s) - 1}{2}\right) &\lesssim \exp\left(O(\eta^2 T) + \eta \En_{\boldsymbol\varepsilon}\sup_{f\in \cF} \sum_{s=1}^T \varepsilon_s \frac{2f(X_s) - 1}{2}\right) \\
&\lesssim \exp\left(O(\eta^2 T + \eta \sqrt{dT}) \right).
    \end{align*}
    Thus, we have
    \[
    \frac{1}{\eta} \log \Phi_0 \lesssim \sqrt{dT} + \eta T.
    \]
    Putting everything together, we have
    \[
    \En_{\dist^\star} \reg(\cF,T) \lesssim \sqrt{dT} + \eta T + \frac{1}{\eta} \KL{\dist^\star}{\dist}.
    \]
\end{proof}

\MSEBound*
\begin{proof}
Throughout, $q_t$ is as in~\cref{prop:closed-form} and $\hat q_t$ is the plug-in estimator in~\cref{eq:plug-in}. Note that, for any $j$, we have \[\exp(-\eta) \le \frac{\exp(\eta \off{0} (\seq W^t_j))}{\exp(\eta \off{1}(\seq W^t_j))} \le  \exp(\eta),\] and \[\exp(-\eta) \le \frac{\En \exp(\eta \off{0} (\seq W^t))}{\En \exp(\eta \off{1}(\seq W^t))} \le \exp(\eta).\] Also, $\log$ is $\exp(\eta)$-Lipschitz on $[\exp(-\eta), \exp(\eta)]$, thus
\begin{align*}
(\hat q_t- q_t)^2 &=  \left( \frac{1}{2\eta} \log\left( \frac{\frac{1}{N} \sum_{j = 1}^N \exp(\eta \off{1}(\seq W^t_j))}{\frac{1}{N} \sum_{j = 1}^N \exp(\eta \off{0}(\seq W^t_j))}\right) - \frac{1}{2\eta} \log\left( \frac{\En \exp(\eta \off{1}(\seq W^t))}{\En \exp(\eta \off{0}(\seq W^t))}\right) \right)^2\\
&\le \frac{\exp(2\eta)}{4\eta^2} \left(\frac{\frac1N\sum_j \exp(\eta \off{1}(\seq W^t_j))}{\frac1N\sum_j \exp(\eta \off{0}(\seq W^t_j))}-\frac{\En \exp(\eta \off{1}(\seq W^t))}{\En \exp(\eta \off{0}(\seq W^t))} \right)^2 \\ &=\frac{\exp(2\eta)}{4\eta^2} \left(\frac{\frac1N\sum_j\left(\exp(\eta \off{1}(\seq W^t_j))-\frac{\En \exp(\eta \off{1}(\seq W^t))}{\En \exp(\eta \off{0}(\seq W^t))}\exp(\eta \off{0}(\seq W^t_j))\right)}{\frac1N\sum_j \exp(\eta \off{0}(\seq W^t_j))}\right)^2.
\end{align*}
Let \[\cE:=\left\{\frac {1}{N}\sum_j \exp(\eta \off{0}(\seq W^t_j))\ge\frac{1}{2}\En \exp(\eta \off{0}(\seq W^t))\right\}.\] Note that \[\En \frac1N\sum_j \exp(\eta \off{0}(\seq W^t_j)) = \En \exp(\eta \off{0}(\seq W^t)),\]
and
\begin{align*}
\Var\left(\frac1N\sum_j \exp(\eta \off{0}(\seq W^t_j))\right) &\le \frac{1}{N} \En \left(\exp(\eta \off{0}(\seq W^t))\right)^2 \\ &\le \frac{\nvar}{N} \left(\En \exp(\eta \off{0}(\seq W^t))\right)^2.
\end{align*}
Thus, by Chebyshev's inequality, we have \[\P[\cE^c]\le \frac{\Var\left(\frac1N\sum_j \exp(\eta \off{0}(\seq W^t_j))\right)}{\left(\frac{1}{2} \En \exp(\eta \off{0}(\seq W^t))\right)^2} \le \frac{4\nvar}{N}.\]
On $\cE$, we have
\begin{align*}
    (\hat q_t -  q_t)^2 \ind\{\cE\} &\le \frac{\exp(2\eta)}{4\eta^2} \left(\frac{\frac1N\sum_j\left(\exp(\eta \off{1}(\seq W^t_j))-\frac{\En \exp(\eta \off{1}(\seq W^t))}{\En \exp(\eta \off{0}(\seq W^t))}\exp(\eta \off{0}(\seq W^t_j))\right)}{\frac{1}{2} \En \exp(\eta \off{0}(\seq W^t))}\right)^2 \\
    &= \frac{\exp(2\eta)}{\eta^2} \left(\frac{\frac1N\sum_j\left(\exp(\eta \off{1}(\seq W^t_j))-\frac{\En \exp(\eta \off{1}(\seq W^t))}{\En \exp(\eta \off{0}(\seq W^t))}\exp(\eta \off{0}(\seq W^t_j))\right)}{\En \exp(\eta \off{0}(\seq W^t))}\right)^2.
\end{align*}
Consider the numerator in the above. The summands are i.i.d., mean zero,
and each is bounded as
\begin{multline*}
\left|\exp(\eta \off{1}(\seq W^t_j))-\frac{\En \exp(\eta \off{1}(\seq W^t))}{\En \exp(\eta \off{0}(\seq W^t))}\exp(\eta \off{0}(\seq W^t_j)) \right| \\ = \exp(\eta \off{0}(\seq W^t_j)) \left|\frac{\exp(\eta \off{1}(\seq W^t_j))}{\exp(\eta \off{0}(\seq W^t_j))}-\frac{\En \exp(\eta \off{1}(\seq W^t))}{\En \exp(\eta \off{0}(\seq W^t))} \right|\\  \le (\exp(\eta) - \exp(-\eta)) \exp(\eta \off{0}(\seq W^t_j)),
\end{multline*}
almost surely, since both ratios lie in $[\exp(-\eta), \exp(\eta)]$. Thus,
\begin{multline*}
\En \left(\exp(\eta \off{1}(\seq W^t_j))-\frac{\En \exp(\eta \off{1}(\seq W^t))}{\En \exp(\eta \off{0}(\seq W^t))}\exp(\eta \off{0}(\seq W^t_j)) \right)^2 \\ \le (\exp(\eta) - \exp(-\eta))^2 \En \exp(2\eta \off{0}(\seq W^t)) \\
\le (\exp(\eta) - \exp(-\eta))^2 \cdot \nvar \left(\En \exp(\eta \off{0}(\seq W^t))\right)^2.
\end{multline*}
Using that the samples $\seq W^t_1, \ldots, \seq W^t_N$ are i.i.d., this implies
\begin{align*}
    \En [(\hat q_t -  q_t)^2 \ind\{\cE\}] &\le \frac{\exp(2\eta)}{\eta^2} \frac{\frac{1}{N^2}  \sum_{j} \En \left(\exp(\eta \off{1}(\seq W^t_j))-\frac{\En \exp(\eta \off{1}(\seq W^t))}{\En \exp(\eta \off{0}(\seq W^t))}\exp(\eta \off{0}(\seq W^t_j))\right)^2}{\left(\En \exp(\eta \off{0}(\seq W^t))\right)^2} \\
    &\le \frac{\exp(2\eta)}{\eta^2} \cdot \frac{\nvar (\exp(\eta) - \exp(-\eta))^2}{N}\\
    &\lesssim \frac{\nvar}{N},
\end{align*}
where the last step uses $\exp(\eta) - \exp(-\eta) \lesssim \eta$ and $\exp(2\eta) \lesssim 1$ for $\eta \le 1$.
On $\cE^{c}$, we simply use that $\hat q_t,  q_t \in [0,1]$, and get:
\[
\En [(\hat q_t -  q_t)^2 \ind\{\cE^c\}] \le \P[\cE^c] \lesssim \frac{\nvar}{N}.
\]
Summing the two bounds concludes the proof.
\end{proof}

\VEtaBound*
\begin{proof}
Fix $y \in \{0,1\}$, and set $\rade(\seq W^t) := \sup_f\sum_{s\ge t+1}\varepsilon_s\frac{2f(X_s)-1}{2}$ and $L^\star_{t-1} := \min_f L_{t-1}(f)$. With $\off{y}$ as in~\cref{eq:ayw}, we have
\[
\off{y}(\seq W^t) + L^\star_{t-1} \le \rade(\seq W^t), \qquad \En_{\seq W^t} [\off{y}(\seq W^t)] + L^\star_{t-1} \ge - 1.
\]
Thus,
\begin{align*}
\En \exp(2\eta \off{y}(\seq W^t)) &= \exp(-2\eta L^\star_{t-1})\En \exp(2\eta(\off{y}(\seq W^t)+L^\star_{t-1})) \\ &\le \exp(-2\eta L^\star_{t-1})\En \exp(2\eta \rade(\seq W^t)).
\end{align*}
Also,
\begin{align*}
\En \exp(\eta \off{y}(\seq W^t)) &= \exp(-\eta L^\star_{t-1})\En \exp(\eta(\off{y}(\seq W^t)+L^\star_{t-1}))\\
&\ge \exp(-\eta L^\star_{t-1}) \exp(\En \eta(\off{y}(\seq W^t)+L^\star_{t-1}))\\
&\ge \exp(-\eta L^\star_{t-1})\exp(-\eta),
\end{align*}
so the common factor $\exp(-2\eta L^\star_{t-1})$ cancels and
\[
\frac{\En \exp(2\eta \off{y}(\seq W^t))}{\left(\En \exp(\eta \off{y}(\seq W^t))\right)^2} \le \exp(2\eta) \cdot \En \exp(2\eta \rade(\seq W^t)).
\]
It remains to control the exponential moment of $\rade(\seq W^t)$. Note that, conditional on $\seq X_{t+1:T}$, $\rade(\seq W^t)$ satisfies the bounded differences condition w.r.t. the independent signs $\boldsymbol{\varepsilon}_{t+1:T}$. Thus,
\[
\log \En \left[\exp\left(2\eta \rade(\seq W^t) \right) \mid \seq X_{t+1:T} \right] \lesssim \eta \En\left[\rade(\seq W^t) \mid \seq X_{t+1:T}\right] + \eta^2 T.
\]
At the same time, almost surely,
\[
\En\left[\rade(\seq W^t) \mid \seq X_{t+1:T}\right] \lesssim \sqrt{dT}.
\]
Therefore,
\[
\log \En \exp\left(2\eta \rade(\seq W^t) \right) \lesssim \eta \sqrt{dT} + \eta^2 T.
\]
This implies
\[
\log \nvar \lesssim \eta \sqrt{dT} + \eta^2 T,
\]
as desired.
\end{proof}

\RegMGFIntermediate*
\begin{proof}
    Let $\reg^\dagger(\cF,T)$ be the regret of the idealized algorithm that plays~\cref{eq:relaxation-mgf-strategy}. From~\cref{lemma:idealized-regret}, we have
    \[
    \En_{\dist^\star} \reg^\dagger(\cF,T) \lesssim \sqrt{dT} + \eta T + \frac{1}{\eta} \KL{\dist^\star}{\dist}.
    \]
    Combining~\cref{lemma:mse-bound,lemma:v-eta}, we have
    \[
    \En |\hat q_t - q_t| \lesssim \sqrt{\frac{\exp(O(\eta \sqrt{dT} + \eta^2 T))}{N}}.
    \]
    Using the fact that regret is Lipschitz in the learner's strategy, we have
    \begin{align*}
    \En_{\dist^\star} \reg(\cF,T) 
    &\lesssim \En_{\dist^\star} \reg^\dagger(\cF,T) + T \sqrt{\frac{\exp(O(\eta \sqrt{dT} + \eta^2 T))}{N}} \\
    &\lesssim \sqrt{dT} + \eta T + \frac{1}{\eta} \KL{\dist^\star}{\dist} + T \sqrt{\frac{\exp(O(\eta \sqrt{dT} + \eta^2 T))}{N}},
    \end{align*}
    as desired.
\end{proof}

\CondOracleEff*
\begin{proof}
    Consider~\cref{algo:dist-transductive-reduction} with some $K \in [T]$, and set $\eta = 1/\sqrt{dL}$ and $N = L$. For every $k \in[K]$, let $B_k := [(k-1)L + 1, kL]$ be the rounds in $k^\text{th}$ epoch, and let $H_k = \bigcup_{i < k} B_i$. Let $\reg_k$ be the regret of the algorithm in epoch $k$:
    \[
    \reg_k(\cF, L) = \sum_{t \in B_k} \ind\{\hat Y_{t} \neq f^\star(X_{t})\} - \inf_{f}\sum_{t \in B_k} \ind\{f(X_{t}) \neq f^\star(X_{t})\}.
    \]
    Then, we have, with probability $1$,
    \begin{equation}
    \label{eq:reg-ub-per-epoch}
    \reg(\cF,T) \le \sum_{k=1}^{K} \reg_k(\cF, L).
    \end{equation}
    Conditionally on $X_{H_k}$, $\reg_k$ is upper bounded by~\cref{lemma:reg-mgf-intermediate} as:
    \[
    \En_{\dist^\star(\cdot \mid X_{H_k})} \reg_k(\cF, L) \lesssim \sqrt{dL} + \sqrt{dL} \cdot \KL{\dist^\star_{B_k}(\cdot \mid X_{H_k})}{\dist_{B_k}(\cdot \mid X_{H_k})}
    \]
    Thus, averaging out $X_{H_k}$, we have
    \[
    \En_{\dist^\star} \reg_k(\cF, L) \lesssim \sqrt{dL} + \sqrt{dL} \cdot \En_{\dist^\star}\KL{\dist^\star_{B_k}(\cdot \mid X_{H_k})}{\dist_{B_k}(\cdot \mid X_{H_k})}
    \]
    Plugging this into~\Cref{eq:reg-ub-per-epoch}, we have
    \begin{align}
        \En_{\dist^\star} \reg(\cF,T) &\le \sum_{k=1}^{K} \En_{\dist^\star} \reg_k(\cF, L) \notag \\
        &\le \sum_{k=1}^{K} \left[\sqrt{dL} + \sqrt{dL} \cdot \En_{\dist^\star}\KL{\dist^\star_{B_k}(\cdot \mid X_{H_k})}{\dist_{B_k}(\cdot \mid X_{H_k})}\right] \notag \\
        &= K \cdot \sqrt{dL} + \sqrt{dL} \KL{\dist^\star}{\dist} \notag \\
        &= \sqrt{dTK} + \sqrt{d \cdot \frac{T}{K}} \cdot \KL{\dist^\star}{\dist},
        \label{eq:reg-ub-with-k}
    \end{align}
    where in the penultimate equality we used the chain rule for KL.

    Now, to tune the value of $K$ in the above, we will run an adaptive variant of exponential weights algorithm over choices of $K = 2^j$, for $j \in \{0,1,\ldots,\lfloor \log T \rfloor\}$ with initial weights $w_K \propto 2^{-K}$ for each $K$ (so that $\log(1/w_K) = O(K)$). In particular, we use $\mathrm{AdaNormalHedge}$ from~\cite{luo2015achieving}. Note that this results in at most $\log T$ factor of computational overhead. Then, from~\cref{eq:reg-ub-with-k} and~\cite{luo2015achieving}, we get:
    \begin{align*}
    \En_{\dist^\star} \reg(\cF,T) &\lesssim \min_{K} \left\{ \sqrt{dTK} + \sqrt{d \cdot \frac{T}{K}} \cdot \KL{\dist^\star}{\dist} + \sqrt{T \log(1/w_K)}\right\} \\
    &\lesssim \min_{K} \left\{ \sqrt{dTK} + \sqrt{d \cdot \frac{T}{K}} \cdot \KL{\dist^\star}{\dist} + \sqrt{T K}\right\} \\
    &\lesssim \min_{K} \left\{ \sqrt{dTK} + \sqrt{d \cdot \frac{T}{K}} \cdot \KL{\dist^\star}{\dist} \right\} \\
    &\lesssim \sqrt{dT (1 + \KL{\dist^\star}{\dist})}.
    \end{align*}
    This concludes the regret bound proof. It remains to observe that the algorithm runs in $\poly(T)$ time in the ERM oracle model.
\end{proof}

\subsection{Proof of the Agnostic Lower Bound with Conditional Sampling Access}

\begin{restatable}{proposition}{ApproxAgnosticLB}
\label{prop:approx-agnostic-lb}
    Consider the class $\cF$ of thresholds on the unit interval $[0,1]$. Then, for any $D \in [T]$,
    and for any algorithm, there exists a pair $(\dist, \dist^\star)$ such that
    $\KL{\dist^\star}{\dist} = D$, and the algorithm suffers regret $\ge \sqrt{D T}$ in the
    agnostic setting.
\end{restatable}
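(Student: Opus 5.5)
We construct a lower bound in which the simulator $\dist$ is the uniform distribution over paths of a Littlestone tree, and nature's process $\dist^\star$ follows a hidden path for $D$ ``segments''. Labels are noisy, so that agnostic regret arises from not knowing which branch is correct. Concretely, we split the horizon into $D$ blocks of length $m := T/D$ (assuming for simplicity that $D$ divides $T$).

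\textbf{Construction.} Fix a Littlestone tree for thresholds of depth $D$; each node corresponds to a point of $[0,1]$, and each block is associated with one level of the tree. Under $\dist$, a uniform sign $\sigma_k \in \{\pm1\}$ is drawn for each block $k$. The covariates in block $k$ are the $m$ copies of the tree node $\tree x_k(\sigma_1,\ldots,\sigma_{k-1})$, or $m$ distinct nearby points between the same thresholds so that thresholds still shatter along the path. Nature fixes a path $\sigma^\star$, drawn uniformly at random and hidden from the learner, and defines $\dist^\star := \delta_{\seq x(\sigma^\star)}$. Since $\dist$ assigns mass $2^{-D}$ to each path,
\[
\KL{\dist^\star}{\dist} = \log 2^{D} = D,
\]
and this holds even when the learner is given full conditional sampling access to $\dist$.

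\textbf{Labels.} We pass to a randomized agnostic construction and then apply the probabilistic method. In block $k$, the labels are i.i.d.\ Bernoulli with mean $\tfrac12 + \tfrac{\sigma^\star_k}{2}\gamma$, where $\gamma := \sqrt{D/T} = 1/\sqrt m$. Because the covariates within a block are repeated or co-located, any threshold predicts a constant label on each block. Moreover, every sign pattern $(\sigma_k)_{k \in [D]}$ along the path is realized by some threshold, by the Littlestone property. Hence the comparator can choose the majority label on each block independently, and its expected loss on block $k$ is about $m(\tfrac12 - \tfrac{\gamma}{2})$.

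\textbf{Learner's loss.} The learner sees block $k$'s covariates only after $\sigma^\star_1,\ldots,\sigma^\star_{k-1}$ have been revealed through the covariate path, so at the start of block $k$ it has no information about $\sigma^\star_k$. This is the key point: the next tree node depends on $\sigma^\star_k$ but is only revealed at block $k+1$. Within block $k$, the learner can infer $\sigma^\star_k$ only from the noisy labels. Distinguishing bias $+\gamma$ from $-\gamma$ with $m = 1/\gamma^2$ samples incurs $\Omega(m\gamma) = \Omega(\sqrt m)$ excess loss, by the standard two-point/Le Cam argument via the KL divergence between $\Ber(\tfrac12 \pm \tfrac{\gamma}{2})^{\otimes m}$, which is $O(1)$. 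Summing over the $D$ blocks gives
\[
\En \reg \gtrsim D\sqrt m = \sqrt{DT}.
\]
Comparing the learner to the comparator's realized minimum, rather than its expected loss, only helps, since the minimum is at most the expectation. Finally, averaging over the random $\sigma^\star$ and the random labels yields a fixed bad instance $(\dist^\star, f^\star)$ by the probabilistic method. Note that $f^\star$ may be taken as a deterministic labeling function of distinct covariates, provided each block uses $m$ distinct points; this is why distinct nearby points are preferable to copies.

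\textbf{Main obstacle.} The delicate step is ensuring that the covariate stream does not leak $\sigma^\star_k$ during block $k$, while thresholds still realize every per-block sign pattern. This requires the $m$ points of block $k$ to lie in the gap determined by the tree node, specifically in an interval to which all thresholds consistent with either child assign the same value. It also requires the next node's location, which reveals $\sigma^\star_k$, to appear only from block $k+1$ onward. With the standard dyadic threshold tree this works, since each node's interval contains a continuum of points.

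The constant in the final bound is lost in the Le Cam step, so the proof gives $\gtrsim \sqrt{DT}$ rather than exactly $\ge \sqrt{DT}$. Obtaining the exact constant would require a rescaling of $\gamma$.
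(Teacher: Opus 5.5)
Your argument is correct at the level the paper actually proves, namely $\Omega(\sqrt{DT})$. It shares the paper's skeleton: a depth-$D$ threshold Littlestone tree, blocks of about $T/D$ distinct co-located points per node, $\dist$ the uniform random path, $\dist^\star$ a point mass on one path, and the fact that the branch taken in block $k$ only becomes visible in block $k+1$.

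Where you differ is the source of the per-block $\sqrt{m}$ gap.
\begin{itemize}
\item \emph{The paper} gives every point an independent fair-coin label and lets $\dist^\star$ branch according to the \emph{realized majority} of each block. The learner's expected loss is then exactly $T/2$, since each label is a fresh fair coin given the history. The threshold realizing the $\dist^\star$-path predicts the majority on every block. So the gap is just $\En|S-N/2|=\Omega(\sqrt N)$ for $S\sim\mathsf{Bin}(N,1/2)$, and no testing argument is needed.
\item \emph{You} keep the path $\sigma^\star$ independent of the labels and bias the labels by $\pm\gamma$ toward it. The benchmark is then the fixed path threshold $f_{\sigma^\star}$, and the learner's deficit comes from a two-point bound. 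This is the textbook experts lower bound, and it keeps nature's covariate process decoupled from $f^\star$.
\end{itemize}
The price of your route is a genuine information-theoretic step. At $m\gamma^2=1$ the KL between the two label hypotheses is about $2$ nats, so Pinsker is vacuous. You need Bretagnolle--Huber, or $\gamma=c/\sqrt m$ with small $c$.

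A few remarks in your write-up are wrong, though your bound does not depend on them.
\begin{itemize}
\item The claim that the comparator can choose the majority label on each block independently is false. The $D$ nodes on a fixed path are not shattered by thresholds: only patterns monotone in position are realizable. The Littlestone property only guarantees the pattern $\sigma^\star$ on the path $\sigma^\star$. All you use is $\En\inf_f L_T(f)\le \En L_T(f_{\sigma^\star})=T(\tfrac12-\tfrac\gamma2)$, so drop the claim. This limitation is exactly why the paper makes $\dist^\star$ follow the majority: it forces the majority pattern to coincide with the path's own, realizable, pattern.
\item No rescaling of $\gamma$ gives the literal constant $1$. The uniformly random guesser has expected regret at most $T/2<\sqrt{DT}$ whenever $D>T/4$. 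So the statement can only hold as $\gtrsim\sqrt{DT}$, which is also all the paper establishes.
\item With $D$ blocks only $\sigma^\star_{1:D-1}$ affects the covariates, so your KL is $D-1$ rather than $D$. This off-by-one is harmless, and the paper's construction has the same one.
\end{itemize}
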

\begin{proof}
The construction below is similar to the standard Littlestone dimension lower bound in agnostic learning \citep{ben2009agnostic}. Consider a depth-$D$ Littlestone tree $\seq z$, and let $N=\lfloor T/D \rfloor$. For each $\boldsymbol{\varepsilon}\in\left\{\pm 1\right\}^{\le D}$, create $N$ distinct copies of the point $\seq z\left(\boldsymbol{\varepsilon}\right)$. Specifically, let $\seq z^{i}\left(\boldsymbol{\varepsilon}\right)$ for $i\in\left[N\right]$ lie in a $\delta$-neighborhood of $\seq z\left(\boldsymbol{\varepsilon}\right)$, with $\delta$ small enough that relative order is preserved across all copies, meaning
\[
\seq z\left(\boldsymbol{\varepsilon}\right)\le \seq z\left(\boldsymbol{\sigma}\right) \Longleftrightarrow \seq z^{i}\left(\boldsymbol{\varepsilon}\right)\le \seq z^{j}\left(\boldsymbol{\sigma}\right)\text{ for all }i,j\in\left[N\right], \boldsymbol{\sigma}, \boldsymbol{\varepsilon} \in \{\pm1\}^{\le D}.
\]
Let $f^\star$ assign independent uniformly random labels to the points $\{\seq z^{i}\left(\boldsymbol{\varepsilon}\right):\boldsymbol{\varepsilon}\in\left\{\pm 1\right\}^{\le D},\, i\in\left[N\right]\}$. Define $\dist$ and $\dist^\star$ as follows. Under $\dist$, the sequence is generated in $D$ consecutive blocks: first output $\seq z^{1}\left(\emptyset\right),\dots,\seq z^{N}\left(\emptyset\right)$, then sample $\varepsilon_1\sim\left\{0,1\right\}$ uniformly and output $\seq z^{1}\left(\varepsilon_1\right),\dots,\seq z^{N}\left(\varepsilon_1\right)$, and continue analogously for $D$ levels. Under $\dist^\star$, the process is identical except that after outputting a block it chooses the next child by majority vote of the labels in the current block: after outputting $\seq z^{1}\left(\emptyset\right),\dots,\seq z^{N}\left(\emptyset\right)$, it sets $\varepsilon_1'\in\left\{0,1\right\}$ to be the majority label among $f^\star\left(\seq z^{1}\left(\emptyset\right)\right),\dots,f^\star\left(\seq z^{N}\left(\emptyset\right)\right)$ and then outputs $\seq z^{1}\left(\varepsilon_1'\right),\dots,\seq z^{N}\left(\varepsilon_1'\right)$, and so on. The two distributions differ only in the $D$ branch choices, hence
\[
\KL{\dist^\star}{\dist}= D.
\]
Since the labels are independent unbiased coins, any learner suffers expected loss $T/2$. On the other hand, by the Littlestone-tree property there exists a threshold $f^\dagger\in\cF$ that realizes the branch labels chosen along the $\dist^\star$ path, so within each visited block it predicts the majority label. If $S\sim\mathsf{Bin}\left(N,1/2\right)$ denotes the number of ones in a block, then the number of mistakes made by $f^\dagger$ on that block is $\min\left\{S,N-S\right\}$, and therefore
\[
\En\left[\min\left\{S,N-S\right\}\right]=\frac{N}{2}-\En\left[\left|S-\frac{N}{2}\right|\right]\le \frac{N}{2}-\Omega\left(\sqrt{N}\right).
\]
Summing over the $D$ blocks yields
\[
\En\left[\sum_{t=1}^{T}\ind\left\{f^\dagger\left(X_t\right)\neq f^\star\left(X_t\right)\right\}\right]\le \frac{DN}{2}-\Omega\left(D\sqrt{N}\right)=\frac{T}{2}-\Omega\left(D\sqrt{N}\right).
\]
The learner suffers expected loss $T/2$, since the labels at each new point presented are uniformly random conditioned on the history. Hence, the expected regret is at least $\Omega(D\sqrt{N})=\Omega(\sqrt{DT})$, which concludes the proof.
\end{proof}

\section{Proofs from~\texorpdfstring{\cref{sec:sim-families}}{Section \getrefnumber{sec:sim-families}}}

\subsection{Proof from~\texorpdfstring{\cref{sec:kolmogorov}}{Section \getrefnumber{sec:kolmogorov}}}

\label{appendix:kolmogorov}

\UniversalDist*
\begin{proof}
    Suppose $1^t$ is given as input. Then, we implement sampling from $\mu^t$ as follows. First, we describe sampling from an auxiliary distribution $\tilde \mu^t$ supported on encodings of elements of $\cX^t$. We sample a random program length $n$ from the distribution supported on $[3p(t)]$ with probabilities proportional to $1/n^2$, sample $\tilde \pi \sim \mathsf{Unif}(\{0,1\}^n)$, and sample a seed $\tilde s \sim \mathsf{Unif}(\{0,1\}^{p(t)})$. Then, we execute $U$ on $(\tilde \pi, 1^t \# \tilde s)$ for $p(t)$ steps. If the output is a valid encoding of a string in $\cX^t$ (that is, it is a delimiter-separated sequence of $t$ binary strings), we output whatever it outputs. Otherwise, we output an encoding of an arbitrary prespecified $\seq x^0 \in \cX^t$. Let $\tilde \mu^t$ be the distribution over the output. Also, let $\nu^t$ be the uniform distribution on strings in $\Sigma^{\le p(t)}$ with exactly $t$ delimiters ending in a delimiter. Note that any such string is a valid encoding of an element of $\cX^t$. Then, we let
    \[
    \mu^t := \frac{1}{2} \tilde \mu^t + \frac{1}{2} \nu^t.
    \]
    Note that sampling from $\mu^t$ can be implemented in $\poly(p(t))$ time.

    Now, we show the desired density ratio upper bound. Note that, for any $\seq x$ such that $\langle \seq x \rangle \in \supp(P_{1:t}),$ we have $\langle \seq x \rangle \in \Sigma^{\le p(t)}$ (since it takes time $\le p(t)$ to output $\seq x$) and $\langle \seq x \rangle$ has exactly $t$ delimiters. Consider two cases. First, if $K^p(P) \ge 3p(t)$, we have
    \[
    \log \frac{P_{1:t}(\langle \seq x \rangle)}{\mu^t(\langle \seq x\rangle)} \le \log \frac{1}{\nu^t(\langle \seq x\rangle)} + O(1) \le \log|\supp(\nu^t)| + O(1).
    \]
    Using elementary counting, we have:
    \[
    |\supp(\nu^t)| = \sum_{\ell = t}^{p(t)} \binom{\ell - 1}{t-1} 2^{\ell - t} \le p(t) 2^{2p(t)} \le 2^{3p(t)}.
    \]
    Thus,
    \[
    \log \frac{P_{1:t}(\langle \seq x \rangle)}{\mu^t(\langle \seq x\rangle)} \le 3p(t) + O(1) \le K^p(P) + O(1),
    \]
    which concludes the proof in this case. Now, consider the case $K^p(P) < 3p(t)$. Since $P$ is $p$-time samplable, by \cref{def:samplable}, there exists a program $\pi^\star$ with $|\pi^\star| = K^p(P)$ such that $U^{p(t)}(\pi^\star, 1^t \# s) \sim P_{1:t}$ when $s \sim \mathsf{Unif}(\{0,1\}^{p(t)})$. Recall that the program $\tilde \pi$ and the seed $\tilde s$ are drawn independently in the construction of $\tilde \mu^t$, and since $|\pi^\star| < 3p(t)$, we have
    \[
    \P[\tilde \pi = \pi^\star] \gtrsim \frac{1}{|\pi^\star|^2 \cdot 2^{|\pi^\star|}} = \frac{1}{(K^p(P))^2 \cdot 2^{K^p(P)}}.
    \]
    Conditioned on the event $\{\tilde \pi = \pi^\star\}$, the output of the construction is exactly $U^{p(t)}(\pi^\star, 1^t \# \tilde s)$ with $\tilde s \sim \mathsf{Unif}(\{0,1\}^{p(t)})$, which by the above is distributed as $P_{1:t}$. Consequently, for any $\langle \seq x \rangle \in \supp(P_{1:t})$ we have $\tilde \mu^t(\langle \seq x\rangle) \ge \P[\tilde \pi = \pi^\star] \cdot P_{1:t}(\langle \seq x\rangle)$, and since $\mu^t \ge \frac{1}{2}\tilde \mu^t$,
    \[
    \log \frac{P_{1:t}(\langle \seq x \rangle)}{\mu^t(\langle \seq x \rangle)} \le \log \frac{2}{\P[\tilde \pi = \pi^\star]} \le K^{p}(P) + 2 \log K^p(P) + O(1),
    \]
    as desired. Moreover, it is easy to see that $\mu^t$ is only supported on valid encodings of $\cX^t.$
    This concludes the proof.
\end{proof}

\SamplableNature*
\begin{proof}
    We run~\cref{algo:multi-cover} with number of layers $L = 1$, using $\mu^T$ as the simulator, and the total number of samples $N_1 = T$. Then, from~\cref{thm:universal-dist}, it takes time $O(T \cdot p'(T))$ to draw $T$ samples from $\mu^T$, and it takes time $T^{O(d)}$ to construct a cover $\cG_1$ given access to a realizable ERM oracle w.r.t. $\cF$. The rest of the proof is concluded similarly to~\cref{thm:stat-dist-trans}. By~\cref{lemma:cover-analysis}, we have:
    \begin{equation}
    \label{eq:kolmogorov-approx-err}
    \En_{\seq Z} \En_{\seq X \sim \mu^T} \min_{g \in \cG_1} \norm{f-g}_{\seq X} \lesssim \frac{d}{N}.
    \end{equation}
    Letting $\eta = 1$ be the learning rate, we then have the following regret guarantee from Corollary 2.3 in~\cite{cesa2006prediction}:
    \begin{align*}
    \reg &\lesssim \log |\cG_1| + T \cdot \min_{g \in \cG_1} \norm{f-g}_{\seq X} \\
    &\lesssim d \log T + T \cdot \min_{g \in \cG_1} \norm{f-g}_{\seq X}.
    \end{align*}
    Thus,
    \[
        \En_{\seq X \sim \dist^\star_{1:T}} \reg \lesssim d \log T + T \cdot \En_{\seq X \sim \dist^\star_{1:T}} \min_{g \in \cG_1} \norm{f-g}_{\seq X}.
    \]
    \cref{thm:universal-dist} tells us that log density ratio between $\dist^\star$ and $\mu$ can be upper bounded uniformly:
    \[
    \log \frac{\mathrm{d}\dist^\star_{1:T}}{\mathrm{d}\mu^T} \le K^p(\dist^\star) + 2\log K^p(\dist^\star) + O(1).
    \]
    Let
    \[
    C_{\dist^\star}:= 2^{K^p(\dist^\star) + 2\log K^p(\dist^\star)}.
    \]
    Then, using~\cref{eq:kolmogorov-approx-err} and a change of measure from $\mu^T$ to $\dist^\star_{1:T}$, we have
    \begin{align*}
        \En_{\seq X \sim \dist^\star_{1:T}} \reg &\lesssim d \log T + C_{\dist^\star} \cdot T \cdot \En_{\seq X \sim \mu^T} \min_{g \in \cG_1} \norm{f-g}_{\seq X} \\
        &\lesssim d \log T + C_{\dist^\star} \cdot T \cdot \frac{d}{N} \\
        &= d \log T + d \cdot C_{\dist^\star},
    \end{align*}
    as desired. Note that it takes time $O(|\cG|) \le \poly(T^d)$ per round to run the exponential weights update. Thus, the algorithm runs in time $\poly(T^d, p(T))$. This concludes the proof.
\end{proof}

\subsection{Proofs from~\texorpdfstring{\cref{sec:lds}}{Section \getrefnumber{sec:lds}}}

Throughout this subsection, we use the notation introduced in~\cref{sec:lds}. We begin by analyzing the worst-case KL divergence between our simulator and the class of LDS.

\begin{lemma}\label{lemma:lds-compressor}
    Let $\dist$ be as defined in~\cref{eq:universal-dist-lds}. Then, for any $A^\star \in \Theta$ with $\rho(A^\star) \le 1$ and $X_0 \in \ball(0, R)$, we have
    \[
    \KL{\dist^{A^\star} (\cdot \mid X_0)}{\dist(\cdot \mid X_0)} \le O\left(n^3 \log\left(nBT(R/\sigma + 1)\right)\right).
    \]
\end{lemma}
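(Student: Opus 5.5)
The plan is the standard mixture-redundancy argument: compare $\dist(\cdot\mid X_0)$ with the prior mass near $A^\star$, using a Gaussian KL bound that holds uniformly for $A$ in a small neighborhood of $A^\star$. By convexity of $-\log$, or the elementary inequality for mixtures, for any measurable set $S\subset\bbR^{n\times n}$,
\[
\KL{\dist^{A^\star}}{\dist} \le \log\frac{1}{\pi(S)} + \frac{1}{\pi(S)}\int_S \KL{\dist^{A^\star}}{\dist^{A}}\,\pi(\mathrm{d}A).
\]
Indeed, $\dist \ge \pi(S)\,\bar\dist_S$, where $\bar\dist_S$ is the normalized mixture over $S$. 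Moreover $\KL{\dist^{A^\star}}{\bar\dist_S}$ is at most the average of the KLs, by convexity of KL in its second argument. I will take $S=\{A : \|A-A^\star\|_F\le \delta\}$, a Frobenius ball of radius $\delta$ to be tuned.

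\textbf{Step 1 (KL between two LDS laws).} By the chain rule, with conditionals $\cN(AX_{t-1},\sigma^2 I)$,
\[
\KL{\dist^{A^\star}}{\dist^{A}} = \frac{1}{2\sigma^2}\sum_{t=1}^{T}\En_{A^\star}\|(A^\star-A)X_{t-1}\|_2^2 \le \frac{\|A-A^\star\|_F^2}{2\sigma^2}\sum_{t}\En\|X_{t-1}\|_2^2 .
\]

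\textbf{Step 2 (second-moment growth under marginal stability).} This is the step I expect to be the main obstacle. We have
\[
X_t = (A^\star)^t X_0 + \sum_{s<t}(A^\star)^{t-1-s}\eta_s ,
\]
so $\En\|X_t\|^2 \le \|(A^\star)^t\|_{op}^2 R^2 + \sigma^2 n \sum_{k<t}\|(A^\star)^k\|_{op}^2$. Since $\rho(A^\star)\le1$ but $A^\star$ need not be normal, $\|(A^\star)^k\|_{op}$ can grow polynomially because of Jordan blocks.

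The bound I would aim for is $\|(A^\star)^k\|_{op}\le (c\,nBk)^{n}$. One route is the Jordan/Schur form: write $A^\star = U(D+N)U^*$ with $U$ unitary, $|D_{ii}|\le1$, and $N$ strictly upper triangular with $\|N\|_{op}\le\|A^\star\|_F\le nB$. Expanding $(D+N)^k$, each term contains at most $n-1$ factors of $N$, which gives $\|(A^\star)^k\|\le \sum_{j<n}\binom{k}{j}(nB)^j\le (nBk+1)^{n}$.

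Hence $\sum_t\En\|X_{t-1}\|^2 \le T\,(nBT+1)^{2n}(R^2+n\sigma^2T)$, and therefore
\[
\log\!\Big(\tfrac{1}{\sigma^2}\textstyle\sum_t\En\|X_{t-1}\|^2\Big) \lesssim n\log(nBT) + \log(R/\sigma+1) .
\]

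\textbf{Step 3 (prior mass and tuning).} Choose
\[
\delta^2 = \sigma^2\Big/\textstyle\sum_t\En\|X_{t-1}\|^2 ,
\]
so that the averaged KL term in the opening inequality is $O(1)$. For the prior mass, $\pi=\cN(0,B^2)^{n\times n}$ and $\|A^\star\|_F\le nB$. On $S$ the Gaussian density is at least $(2\pi B^2)^{-n^2/2}e^{-(nB+\delta)^2/(2B^2)}$, and the ball volume is $\delta^{n^2}\pi^{n^2/2}/\Gamma(n^2/2+1)$. Combining these,
\[
\log\frac{1}{\pi(S)} \lesssim n^2\log\frac{nB}{\delta} + n^2 .
\]
We may assume $\delta\le B$ (clip $\delta$ otherwise), which only helps.

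Plugging in the value of $\log(1/\delta)$ from Step 2 gives
\[
\log\frac{1}{\pi(S)} \lesssim n^2\big(n\log(nBT)+\log(R/\sigma+1)\big) = O\!\big(n^3\log(nBT(R/\sigma+1))\big),
\]
which matches the claimed bound. The $n^3$ exponent comes precisely from the $(nBT)^{n}$ transient growth of non-normal marginally stable matrices. If the Jordan-growth bound in Step 2 turns out to be delicate, the fallback is the cruder Schur expansion above, which already yields an $n\log(nBT)$ exponent and suffices.
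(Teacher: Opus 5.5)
Your proposal is correct and follows essentially the same route as the paper. Both arguments use the same four steps. First, the localization bound $\KL{\dist^{A^\star}}{\dist} \le \log(1/\pi(S)) + $ (average or supremum of $\KL{\dist^{A^\star}}{\dist^{A}}$ over $S$). Second, the Gaussian chain-rule identity. Third, a Schur-decomposition bound showing $\norm{(A^\star)^k}_{\mathrm{op}}$ grows polynomially in $k$ with degree $n-1$. Fourth, a choice of radius that makes the KL term $O(1)$ while $\log(1/\pi(S)) \lesssim n^2\left(n\log(nBT) + \log(R/\sigma+1)\right)$. The differences are cosmetic: you use a Frobenius ball instead of the paper's entrywise box (the extra $n^2\log n$ from its volume is absorbed), and you bound the nilpotent part by $\norm{N}_{\mathrm{op}} \le \norm{A^\star}_F \le nB$ rather than $\norm{A^\star}_{\mathrm{op}}+1$; your clipping $\delta \le B$ also correctly handles the degenerate case where $\sum_t \En\norm{X_{t-1}}_2^2$ is tiny.
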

\begin{proof}
Fix $A^\star \in \Theta$ with $\rho(A^\star) \le 1$ and $X_0 \in \ball(0, R)$ arbitrarily, and let $\eps \in (0, B]$. Define the localized neighborhood
\[
\ball_\infty(A^\star, \eps) := \{A \in \bbR^{n \times n} \colon \norm{A - A^\star}_\infty \le \eps\}.
\]
Under the Gaussian prior $\pi = \cN(0, B^2)^{n \times n}$, each coordinate $A_{ij}$ is independent and its density on $[A^\star_{ij} - \eps, A^\star_{ij} + \eps]$ is lower-bounded by $\frac{1}{\sqrt{2\pi} B} \exp\left(-\frac{(|A^\star_{ij}| + \eps)^2}{2B^2}\right)$, so
\[
\pi(\ball_\infty(A^\star, \eps)) \ge \prod_{i,j} \frac{2\eps}{\sqrt{2\pi} B} \exp\left(-\frac{(|A^\star_{ij}| + \eps)^2}{2B^2}\right) = \left(\frac{2\eps}{\sqrt{2\pi} B}\right)^{n^2} \exp\left(-\frac{1}{2B^2}\sum_{i,j}(|A^\star_{ij}| + \eps)^2\right).
\]
Since $A^\star \in [-B, B]^{n\times n}$ and $\eps \le B$, $(|A^\star_{ij}| + \eps)^2 \le 4B^2$, so the exponent is at most $2n^2$. Hence
\[
\log\frac{1}{\pi(\ball_\infty(A^\star, \eps))} \le n^2\log(B/\eps) + O(n^2).
\]
By a standard localization argument (e.g.,~\citep{polyanskiy2025information}),
\[
\KL{\dist^{A^\star}(\cdot \mid X_0)}{\dist(\cdot \mid X_0)} \le n^2\log(B/\eps) + O(n^2) + \sup_{A \in \ball_\infty(A^\star, \eps)} \KL{\dist^{A^\star}(\cdot \mid X_0)}{\dist^{A}(\cdot \mid X_0)}.
\]
For the supremum, since the initial state $X_0$ is fixed under both laws, the chain rule for KL together with the closed form \[\KL{\cN(\mu_1, \sigma^2 I_n)}{\cN(\mu_2, \sigma^2 I_n)} = \norm{\mu_1 - \mu_2}_2^2/(2\sigma^2)\] gives
\begin{equation}\label{eq:lds-chain-rule}
\KL{\dist^{A^\star}(\cdot \mid X_0)}{\dist^{A}(\cdot \mid X_0)} = \frac{1}{2\sigma^2}\sum_{t=1}^T \En_{\dist^{A^\star}(\cdot \mid X_0)}\norm{(A^\star - A) X_{t-1}}_2^2.
\end{equation}
For $A \in \ball_\infty(A^\star, \eps)$ we have $\norm{A^\star - A}_F \le \eps n$, so $\norm{(A^\star - A) X_{t-1}}_2^2 \le \eps^2 n^2 \norm{X_{t-1}}_2^2$, and it remains to bound $\sum_{t=1}^T \En\norm{X_{t-1}}_2^2$. It is well-known that the operator norm of powers of marginally stable matrices grows polynomially~\citep{horn2012matrix,sarkar2019near,simchowitz2018learning}. In particular, we will use the following statement, the proof of which is deferred to~\cref{lemma:matrix-power-growth} for self-containedness.
\[
\norm{(A^\star)^s}_{\mathrm{op}} \le n \cdot (\norm{A^\star}_{\mathrm{op}} + 1)^{n-1} \cdot s^{n-1}.
\]
Now, since $\norm{A^\star}_{\mathrm{op}} \le \norm{A^\star}_F \le Bn$, the above gives
\[
\norm{(A^\star)^s}_{\mathrm{op}} \le n (Bn + 1)^{n-1} s^{n-1} =: C_n \cdot s^{n-1},
\]
where $C_n := n(Bn+1)^{n-1}$ depends only on $B$ and $n$. Therefore $\norm{(A^\star)^s}_F^2 \le n \norm{(A^\star)^s}_{\mathrm{op}}^2 \le n C_n^2 s^{2(n-1)}$. Decomposing $X_t = (A^\star)^t X_0 + \sum_{s=0}^{t-1}(A^\star)^s\eta_{t-1-s}$ and using independence of the $\eta_s$,
\[
\En \norm{X_t}_2^2 = \norm{(A^\star)^t X_0}_2^2 + \sigma^2\sum_{s=0}^{t-1}\norm{(A^\star)^s}_F^2 \le C_n^2 \cdot t^{2(n-1)} R^2 + \sigma^2 n C_n^2 \cdot t^{2n-1}.
\]
Summing $\sum_{t=1}^{T-1} t^{2(n-1)} \le T^{2n-1}$ and $\sum_{t=1}^{T-1} t^{2n-1} \le T^{2n}$, and adding $\norm{X_0}_2^2 \le R^2$, yields
\begin{equation}
\label{eq:lds-second-moment-sum}
\sum_{t=1}^{T} \En\norm{X_{t-1}}_2^2 \le 2 C_n^2 T^{2n}(R^2 + \sigma^2 n).
\end{equation}
Combining~\cref{eq:lds-chain-rule,eq:lds-second-moment-sum}, we get
\[
\sup_{A \in \ball_\infty(A^\star, \eps)} \KL{\dist^{A^\star}(\cdot \mid X_0)}{\dist^{A}(\cdot \mid X_0)} \le \frac{\eps^2 n^2 C_n^2 T^{2n}(R^2 + \sigma^2 n)}{\sigma^2}.
\]
Thus,
\[
\KL{\dist^{A^\star}(\cdot \mid X_0)}{\dist(\cdot \mid X_0)} \le n^2\log(B/\eps) + O(n^2) + \frac{\eps^2 n^2 C_n^2 T^{2n}(R^2 + \sigma^2 n)}{\sigma^2}.
\]
Balancing in the above by choosing $\eps^2 = \sigma^2/(n^2 C_n^2 T^{2n}(R^2 + \sigma^2 n))$ gives:
\[
\KL{\dist^{A^\star}(\cdot \mid X_0)}{\dist(\cdot \mid X_0)} = O\left(n^2 \log\left(\frac{B C_n T^n (R + \sigma\sqrt{n})}{\sigma}\right)\right) = O\left(n^3 \log\left(nBT(R/\sigma + 1)\right)\right),
\] as desired.
\end{proof}

\begin{lemma}\label{lemma:matrix-power-growth}
For any $A \in \bbR^{n \times n}$ with $\rho(A) \le 1$ and any $s \ge 1$,
\[
\norm{A^s}_{\mathrm{op}} \le n \cdot (\norm{A}_{\mathrm{op}} + 1)^{n-1} \cdot s^{n-1}.
\]
\end{lemma}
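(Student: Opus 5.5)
We will use the Schur decomposition, since it is the most direct route to an explicit constant. Write $A = U(D+N)U^*$ with $U$ unitary, $D$ diagonal holding the eigenvalues of $A$, and $N$ strictly upper triangular. Unitary conjugation preserves the operator norm, so
\[
\norm{A^s}_{\mathrm{op}} = \norm{(D+N)^s}_{\mathrm{op}}.
\]
We have $\norm{D}_{\mathrm{op}} = \rho(A) \le 1$. For $N$ we would like $\norm{N}_{\mathrm{op}} \le \norm{D+N}_{\mathrm{op}} + \norm{D}_{\mathrm{op}} \le \norm{A}_{\mathrm{op}} + 1$. This follows from the triangle inequality together with unitary invariance, because $N = (D+N) - D$.

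Next expand $(D+N)^s$ as a sum of words of length $s$ in the letters $D$ and $N$. The key structural fact is that any product of upper triangular matrices containing at least $n$ strictly upper triangular factors vanishes. Each factor of $N$ raises the minimal superdiagonal index, and the factors of $D$ preserve it. So only words with $k \le n-1$ copies of $N$ survive. For a fixed $k$ there are $\binom{s}{k}$ such words. Submultiplicativity bounds each one by $\norm{D}_{\mathrm{op}}^{s-k}\norm{N}_{\mathrm{op}}^k \le (\norm{A}_{\mathrm{op}}+1)^k$. This gives
\[
\norm{A^s}_{\mathrm{op}} \le \sum_{k=0}^{n-1} \binom{s}{k} (\norm{A}_{\mathrm{op}}+1)^k \le \sum_{k=0}^{n-1} s^k (\norm{A}_{\mathrm{op}}+1)^k.
\]
Here we used $\binom{s}{k} \le s^k$, which holds for $s \ge 1$ and also covers $k > s$, where $\binom{s}{k} = 0$.

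To finish, note that $s \ge 1$ and $\norm{A}_{\mathrm{op}}+1 \ge 1$. Hence each of the $n$ terms is at most $s^{n-1}(\norm{A}_{\mathrm{op}}+1)^{n-1}$, and the sum is at most $n (\norm{A}_{\mathrm{op}}+1)^{n-1} s^{n-1}$, as claimed.

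The only step that needs care is the nilpotency claim for mixed words: every product of upper triangular matrices with at least $n$ strictly upper triangular factors is zero. We would prove it by induction. The induction hypothesis is that a product containing $j$ strictly upper triangular factors has $(i,l)$ entry zero whenever $l - i < j$. Because the decomposition is unitary, there is no condition-number blowup, so the rest is routine.
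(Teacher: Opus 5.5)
Your proposal is correct and follows essentially the same argument as the paper: a Schur decomposition $A = Q(D+N)Q^*$, the bounds $\norm{D}_{\mathrm{op}} \le 1$ and $\norm{N}_{\mathrm{op}} \le \norm{A}_{\mathrm{op}} + 1$, and an expansion of $(D+N)^s$ in which words with at least $n$ factors of $N$ vanish, together with the count $\binom{s}{k} \le s^{n-1}$. Your explicit induction for the vanishing of mixed words and your remark about the case $k > s$ supply details that the paper only asserts.
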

\begin{proof}
We consider the Schur decomposition of $A$, namely, $A = QUQ^*$, where $Q \in \bbC^{n \times n}$ is unitary and $U \in \bbC^{n \times n}$ is upper triangular with the eigenvalues of $A$ on its diagonal. Since $Q$ is unitary, $\norm{A^s}_{\mathrm{op}} = \norm{U^s}_{\mathrm{op}}$ and $\norm{U}_{\mathrm{op}} = \norm{A}_{\mathrm{op}}$. Decompose $U = D + N$ where $D$ is the diagonal of $U$ and $N$ is strictly upper triangular. Since $\rho(A) \le 1$, the diagonal entries of $D$ have magnitude at most $1$, so $\norm{D}_{\mathrm{op}} \le 1$ and $\norm{N}_{\mathrm{op}} \le \norm{U}_{\mathrm{op}} + \norm{D}_{\mathrm{op}} \le \norm{A}_{\mathrm{op}} + 1$.

Now, we expand $U^s = (D+N)^s$ as follows:
\[
U^s = \sum_{j=0}^{s} \sum_{\substack{a_0, \ldots, a_j \ge 0 \\ a_0 + \cdots + a_j = s - j}} D^{a_0} N D^{a_1} N \cdots N D^{a_j}.
\]
We observe that any term above corresponding to $j \ge n$ vanishes. Indeed, since $N$ is strictly upper triangular, and $D$ is diagonal, the product of $\ge n$ matrices $N$ and any number of matrices $D$ in any order is equal to zero. Thus, the above sum can be rewritten in the following form:
\[
U^s = \sum_{j=0}^{n-1} \sum_{\substack{a_0, \ldots, a_j \ge 0 \\ a_0 + \cdots + a_j = s - j}} D^{a_0} N D^{a_1} N \cdots N D^{a_j}.
\]
Now, each summand above has operator norm at most $\norm{D}_{\mathrm{op}}^{s-j} \norm{N}_{\mathrm{op}}^j \le (\norm{A}_{\mathrm{op}} + 1)^j$. The number of compositions $(a_0, \ldots, a_j)$ with sum $s - j$ is $\binom{s}{j}$, and $\binom{s}{j} \le s^j \le s^{n-1}$ for $j \le n - 1$. Therefore
\[
\norm{U^s}_{\mathrm{op}} \le \sum_{j=0}^{n-1} \binom{s}{j}(\norm{A}_{\mathrm{op}} + 1)^j \le s^{n-1} \sum_{j=0}^{n-1} (\norm{A}_{\mathrm{op}} + 1)^j \le n \cdot (\norm{A}_{\mathrm{op}} + 1)^{n-1} \cdot s^{n-1},
\]
which concludes the proof.
\end{proof}

\begin{proposition}
    \label{prop:lds-cond-sampler}
    For any $X_0, \seq X_{1:t}$, exact conditional sampling from $\dist(\cdot \mid \seq X_{1:t}, X_0)$, where $\dist$ is as in~\cref{lemma:lds-compressor}, can be implemented in $\poly(n, T)$ time per sample.
\end{proposition}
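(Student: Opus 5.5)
The plan is to use Bayesian conjugacy: under the Gaussian prior $\pi = \cN(0,B^2)^{n\times n}$ and Gaussian transitions, the posterior over $A$ given $X_0,\seq X_{1:t}$ is again Gaussian. Conditional sampling from the mixture $\dist(\cdot\mid \seq X_{1:t},X_0)$ then reduces to two steps. First draw $A$ from the posterior $\pi(\cdot\mid X_0,\seq X_{1:t})$. Then simulate $X_{t+1},\ldots,X_T$ forward under $\dist^{A}$ from the last observed state $X_t$. This is exact because, by definition of the mixture,
\[
\dist(\seq x_{t+1:T}\mid \seq x_{1:t},X_0)=\En_{A\sim\pi(\cdot\mid X_0,\seq x_{1:t})}\dist^{A}(\seq x_{t+1:T}\mid \seq x_{1:t},X_0),
\]
and by the Markov property $\dist^{A}(\cdot\mid \seq x_{1:t},X_0)$ depends only on $x_t$.

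The key step is computing the posterior explicitly. The likelihood is
\[
\prod_{s=1}^t \exp\bigl(-\norm{X_s-AX_{s-1}}_2^2/(2\sigma^2)\bigr),
\]
which factorizes across the rows $a_i^\top$ of $A$ as
\[
\prod_{i}\exp\Bigl(-\tfrac{1}{2\sigma^2}\textstyle\sum_s (X_{s,i}-a_i^\top X_{s-1})^2\Bigr).
\]
Since the prior is also i.i.d.\ across rows, the rows are independent a posteriori. Each row satisfies $a_i\sim\cN(\mu_i,\Sigma)$ with
\[
\Sigma=\Bigl(B^{-2}I+\sigma^{-2}\textstyle\sum_{s=1}^t X_{s-1}X_{s-1}^\top\Bigr)^{-1},\qquad \mu_i=\sigma^{-2}\Sigma\textstyle\sum_{s}X_{s,i}X_{s-1}.
\]
This is standard Bayesian linear regression with a ridge-type precision. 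The precision matrix is positive definite (at least $B^{-2}I$), so it is invertible.

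For the complexity count: forming $\sum_s X_{s-1}X_{s-1}^\top$ costs $O(tn^2)$. Inverting it and taking a Cholesky factor costs $O(n^3)$. The $n$ means cost $O(tn^2+n^3)$. A posterior draw needs $n^2$ standard Gaussians and $n$ matrix-vector products with the Cholesky factor. Forward simulation costs $O((T-t)n^2)$. The total is $\poly(n,T)$ arithmetic operations.

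The only mildly delicate point is the computational model: exact Gaussian sampling and exact linear algebra over the reals. I would adopt the real-RAM convention that standard Gaussian draws and arithmetic operations take unit cost, which is implicit in the LDS setting itself since nature samples Gaussian noise. If bit complexity were required, numerical stability would be needed, but the precision matrix is well-conditioned from below by $B^{-2}$, so this would be routine. I would also note that $\dist$ is defined in \cref{eq:universal-dist-lds} without the marginal-stability restriction on $A$. This is what makes the posterior exactly Gaussian, with no truncation to $\{\rho(A)\le1\}$ needed.
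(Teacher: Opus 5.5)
Your proposal is correct and follows essentially the same route as the paper. Both write $\dist(\cdot \mid \seq X_{1:t}, X_0)$ as a posterior draw of $A$ followed by forward simulation of $\dist^A$, and both observe that the Gaussian prior and Gaussian transitions make the log-posterior quadratic in $A$, so the posterior is an explicit Gaussian. Your row-wise posterior mean and covariance formulas and the operation count simply fill in the step the paper calls ``trivial.''
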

\begin{proof}
We want to sample $\seq X_{t+1:T} \sim \dist(\cdot \mid \seq X_{1:t}, X_0)$. Recall that $\dist$ is defined as a mixture over the prior $\pi$:
\[
\dist(\cdot \mid X_0) = \En_{A \sim \pi} \dist^A (\cdot \mid X_0)
\]
Then,
\[
\dist(\cdot \mid \seq X_{1:t}, X_0) = \En_{A \sim \pi(\cdot \mid X_{1:t})} \dist^A (\cdot \mid \seq X_{1:t}, X_0).
\]
That is, to sample $\seq X_{t+1:T} \sim \dist(\cdot \mid \seq X_{1:t}, X_0)$, it suffices to sample $A$ from the posterior $\pi(\cdot \mid X_{1:t})$, and subsequently sample from $\dist^A (\cdot \mid \seq X_{1:t}, X_0).$ The second part can clearly be done efficiently. Thus, the problem reduces to sampling from the posterior $\pi(\cdot \mid X_{1:t})$.

By the Markov property and the Gaussian transition $\dist^{A}(X_{s+1} \in \cdot \mid X_s = x) = \cN(A x, \sigma^2 I_n)$, we have
\[
\log \pi(A \mid \seq X_{1:t}, X_0) = \log \pi(A) - \frac{1}{2\sigma^2} \sum_{s=0}^{t-1} \norm{X_{s+1} - A X_s}_2^2 + \mathrm{const}.
\]
Both terms on the right-hand side are quadratic in $A$, so the posterior is itself a Gaussian on $\bbR^{n \times n}$ with mean and covariance computable in closed form, and exact sampling is trivial.
\end{proof}

\ThmLDS*
\begin{proof}
We instantiate~\cref{thm:cond-oracle-eff} with the simulator $\dist$ defined in~\cref{sec:lds}. By~\cref{lemma:lds-compressor}, \[\sup_{\substack{A^\star \in \Theta,\, \rho(A^\star) \le 1, \\ X_0 \in \ball(0, R)}}\KL{\dist^{A^\star}(\cdot \mid X_0)}{\dist(\cdot \mid X_0)} = O\left(n^3 \log\left(nBT(R/\sigma + 1)\right)\right),\] and by~\cref{prop:lds-cond-sampler} the conditional sampler runs in $\poly(n, T)$ time per round. Substituting into~\cref{thm:cond-oracle-eff} gives expected regret
\[
\En \reg(\cF, T) \lesssim \sqrt{dT} \left(1 + \sqrt{n^3 \log\left(nBT(R/\sigma + 1)\right)} \right)
\]
in $\poly(n, T)$ time per round, as claimed.
\end{proof}

\subsection{Proofs from~\texorpdfstring{\cref{sec:glauber}}{Section \getrefnumber{sec:glauber}}}

Throughout this subsection, we use the notation introduced in~\cref{sec:glauber}.
\begin{lemma}
    \label{lemma:glauber-compressor}
    For any $T \ge 2$, $\theta^\star \in \Theta$, and $X_0 \in \{0,1\}^n$, we have
    \[
    \KL{\dist^{\theta^\star}(\cdot \mid X_0)}{\dist(\cdot \mid X_0)} \lesssim p \log(BTn).
    \]
\end{lemma}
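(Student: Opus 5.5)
We follow the template of \cref{lemma:lds-compressor}: a localization (``mixture vs.\ component'') argument. Fix $\theta^\star \in \Theta$, $X_0$, and $\eps \in (0, B]$. Let $\ball_\infty(\theta^\star,\eps) := \{\theta \in \Theta \colon \norm{\theta-\theta^\star}_\infty \le \eps\}$. Since $\pi$ is uniform on $\Theta = [-B,B]^p$, each coordinate interval $[\theta^\star_i-\eps,\theta^\star_i+\eps]\cap[-B,B]$ has length at least $\eps$. Therefore
\[
\log\frac{1}{\pi(\ball_\infty(\theta^\star,\eps))} \le p\log(2B/\eps).
\]
The standard localization bound, which follows from $\dist \ge \pi(\ball)\cdot \En_{\theta\sim\pi|_{\ball}}\dist^\theta$ together with convexity of KL in its second argument, then gives
\[
\KL{\dist^{\theta^\star}(\cdot\mid X_0)}{\dist(\cdot\mid X_0)} \le p\log(2B/\eps) + \sup_{\theta\in\ball_\infty(\theta^\star,\eps)}\KL{\dist^{\theta^\star}(\cdot\mid X_0)}{\dist^{\theta}(\cdot\mid X_0)}.
\]

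Next we bound the supremum via the chain rule. The site choices $I_t\sim\Unif([n])$ have the same law under both parameters, so they contribute nothing. Each step therefore contributes the expected binary KL
\[
\KL{\Ber(\sigma(\glfield^{\theta^\star}_{I_t}(X_{t-1})))}{\Ber(\sigma(\glfield^{\theta}_{I_t}(X_{t-1})))}.
\]
The local fields satisfy $|\glfield^{\theta^\star}_i(x)-\glfield^{\theta}_i(x)| \le \sum_{j\ne i}|\glmat^\star_{ij}-\glmat_{ij}| + |\glvec^\star_i-\glvec_i| \le n\eps$ for $x\in\{0,1\}^n$. For logistic Bernoullis, the KL is the Bregman divergence of the log-partition function $u\mapsto\log(1+e^u)$, whose second derivative is at most $1/4$. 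Hence the per-step KL is at most $(n\eps)^2/8$, and summing over $T$ steps gives a supremum of at most $Tn^2\eps^2/8$. Unlike the LDS case, this step needs no control of state growth, because the states are bounded in $\{0,1\}^n$.

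Finally we choose $\eps = \min\{B, 1/(n\sqrt T)\}$. This makes the second term $O(1)$ and the first term at most $p\log(2Bn\sqrt T)+p\log 2 \lesssim p\log(BTn)$, using $B\ge1$ and $T\ge2$ to absorb constants. This yields the claim.

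The only mildly delicate point is the localization inequality, since the neighborhood must be intersected with $\Theta$ where the prior lives. The coordinate-wise length bound of at least $\eps$ (not $2\eps$) handles boundary points, at the cost of an extra additive $p\log 2$ that is absorbed into the final bound. The remaining steps are routine.
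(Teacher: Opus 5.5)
Your proof is correct, but it bounds the localization error differently from the paper.

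The paper bounds the KL by $\log\bigl(1/\pi(\ball_\infty(\theta^\star,\eps)\cap\Theta)\bigr)$ plus the \emph{worst-case} log-likelihood ratio $\log\frac{\dist^{\theta^\star}(\seq I,\seq X\mid X_0)}{\dist^{\theta}(\seq I,\seq X\mid X_0)}$, taken over $\theta$ in this neighborhood and over all trajectories. It bounds each per-step log-ratio by $\lesssim n\eps$, using only that $\lambda\mapsto\log(1+e^{\lambda})$ is $1$-Lipschitz. Because this error is linear in $\eps$, the paper must take $\eps = B\wedge p/(Tn)$ to balance $p\log(B/\eps)+Tn\eps$.

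You instead use the averaged form of localization (convexity of KL in its second argument), the chain rule, and the curvature bound $\sigma(1-\sigma)\le 1/4$. Each step then costs a Bregman divergence of at most $(n\eps)^2/8$, which is quadratic in $\eps$. This lets you work at the coarser scale $\eps\asymp 1/(n\sqrt{T})$. Your argument is essentially the template of the paper's LDS proof (\cref{lemma:lds-compressor}) rather than its Glauber proof.

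What each route buys:
\begin{itemize}
\item The paper's route actually yields a pointwise bound $\log\frac{\mathrm{d}\dist^{\theta^\star}}{\mathrm{d}\dist}\lesssim p\log(BTn)$ on every trajectory. This is stronger than a KL bound (the same type of guarantee as \cref{thm:universal-dist}), and it needs only first-order smoothness.
\item Your route controls only the KL, which is all \cref{thm:cond-oracle-eff} needs. In exchange, the quadratic dependence gives a sharper dependence on $T$: choosing $\eps^2\asymp p/(Tn^2)$ in your bound gives $\tfrac{p}{2}\log T+O(p\log B+p)$, the classical parametric redundancy rate. The paper's linear bound has leading term $p\log T$.
\end{itemize}

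Both arguments also need $BTn$ bounded below to absorb additive $O(p)$ terms into $p\log(BTn)$. For example, $B\ge 1$ suffices; it is assumed in \cref{thm:glauber} but not in the lemma itself.
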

\begin{proof}
Fix $\theta^\star \in \Theta$ and $X_0 \in \{0,1\}^n$ arbitrarily, and let $\varepsilon \in (0, B]$. By the Markov property, the trajectory likelihood factors as
\[
\dist^{\theta}(\seq I, \seq X \mid X_0) = \prod_{t=1}^T \dist^{\theta}(I_t, X_t \mid X_{t-1}).
\]
Define the localized neighborhood $\ball_\infty(\theta^\star, \varepsilon) := \{\theta \in \bbR^p \colon \norm{\theta - \theta^\star}_\infty \le \varepsilon\}$, so $\pi(\ball_\infty(\theta^\star, \varepsilon) \cap \Theta) \ge (\varepsilon/2B)^p$. By a standard localization argument (e.g.,~\citep{polyanskiy2025information}),
\begin{align*}
\KL{\dist^{\theta^\star}(\cdot \mid X_0)}{\dist(\cdot \mid X_0)}
&\le \log\frac{1}{\pi(\ball_\infty(\theta^\star, \varepsilon) \cap \Theta)} + \sup_{\substack{\theta \in \ball_\infty(\theta^\star, \varepsilon) \cap \Theta, \\ (\seq I, \seq X)}}\log\frac{\dist^{\theta^\star}(\seq I, \seq X \mid X_0)}{\dist^{\theta}(\seq I, \seq X \mid X_0)} \\
&\le p\log(2B/\varepsilon) + \sup_{\substack{\theta \in \ball_\infty(\theta^\star, \varepsilon) \cap \Theta, \\ (\seq I, \seq X)}} \log\frac{\dist^{\theta^\star}(\seq I, \seq X \mid X_0)}{\dist^{\theta}(\seq I, \seq X \mid X_0)}.
\end{align*}
For the last term, by the Markov factorization and additivity of $\log$,
\[
\log\frac{\dist^{\theta^\star}(\seq I, \seq X \mid X_0)}{\dist^{\theta}(\seq I, \seq X \mid X_0)} = \sum_{t=1}^T \log\frac{\dist^{\theta^\star}(I_t, X_t \mid X_{t-1})}{\dist^{\theta}(I_t, X_t \mid X_{t-1})}.
\]
The one-step kernel can be written as
\[
\dist^{\theta}(I_t, X_t \mid X_{t-1}) = \frac{1}{n} \cdot \sigma(\glfield_{I_t}^{\theta}(X_{t-1}))^{X_{t, I_t}}\left(1 - \sigma(\glfield_{I_t}^{\theta}(X_{t-1}))\right)^{1 - X_{t, I_t}} \cdot \ind\{X_{t, j} = X_{t-1, j}\ \forall j \neq I_t\}.
\]
Then, after some algebra, the per-step log-ratio can be written as:
\begin{multline*}
\log\frac{\dist^{\theta^\star}(I_t, X_t \mid X_{t-1})}{\dist^{\theta}(I_t, X_t \mid X_{t-1})} = X_{t, I_t}\left(\glfield_{I_t}^{\theta^\star}(X_{t-1}) - \glfield_{I_t}^{\theta}(X_{t-1})\right) \\
- \log\left(1 + \exp(\glfield_{I_t}^{\theta^\star}(X_{t-1}))\right) + \log\left(1 + \exp(\glfield_{I_t}^{\theta}(X_{t-1}))\right).
\end{multline*}
Summing over $t$ and bounding by the supremum over $(i, x', x)$,
\[
\log\frac{\dist^{\theta^\star}(\seq I, \seq X \mid X_0)}{\dist^{\theta}(\seq I, \seq X \mid X_0)} \le \sum_{t=1}^T \sup_{(i, x', x)}\left[x'_{i}(\glfield_{i}^{\theta^\star}(x) - \glfield_{i}^{\theta}(x)) - \log(1 + \exp(\glfield_{i}^{\theta^\star}(x))) + \log(1 + \exp(\glfield_{i}^{\theta}(x)))\right].
\]
Since $\lambda \mapsto \log(1 + \exp(\lambda))$ is $1$-Lipschitz, $x'_i \in \{0, 1\}$, and $|\glfield_{i}^{\theta^\star}(x) - \glfield_{i}^{\theta}(x)| \le n\norm{\theta^\star - \theta}_\infty \le n\varepsilon$ for $\theta \in \ball_\infty(\theta^\star, \varepsilon)$, each per-step term is $\lesssim n\varepsilon$, so the sum is $\lesssim Tn\varepsilon$. Combining,
\[
\KL{\dist^{\theta^\star}(\cdot \mid X_0)}{\dist(\cdot \mid X_0)} \lesssim p\log(B/\varepsilon) + Tn\varepsilon.
\]
We now choose $\varepsilon = B \land p/(Tn)$. If $p/(Tn) \le B$, then $\varepsilon = p/(Tn)$ and the bound becomes $p\log(BTn/p) + p \lesssim p\log(BTn)$. Otherwise $\varepsilon = B$, in which case the first term vanishes and $Tn\varepsilon = TnB \le p \lesssim p\log(BTn)$ (using $p/(Tn) > B$). In either case we obtain the desired bound.
\end{proof}

\begin{proposition}
    \label{prop:glauber-cond-sampler}
    For any $X_0 \in \{0,1\}^n$, conditional sampling from $\dist(\cdot \mid \seq I_{1:t}, \seq X_{1:t}, X_0)$, where $\dist$ is as in~\cref{lemma:glauber-compressor}, can be implemented in $\poly(n, T, B, 1/\delta)$ time per sample, to within total-variation error $\delta$.
\end{proposition}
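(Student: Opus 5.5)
The plan mirrors \cref{prop:lds-cond-sampler}. Since $\dist = \En_{\theta\sim\pi}\dist^\theta(\cdot\mid X_0)$, we have
\[
\dist(\cdot \mid \seq I_{1:t}, \seq X_{1:t}, X_0) = \En_{\theta \sim \pi(\cdot \mid \seq I_{1:t}, \seq X_{1:t}, X_0)} \dist^\theta(\cdot \mid \seq I_{1:t}, \seq X_{1:t}, X_0).
\]
By the Markov property, $\dist^\theta(\cdot\mid \seq I_{1:t},\seq X_{1:t},X_0)$ depends only on $X_t$. It is sampled exactly in $O(nT)$ time by running the Glauber update forward for the remaining $T-t$ steps. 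So the task reduces to approximately sampling $\theta$ from the posterior. If the posterior sample is $\delta$-close in TV, then by the data processing inequality the resulting trajectory is also $\delta$-close in TV to the true conditional.

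Next I would write the posterior density on $\Theta$. Since $\pi$ is uniform on the box $[-B,B]^p$, the posterior is proportional to $\ind\{\theta\in\Theta\}\exp(-V(\theta))$, where
\[
V(\theta) = \sum_{s=1}^t \left[\log\left(1+\exp(\glfield^\theta_{I_s}(X_{s-1}))\right) - X_{s,I_s}\,\glfield^\theta_{I_s}(X_{s-1})\right],
\]
using the one-step kernel formula from the proof of \cref{lemma:glauber-compressor}. The factors $1/n$ and the indicators do not depend on $\theta$. Each $\glfield^\theta_i(x)$ is linear in $\theta$, and $\lambda\mapsto\log(1+e^\lambda)-y\lambda$ is convex with second derivative in $[0,1/4]$. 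Hence $V$ is convex and smooth, with gradient computable in $\poly(n,T)$ time and gradient and Hessian norms bounded by $\poly(n,T)$ (since $\norm{\nabla_\theta \glfield_i^\theta(x)}_2\le\sqrt{n}$). The posterior is therefore log-concave, supported on a convex box of diameter $2B\sqrt p$ that contains a ball of radius $B$.

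Finally I would invoke a standard guarantee for sampling constrained log-concave distributions with a membership and gradient (or evaluation) oracle, e.g.\ projected/proximal Langevin or ball-walk/hit-and-run results as surveyed in~\cite{chewi2024logconcave}. These give TV error $\delta$ in time polynomial in the dimension $p$, the smoothness $\poly(n,T)$, the ratio of outer to inner radius $\sqrt p$, $\log(1/\delta)$ or $1/\delta$, and a warm-start parameter. The main obstacle is the warm start. I would start from the uniform prior $\pi$ itself, since the posterior density ratio with respect to $\pi$ is at most $\exp(\sup V-\inf V)$, which may be $\exp(\poly(n,T,B))$ and not a polynomial warm start. The fix is to use a sampler whose running time depends only logarithmically on the warmness, such as hit-and-run with simulated annealing, or Metropolis-adjusted/proximal samplers where the dependence is $\log M$. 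Then $\log M\le \sup V - \inf V \lesssim t(nB+\log 2)$ is polynomial, and the total cost is $\poly(n,T,B,1/\delta)$, as claimed.
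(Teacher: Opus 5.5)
Your proposal is correct and follows essentially the same route as the paper. Both arguments write the conditional law as a mixture over the posterior, sample the remaining trajectory exactly by forward-simulating Glauber dynamics from $X_t$, and note that the logistic posterior is log-concave and smooth (Hessian norm at most $Tn/4$) on the box $[-B,B]^p$; the paper then cites a standard constrained log-concave sampling guarantee. Your one addition is the explicit warm-start bound $\log M \lesssim t(nB+\log 2)$ together with the choice of a sampler whose cost depends logarithmically on warmness, a point the paper leaves implicit in its citation.
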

\begin{proof}
We want to sample $(\seq I_{t+1:T}, \seq X_{t+1:T}) \sim \dist(\cdot \mid \seq I_{1:t}, \seq X_{1:t}, X_0)$. Recall that $\dist$ is defined as a mixture over the prior $\pi$:
\[
\dist(\cdot \mid X_0) = \En_{\theta \sim \pi} \dist^\theta(\cdot \mid X_0).
\]
Then,
\[
\dist(\cdot \mid \seq I_{1:t}, \seq X_{1:t}, X_0) = \En_{\theta \sim \pi(\cdot \mid \seq I_{1:t}, \seq X_{1:t}, X_0)} \dist^\theta(\cdot \mid \seq I_{1:t}, \seq X_{1:t}, X_0).
\]
That is, to sample $(\seq I_{t+1:T}, \seq X_{t+1:T}) \sim \dist(\cdot \mid \seq I_{1:t}, \seq X_{1:t}, X_0)$, it suffices to sample $\theta$ from the posterior $\pi(\cdot \mid \seq I_{1:t}, \seq X_{1:t}, X_0)$, and subsequently sample from $\dist^\theta(\cdot \mid \seq I_{1:t}, \seq X_{1:t}, X_0)$. The second part can clearly be done efficiently by forward-simulating Glauber dynamics from $X_t$ for $T-t$ steps, taking $O((T-t)n)$ time. Thus, the problem reduces to sampling from the posterior $\pi(\cdot \mid \seq I_{1:t}, \seq X_{1:t}, X_0)$.

Writing out the log-density of the posterior on $\Theta$, we have
\[
\log \pi(\theta \mid \seq I_{1:t}, \seq X_{1:t}, X_0) = \sum_{s=1}^{t} X_{s, I_s} \log \sigma(\glfield_{I_s}^\theta(X_{s-1})) + (1 - X_{s, I_s}) \log \left(1 - \sigma(\glfield_{I_s}^\theta(X_{s-1}))\right) + \mathrm{const}.
\]
The local field $\glfield_{i}^\theta(x) = \sum_{j \ne i} \glmat_{ij} x_j + \glvec_i$ is linear in $\theta = (\glmat, \glvec)$, so we can write $\glfield_{I_s}^\theta(X_{s-1}) = \langle \phi_s, \theta\rangle$ where $\phi_s \in \{0,1\}^p$ has at most $n$ non-zero entries (in particular, $\norm{\phi_s}_2 \le \sqrt{n}$). Using $\frac{\mathrm{d}}{\mathrm{d}\lambda}\log\sigma(\lambda) = 1 - \sigma(\lambda)$ and $\frac{\mathrm{d}}{\mathrm{d}\lambda}\log(1-\sigma(\lambda)) = -\sigma(\lambda)$, the gradient and Hessian of the log-posterior are
\[
\nabla_\theta \log \pi = \sum_{s=1}^t \left(X_{s, I_s} - \sigma(\langle \phi_s, \theta\rangle)\right)\phi_s, \qquad \nabla^2_\theta \log \pi = -\sum_{s=1}^t \sigma(\langle \phi_s, \theta\rangle)\left(1 - \sigma(\langle \phi_s, \theta\rangle)\right)\phi_s \phi_s^\top.
\]
Since $\sigma(1 - \sigma) \le 1/4$ and $\phi_s \phi_s^\top \preceq \norm{\phi_s}_2^2 \cdot I_p \le n \cdot I_p$, the triangle inequality for the operator norm gives
\[
\norm{\nabla^2_\theta \log \pi}_{\mathrm{op}} \le \sum_{s = 1}^t \frac{1}{4} \norm{\phi_s}_2^2 \le \frac{Tn}{4}.
\]
In particular, $\nabla^2_\theta \log\pi \preceq 0$, so the log-posterior is concave on $\bbR^p$ and the posterior is log-concave restricted to the bounded convex body $\Theta = [-B,B]^p$ (of Euclidean diameter at most $2B\sqrt{p}$). Thus, results for log-concave sampling with smooth densities~\citep{altschuler2022resolving} imply that we can sample from the above up to error $\delta$ in $\poly(p,B,T,1/\delta)$ time, as desired.
\end{proof}

\ThmGlauber*
\begin{proof}
For $T = 1$ the regret is trivially bounded by $1$, so assume $T \ge 2$. We instantiate~\cref{thm:cond-oracle-eff} with the simulator $\dist$ defined in~\cref{sec:glauber}. By~\cref{lemma:glauber-compressor}, \[\sup_{\theta^\star \in \Theta,\, X_0 \in \{0,1\}^n}\KL{\dist^{\theta^\star}(\cdot \mid X_0)}{\dist(\cdot \mid X_0)} \lesssim p \log(BTn) \lesssim n^2 \log(BTn).\] By~\cref{prop:glauber-cond-sampler}, for any $\delta > 0$ there is a conditional sampler running in $\poly(n, T, B, 1/\delta)$ time per call that samples to within total-variation distance $\delta$. The algorithm of~\cref{thm:cond-oracle-eff} issues $M = \poly(T)$ such calls over the horizon; coupling each approximate draw with an exact one, the two executions agree except with probability at most $M\delta$, so the expected regret changes by at most $2TM\delta$. Taking $\delta = 1/\poly(T)$ small enough makes this $O(1)$ while keeping the per-call running time $\poly(n, T, B)$. Substituting into~\cref{thm:cond-oracle-eff}, we obtain:
\[
\En \reg(\cF, T) \lesssim \sqrt{dT}\left(1 + \sqrt{n^2 \log(BTn)}\right),
\]
in $\poly(n, T, B)$ time per round, as claimed.
\end{proof}

\section{Learning with Adaptive Labels}
\label{sec:adaptive-labels}
\label{app:adaptive-labels}

\subsection{Discussion on~\texorpdfstring{\cref{assumption:f-star}}{Assumption \getrefnumber{assumption:f-star}}}

\label{sec:assumptions}

\cref{assumption:f-star}, intuitively, formalizes the following belief: nature's play may be complex, but it is not adversarial. More concretely, we assume that the data generated by nature may be non-i.i.d., but the ``ground truth'' $f^\star$ does not adversarially change over time or based on actions of the learner. From this point of view~\cref{assumption:f-star} can be seen as a modeling assumption that is suitable for many online learning use cases.

On the other hand,~\cref{assumption:f-star} is motivated by the fact that~\cref{assumption:p-star} alone is not sufficient to recover learnability of VC classes if the labels are allowed to depend on the realized covariates $(X_1,\ldots,X_T)$ in an arbitrary manner. Concretely, even with exact knowledge of the distribution of $(X_1,\ldots,X_T)$, it is possible to replicate the standard Littlestone tree hardness construction for learning thresholds. It suffices to let $\dist^\star$ pick out the uniform path in the Littlestone tree, and let $Y_t = 1$ if $X_{t+1}$ is the right child of $X_t$ and $0$ otherwise. This can be seen to force linear regret for the learner.

At the same time, milder forms of dependence of $Y_t$ on $X_{1:T}$ can be allowed if we have conditional sampling access. In particular, all of our positive results for this setting can be extended to the scenario where $Y_t$ is allowed to depend on $X_{1:t}$ but not on $X_{t+1:T}$.

In this section, we keep~\cref{assumption:p-star} and discuss alternative assumptions about the dependence on labels on realization of the covariates. First, we show that allowing \emph{arbitrary} dependence of labels on covariate realization still suffers from classical impossibility results in online learning literature, and essentially demands bounded Littlestone dimension for sublinear regret.

\subsection{Impossibility of Learning with Arbitrary Dependence}

Suppose that each label $Y_t$ is allowed to depend in an arbitrary fashion on the realization of $\seq X \sim \dist^\star$. Then, any class with infinite Littlestone dimension is not learnable.
\begin{proposition}
    Consider any class $\cF$ with infinite Littlestone dimension, and consider the Littlestone tree of depth $T$. Let $\dist^\star$ be a distribution that samples a uniform path $\seq X$ in the Littlestone tree, and, for each $t \le T-1$, set $Y_t := 1$ iff $X_{t+1}$ is the right child of its parent (and sample $Y_T$ randomly). This strategy of nature forces any learner to have $\ge T/2$ regret.
\end{proposition}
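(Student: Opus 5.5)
We will follow the standard Littlestone-tree hardness argument of \cref{prop:littlestone-hardness}, adapted to labels that are read off from the future of the path. Fix a depth-$T$ Littlestone tree $\tree x$ for $\cF$, which exists because the Littlestone dimension is infinite. Nature draws i.i.d. signs $\varepsilon_1,\ldots,\varepsilon_T \sim \Unif(\{\pm 1\})$ and sets $X_t = \tree x_t(\varepsilon_1,\ldots,\varepsilon_{t-1})$. Under this construction the event ``$X_{t+1}$ is the right child of $X_t$'' coincides with $\{\varepsilon_t = +1\}$, so for $t \le T-1$ we have $Y_t = (\varepsilon_t+1)/2$. We also set $Y_T := (\varepsilon_T+1)/2$, which is an independent fair coin.

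\textbf{Comparator term.} The label sequence is exactly the labeling induced by the root-to-leaf path $(\varepsilon_1,\ldots,\varepsilon_T)$. By the defining property of the Littlestone tree there is an $f \in \cF$ consistent with this path in the sense of \cref{def:littlestone-consistent-function}. Hence $\inf_{f\in\cF}\sum_t \ind\{f(X_t)\neq Y_t\} = 0$ almost surely.

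\textbf{Learner's loss.} At round $t$ the learner has seen $X_{1:t}$ and $Y_{1:t-1}$, and these are functions of $\varepsilon_{1:t-1}$ together with its own internal randomness. The label $Y_t$ is a function of $\varepsilon_t$ alone, and $\varepsilon_t$ is independent of that information. The learner never observes $X_{t+1}$ before predicting, so the dependence of $Y_t$ on the future gives it no advantage. It follows that $\P[\hat Y_t \neq Y_t] = 1/2$ for every $t$, so the expected cumulative loss is $T/2$. Combined with the zero comparator loss, the expected regret is at least $T/2$.

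\textbf{Main subtlety.} The only step needing care is checking that the information available at time $t$ is independent of $\varepsilon_t$. This holds because $X_t = \tree x_t(\varepsilon_{1:t-1})$ does not involve $\varepsilon_t$; the label depends on $X_{t+1}$, which is revealed only after the prediction.
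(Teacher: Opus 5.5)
Your proof is correct and follows the paper's route. The paper simply notes that nature's sampling procedure coincides with the one in \cref{prop:littlestone-hardness} (with $Y_t = (\varepsilon_t+1)/2$), and you unpack exactly that argument: zero comparator loss because the realized path is consistent, and expected loss $1/2$ per round because $\varepsilon_t$ is independent of everything the learner has seen. It is worth adding that any simulator samples the learner draws (conditional or unconditional) use fresh randomness independent of $\varepsilon_t$, so they can be absorbed into the learner's internal randomness.
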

\begin{proof}
    We observe that the sampling procedure used by nature is equivalent to the one in~\cref{prop:littlestone-hardness}. Thus, any learner suffers $\ge T/2$ expected regret.
\end{proof}

We point out that, since we fix $\dist^\star$ in the hardness construction, the learner ``knows'' $\dist^\star$, so~\cref{assumption:p-star} and~\cref{assumption:p-star-cond} both hold.

\subsection{Labels Dependent on the History}

Another form of dependence we consider is allowing the labels to depend on \emph{past} covariates only, and, possibly, the actions of the learner. More formally, letting
\[
\cH_{t} := (\seq X_{1:t}, \seq Y_{1:t-1}, \hat {\seq Y}_{1:t-1})
\]
denote the history up to time $t$, we allow the distribution $Y_t$ to depend on $\cH_t$ but not on future covariates. Equivalently, the conditional law of future covariates must be conditionally independent from the realization of $Y_t$. We formalize this assumption as follows.
\begin{assumption}
    \label{assumption:adaptive-labels}
    For every $t \in [T]$, the distribution $\seq X_{t+1:T} \mid (\cH_t, Y_t, \hat Y_t)$ is given by $\dist^\star(\cdot \mid \seq X_{1:t})$.
\end{assumption}
In this situation, the picture is more complex. It turns out, even with $\dist^\star = \dist$, learning with unconditional samples is \emph{statistically impossible} here. However, with conditional sampling access to $\dist$, our positive results in~\cref{thm:cond-oracle-eff} still hold.

\begin{proposition}
\label[proposition]{prop:adaptive-labels-impossibility}
    Let $\cF$ be the class of thresholds on the $[0,1]$ interval. For any $N \ge 1$, and for any learning algorithm that draws at most $N$ unconditional samples, there exists a choice of a distribution $\dist^\star$, a choice of realizable labels satisfying~\cref{assumption:adaptive-labels} such that
    \[
    \En \left[\sum_{t=1}^T \ind\{f^{\star}(X_t) \neq \hat{y}_t\} - \inf_{f \in\cF} \sum_{t=1}^T  \ind\{f^{\star}(X_t) \neq f(X_t) \} \right] \ge T/8.
    \]
\end{proposition}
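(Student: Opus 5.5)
The plan is to reuse the construction from the proof of \cref{thm:agnostic-impossibility} almost verbatim. The key observation is that there the labeling $f^\star$ is not in $\cF$ globally, yet on every realized sequence the labels coincide with a single threshold. Under \cref{assumption:adaptive-labels} we may let the comparator depend on the realized history, so these labels count as realizable. Unconditional samples from $\dist^\star$ remain useless because the realized trajectory is, with constant probability, never among them.

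\emph{Construction.} Assume WLOG $N \ge 2$ and partition $[0,1]$ into $N$ disjoint intervals $A_1,\ldots,A_N$. In each $A_i$ place a depth-$T$ Littlestone tree for thresholds, and sample a uniform path $\seq x^i$ in it, independently across $i$. Set $\dist^\star := \frac{1}{N}\sum_{i\in[N]}\delta_{\seq x^i}$, and let $f_i$ be a threshold (supported in $A_i$) consistent with $\seq x^i$ in the sense of \cref{def:littlestone-consistent-function}. Define the labels by $Y_t := f_{i}(X_t)$, where $i$ is the unique index with $X_1 \in A_i$.

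\emph{Verifying the assumptions.} First, $Y_t$ is a deterministic function of $\cH_t$; indeed it depends only on $X_1$ and $X_t$. Second, since $\dist^\star$ is a mixture of point masses on sequences in disjoint intervals, $X_1$ already determines the whole trajectory. Hence the conditional law of $\seq X_{t+1:T}$ given $(\cH_t, Y_t, \hat Y_t)$ is the point mass $\dist^\star(\cdot \mid \seq X_{1:t})$, which is exactly \cref{assumption:adaptive-labels}. Third, on the realized sequence $\seq x^{i}$ the threshold $f_{i}\in\cF$ makes zero mistakes, so the labels are realizable and $\inf_{f\in\cF}\sum_t \ind\{f^\star(X_t)\neq f(X_t)\} = 0$. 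Here $f^\star$ denotes the effective labeling $x\mapsto \sum_i \ind\{x\in A_i\} f_i(x)$.

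\emph{Lower bound.} Let $i_\star$ be the realized index. WLOG all $N$ samples are drawn before the game. With probability at least $(1-1/N)^N \ge 1/4$, none of the samples equals $\seq x^{i_\star}$. On this event the samples are independent of the randomness of the path $\seq x^{i_\star}$, and the learner faces exactly the nature strategy of \cref{prop:littlestone-hardness}, so its expected loss is at least $T/2$ while the comparator's loss is $0$. This gives expected regret at least $\frac14\cdot\frac{T}{2} = T/8$.

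The only delicate point, which I expect to be the main thing to check carefully, is the conditioning argument. I need to confirm that, given that the samples miss $\seq x^{i_\star}$, the sign sequence generating $\seq x^{i_\star}$ is still uniform and independent of everything the learner has seen. This holds because the other paths and the index $i_\star$ are independent of that path's own signs. The rest is identical to the proof of \cref{thm:agnostic-impossibility}.
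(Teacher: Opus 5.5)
Your proposal is correct and follows the paper's proof step for step: it reuses the $N$-interval Littlestone construction from \cref{thm:agnostic-impossibility}, sets labels $Y_t := f_{i_\star}(X_t)$ with $i_\star$ determined by $X_1$, and combines the $(1-1/N)^N \ge 1/4$ chance that no sample equals $\seq x^{i_\star}$ with \cref{prop:littlestone-hardness} to obtain $T/8$. Your explicit check of \cref{assumption:adaptive-labels} and of the conditioning step is more careful than the paper's one-line remark that $i_\star$ is $X_1$-measurable (one harmless nit: a threshold on $[0,1]$ is not literally supported in $A_i$, but only its consistency with $\seq x^i$ is used).
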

\begin{proof}
    The hardness construction we give here is the same as the one given in~\cref{thm:agnostic-impossibility}. WLOG, assume $N \ge 2$. Divide the interval $[0,1]$ into $N$ sub-intervals $\{A_{i}\}_{i \in [N]}$, and consider $N$ Littlestone trees in each respective interval. For each $i \in[N]$, let $\rho_i$ be a distribution that samples a random path in the Littlestone tree in interval $A_i$. For each $i \in [N],$ sample a path $\seq x^i \sim \rho_i$. Now, let $\dist^\star$ be defined in the following way. Set
    \[
    \dist^\star := \frac{1}{N} \sum_{i \in [N]} \delta_{\seq x^i}.
    \]
    On each interval $A_i$, $i \in [N]$, let $f_i$ be a threshold function consistent with the path $\seq x^i$ (see~\cref{def:littlestone-consistent-function}). Let $i_\star$ be a random index such that $\seq X = \seq x^{i_\star}$. Note that $i_\star$ is $X_1$-measurable. Then, for each $t \ge 1$, we may set $Y_t := f_{i_\star}(X_t)$. Then, with probability $1$ over the draw from $\seq X \sim \dist^\star$, the labels $\seq Y_{1:T}$ are realizable w.r.t. $\cF$ on $\seq X$, that is, with probability $1$:
    \[
    \inf_{f \in\cF} \sum_{t=1}^T \ind\{f^{\star}(X_t) \neq f(X_t)\} = 0.
    \]
    At the same time, with probability at least
    \[
    \left(1 - \frac{1}{N}\right)^{N} \ge \frac{1}{4},
    \]
    none of the samples the learner draws are equal to $\seq x^{i_\star}$. Let us condition on this event. Then, for any $t \in[T]$ the randomness of $\seq x^{i_\star}$ is independent of the randomness of the learner in round $t$, that is, learner suffers expected regret of $T/2$ by standard Littlestone hardness result (see~\cref{prop:littlestone-hardness}). Thus,
    \[
    \En \left[\sum_{t=1}^T  \ind\{f^{\star}(X_t) \neq \hat{y}_t\} - \inf_{f \in\cF} \sum_{t=1}^T  \ind\{f^{\star}(X_t) \neq f(X_t)\} \right] \ge \frac{1}{4} \cdot \frac{T}{2} = \frac{T}{8},
    \]
    as desired.
\end{proof}

In contrast, all positive results from~\cref{sec:cond} still hold in this setting. Indeed, since the proof of~\cref{thm:cond-oracle-eff} is phrased in terms of relaxations, it permits the dependence of $Y_t$ on $\cH_t$. Thus, the following result still holds in this setting.
\begin{theorem}
\label{thm:cond-oracle-eff-adaptive-labels}
Let $\cF$ be a class of VC dimension $d$. Suppose the labels satisfy~\cref{assumption:adaptive-labels}. Then, there exists an algorithm that, given access to an agnostic ERM oracle over $\cF$ and a conditional sampling oracle $\condoracle_{\dist}$, achieves expected regret bound:
    \[
    \En \reg(\cF,T) \lesssim \sqrt{dT (1 + \KL{\dist^\star}{\dist})},
    \]
    in both realizable and agnostic settings. Moreover, the algorithm runs in $\poly(T)$ time.
\end{theorem}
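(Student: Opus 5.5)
The plan is to rerun the proof of \cref{thm:cond-oracle-eff} verbatim and identify the single place where obliviousness of labels (\cref{assumption:f-star}) was used. That place is the supermartingale argument of \cref{lemma:rel-admissibility}; everything else only uses the form of the learner's prediction rule.

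\textbf{Step 1: the admissibility step.} The strategy in \cref{eq:relaxation-mgf-strategy} is chosen to minimize the worst case over $Y_t\in\{0,1\}$. So the one-step inequality of \cref{lemma:rel-admissibility} already holds for the supremum over labels, and hence for any label $Y_t$ chosen as a function of $(\cH_t,\hat Y_t)$ and auxiliary randomness. The only distributional input is that, given the history, $X_t$ is drawn from $\dist(\cdot\mid\seq X_{1:t-1})$ in the exact case. Under \cref{assumption:adaptive-labels}, the law of $X_t$ given $(\cH_{t-1},Y_{t-1},\hat Y_{t-1})$ is $\dist^\star_t(\cdot\mid \seq X_{1:t-1})$. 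Therefore $\Phi_t$ is a supermartingale with respect to the enlarged filtration generated by $(\cH_t,Y_t,\hat Y_t)$, and not only the filtration of covariates.

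\textbf{Step 2: change of measure.} \cref{lemma:idealized-regret} requires the Donsker--Varadhan step. I would apply it on the full interaction transcript $(\seq X,\seq Y,\hat{\seq Y})$ rather than on $\seq X$ alone. Let $Q^\star$ be the transcript law under nature's covariate process $\dist^\star$, and let $Q$ be the law obtained by replacing each covariate kernel by $\dist(\cdot\mid\seq X_{1:t-1})$ while keeping the same label and learner kernels. By the chain rule, the label and prediction kernels contribute zero to the divergence, because they coincide under $Q^\star$ and $Q$. Hence $\KL{Q^\star}{Q}$ is the expected sum over $t$ of $\KL{\dist^\star_t(\cdot\mid\seq X_{1:t-1})}{\dist_t(\cdot\mid\seq X_{1:t-1})}$. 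This requires that labels not alter the law of future covariates, which is exactly \cref{assumption:adaptive-labels}. Under that assumption the sum is at most $\KL{\dist^\star}{\dist}$, since under $Q^\star$ the marginal law of $\seq X$ is $\dist^\star$.

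With Steps 1 and 2, Donsker--Varadhan gives $\En_{Q^\star}\reg\le \frac1\eta\log\En_Q\Phi_T+\frac1\eta\KL{\dist^\star}{\dist}\le \frac1\eta\log\Phi_0+\frac1\eta\KL{\dist^\star}{\dist}$. The bound on $\Phi_0$ is unchanged, since it involves no labels.

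\textbf{Step 3: the plug-in and epoching arguments.} \cref{lemma:mse-bound,lemma:v-eta} are pointwise in the history, so they are unaffected. The Lipschitz-in-strategy argument of \cref{lemma:reg-mgf-intermediate} compares the idealized and actual algorithms, and here there is a subtlety. When labels depend on $\hat Y$, the two algorithms induce different transcripts. I would handle this by running the supermartingale argument directly with $\hat q_t$, paying an additive $\eta|\hat q_t-q_t|$ per step inside the exponent. Then $\En\exp(\eta\sum_t|\hat q_t-q_t|)$ is controlled by choosing $N$ large enough that each error is $1/T$ with high probability.

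For epoching, the per-epoch conditioning is on $\cH$ at the epoch start, which is legitimate by \cref{assumption:adaptive-labels}. The chain rule for KL across blocks holds as in Step 2. The outer AdaNormalHedge layer is itself valid against adaptive losses.

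\textbf{Main obstacle and realizable case.} The main obstacle is the coupling in Step 3 when labels react to the learner's predictions. The exponent-level accounting above is my first attempt there. The realizable case needs no separate argument: it is a special case of the agnostic bound.
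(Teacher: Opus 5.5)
Your proposal is correct and follows the paper's route. The paper justifies this theorem only by remarking that the relaxation-based proof of \cref{thm:cond-oracle-eff} tolerates history-dependent labels. Your Steps 1--2 (supermartingale on the enlarged filtration, Donsker--Varadhan on the full transcript with the label and learner kernels contributing no KL) are exactly the details behind that remark.

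Two small fixes. First, the identity $\En\reg=\En[\bar L_T-\inf_f L_T(f)]$, which you use implicitly in Step 2, requires $Y_t$ to be chosen without seeing the realized $\hat Y_t$: labels may depend on $\cH_t$ only, and with $Y_t=1-\hat Y_t$ the claimed bound is false. So drop $\hat Y_t$ from your Step 1 phrasing. Second, your Step 3 obstacle needs no exponent-level accounting. The potential built from the idealized $q_t$, evaluated along the \emph{realized} history, stays a supermartingale under arbitrary history-dependent labels. Hence the actual regret exceeds the controlled quantity by at most $\sum_t|\hat q_t-q_t|$ pathwise, and \cref{lemma:mse-bound,lemma:v-eta} bound this in expectation history by history.
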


\end{document}